\documentclass[twoside,11pt]{article}

%

\usepackage{jmlr2e}
\usepackage{shortcuts}
\usepackage{algorithm2e}

\usepackage{bold-extra}

\usepackage{bold-extra}
\usepackage{courier}
 \usepackage{booktabs}

 \usepackage{slantsc}
 \usepackage{lmodern}

\usepackage{scrextend}

\usepackage{multirow}




\usepackage{listings}
\usepackage{xcolor}

\newcommand\minus{%
  \setbox0=\hbox{-}%
  \vcenter{%
    \hrule width\wd0 height \the\fontdimen8\textfont3%
  }%
}

\newbox\flinebox
\newbox\slinebox
\newbox\mlinebox
\def\duplines{\setlength\parindent{0pt}
  \setbox\flinebox\lastbox
  \ifvoid\flinebox\relax
  \else
  \setbox\slinebox\hbox{\copy\flinebox}
  \setbox\mlinebox\hbox{\copy\flinebox}
  \unskip\unpenalty
  {\duplines}
  \color{black!95}\box\flinebox\vspace*{-2.8ex}
  \box\slinebox \fi
}


\newcommand*{\medcap}{\mathbin{\scalebox{1.2}{\ensuremath{\cap}}}}%


\jmlrheading{17}{2016}{1-42}{11/16}{--/--}{C\'{e}dric Malherbe and Nicolas Vayatis}


\ShortHeadings{A ranking approach to global optimization}{Malherbe and Vayatis}

\newtheorem{condition}{Condition}

\newcommand\blankfootnote[1]{
  \let\thefootnote\relax\footnotetext{#1}
  \let\thefootnote\svthefootnote
}

\begin{document}


\title{A Ranking Approach to Global Optimization \vspace{1em}}

\author{\name C\'{e}dric Malherbe \email malherbe@cmla.ens-cachan.fr
  \AND \name Nicolas Vayatis \email vayatis@cmla.ens-cachan.fr
  \AND \addr CMLA - Ecole Normale Sup\'{e}rieure de Cachan\\
  CNRS - Universit\'{e} Paris-Saclay\\
  94 235 Cachan cedex, France}

\editor{-}

\maketitle

\vspace{-0.7em}
\begin{abstract}
  In this paper, we consider the problem of maximizing an
  {unknown} and potentially {nonconvex} function $f$ over a compact and convex
  set $\X \subset \R^d$ using as few observations $f(x)$ as possible.
  We observe that the optimization of the function $f$ essentially relies
  on learning the induced bipartite ranking rule of $f$.
  Based on this idea, we relate global optimization to bipartite
  ranking which allows to address problems with high dimensional input space,
  as well as cases of functions with weak regularity properties.
  The paper introduces novel meta-algorithms for global optimization
  which rely on the choice of any bipartite ranking method.
  Theoretical properties are provided as well as convergence guarantees
  and equivalences between various optimization methods are obtained as a byproduct.
  Eventually, numerical evidence is provided to show that the main
  algorithm of the paper which adapts empirically to the underlying ranking structure
  is efficient in practice and displays competitive results
  with regards to the existing state-of-the-art global optimization methods
  over a wide range of usual benchmarks. 
\end{abstract}

\begin{keywords}
  global optimization, ranking, statistical analysis, convergence rate bounds
\end{keywords}

\section{Introduction}

  In many applications such as complex system design or hyperparameter
  calibration for learning systems, the goal is to optimize some output value
  of a non-explicit function with as few evaluations as possible.
  Indeed, in such contexts, one has access to the function values
  only through numerical evaluations by simulation or cross-validation
  with significant computational cost. Moreover,
  the operational constraints generally impose a sequential exploration
  of the solution space with small samples.
  This generic problem of sequentially optimizing the output
  of an unknown and potentially { non-convex} function
  is often referred to as { global} optimization (\cite{pinter1991global}),
  black-box optimization (\cite{jones1998efficient})
  or derivative-free optimization (\cite{rios2013derivative}).
  In particular, there are several algorithms based on various heuristics
  which have been introduced in order to address complicated
  optimization problems with limited regularity assumptions, such as
  genetic algorithms, Bayesian methods, multi-start algorithms, etc.

  This paper follows the line of the approaches recently considered
  in the machine learning literature
  (\cite{bull2011convergence, munosmono, sergeyev2013introduction}).
  These approaches extend the seminal work on Lipschitz optimization of
  \cite{hansen1992global} and \cite{jones1993lipschitzian}
  and they led to significant relaxations of  the conditions
  required for convergence, {\it e.g.},~only
  the existence of a local {\it smoothness} around the optimum is required
  (\cite{munosmono, grill2015black}).
  More precisely, in the work of \cite{bull2011convergence} and \cite{munosmono},
  specific conditions have been identified to derive
  a finite-time analysis
  of the algorithms. However, these guarantees do not hold whenever the unknown
  function is not assumed to be locally
  smooth  around (one of) its optimum.
  In the present work, we propose to explore concepts
  from ranking theory based on overlaying
  estimated level sets (\cite{clemenccon2008ranking})
  in order to develop global optimization algorithms
  that do not rely on the smoothness of the function.
  The idea behind this approach is simple: even if
  the unknown function presents arbitrary large variations,
  most of the information required to identify its
  optimum may be contained in its induced ranking rule,
  {\it i.e.}~how the level sets of the function are included one in another.
  To exploit this idea, we introduce a novel optimization scheme where
  the complexity of the function is characterized by the underlying
  pairwise ranking which it defines. Our contribution is twofold:
  first, we introduce two novel  global optimization algorithms
  that learn the ranking rule induced by the unknown function
  with a sequential scheme, and second,
  we provide mathematical results in terms of statistical consistency
  and convergence to the optimum. Moreover, the algorithms proposed
  lead to efficient implementation and display good performance
  on the classical benchmarks for global
  optimization as shown at the end of the paper.

  This paper is structured as follows.
  In Section~\ref{sec:setup}, we introduce the framework and the main definitions.
  In Section \ref{sec:rankopt}, we introduce and analyze the {\sc RankOpt} algorithm
  which requires the knowledge of a ranking structure underlying
  the unknown function.
  In Section \ref{sec:adarank}, an adaptive version of the algorithm is presented.
  Companion results which establish the equivalence between
  learning algorithms and optimization procedures are discussed
  in Section \ref{sec:equivalence} as they support implementation choices.
  Finally, the adaptive version of the algorithm is compared
  to other global optimization algorithms
  in Section \ref{sec:xps}.
  All proofs are postponed to the Appendix section.

\vspace{-0.5em}
\section{Global optimization and ranking structure}
\label{sec:setup}

\subsection{Setup and notations}

  \textbf{Setup.} Let $\X \subset \R^d$ be a compact and convex set
  and let $f:\X \rightarrow \R$ be an unknown
  function which is only supposed to admit a global maximum 
  over its domain $\X$.
  The goal in global optimization consists in finding some point 
  \[
    x^{\star} \in \underset{x \in \X}{\arg \max}~ f(x) 
  \]
  with a minimal amount of function evaluations.
  More precisely, we wish to set up a sequential procedure which
  starts by evaluating the function at an initial point $X_1\in \X$
  and then selects at each step $t\geq 1$ an evaluation point $X_{t+1}\in\X$
  which depends on the previous evaluations $\{(X_i, f(X_i)) \}_{i=1}^t$ and recieves
  the evaluation of the unknown function $f(X_{t+1})$ at this point.
  After $n$ iterations, we consider that the algorithm returns the argument
  of the highest evaluation observed so far: 
  \[
  X_{\hatin} \text{~~~where~~~} \hatin \in  \underset{i= 1 \ldots  n}{\arg \max} ~f(X_i). 
  \]
  The analysis provided in the paper considers that the number $n$ of
  evaluation points is not fixed and it is assumed that function
  evaluations are noiseless.\\

  \noindent \textbf{Notations.} For all $x=(x_1, \ldots, x_d ) \in \R^d$,
  we define the standard $\ell_2$-norm as
  $\norm{x}^2_2= \sum_{i=1}^d x_i^2$,
  we denote by $\inner{\cdot,\cdot}$ the corresponding inner product and
  we denote by $B(x,r)=\{x' \in \R^d: \norm{x-x'}_2 \leq r \}$
  the  $\ell_2$-ball centered in $x$ of radius $r\geq 0$ .
  For any bounded set $\X \subset \R^d$, we define
  its inner-radius as
  $\textrm{rad}(\X)=\max \{r>0: \exists x\in \X \textrm{~such that~} B(x,r)\subseteq \X \}$,
  its diameter as
  $\diam{\X}=\max_{(x,x')\in \X^2}\norm{x-x'}_2$
  and we denote by $\mu(\X)$ its volume
  where $\mu$ stands for the Lebesgue measure.
  We denote by $\mathcal{C}^0(\X, \R)$ the set of continuous functions
  defined on $\X$ taking values in $\R$, we denote by
  $\mathcal{P}_{k}(\X, \R)$ the set of (multivariate)
  polynomial functions of degree $k\geq 1$ defined on $\X$,
  and for any function $f: \X \to \R$,
  we denote by $\textrm{Im}(f)=\{f(x) :x \in \X \}$ its image.
  Finally,  we denote by
  $\mathcal{U}(\mathcal{A})$
  the uniform distribution over a bounded measurable domain $\mathcal{A}$,
  we denote by $\indic{\cdot}$ the indicator function taking values in $\{0,1 \}$
  and  we denote by  $\textrm{sgn}(\cdot)$ 
  the standard sign function defined on $\R$ and taking values in $\{-1, 0, 1 \}$.

\subsection{The ranking structure of a real-valued function}
\label{rankingoptimization}

  In this section, we introduce the ranking structure as a characterization of the complexity
  for a general real-valued function to be optimized.
  First, we observe that real-valued functions induce an order relation
  over the input space $\X$, and the underlying ordering induces a
  ranking rule which records pairwise comparisons between evaluation points.

  \begin{definition} {\sc (Induced ranking rule)}
  The ranking rule $r_f: \X \times \X \rightarrow \{-1, 0, 1\}$
  induced by a function $f:\X \rightarrow \R$
  is defined by:
    \begin{equation*}
      r_f(x,x')=
      \begin{cases}
	+1 & \ \ \   \text{if}\ \   \ f(x)>f(x') \\
	\ \ ~\!\! 0 & \   \ \ \text{if}\  \ \ f(x)=f(x')\\
	-1 & \ \ \   \text{if}\ \  \ f(x)<f(x')
      \end{cases}
    \end{equation*}
    for all $(x,x')\in\X^2$.
  \end{definition}

  \noindent The key argument of the paper is that the optimization
  of any weakly regular real-valued function only depends
  on the nested structure of its level sets.
  Hence there is an equivalence class of real-valued functions
  that share the same induced ranking rule as shown by the following proposition.

  \begin{proposition} {\sc (Ranking rule equivalence)}
  \label{prop:rankingequivalence}
    Let $h \in \mathcal{C}^{0}(\X, \R)$ be any continuous function.
    Then, a function $f: \X \rightarrow \R$ shares the
    same induced ranking rule with $h$ {\normalfont(}i.e.~$\forall (x,x')\in\X^2$,
    $r_f(x,x')=r_{h}(x,x')${\normalfont)}
    if and only if
    there exists a strictly increasing, but not necessarily continuous
    function
    $\psi : \R \rightarrow \R$ such that $h = \psi \circ f$.
  \end{proposition}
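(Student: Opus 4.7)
My plan is to prove the two directions of the equivalence separately, with the hard work concentrated in one of them.

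\textbf{Sufficient direction ($\Leftarrow$).} Assume $h = \psi \circ f$ with $\psi : \R \to \R$ strictly increasing. For any pair $(x,x') \in \X^2$, strict monotonicity of $\psi$ gives $f(x) < f(x') \iff \psi(f(x)) < \psi(f(x')) \iff h(x) < h(x')$, and similarly $f(x) = f(x') \iff h(x) = h(x')$. Reading these equivalences off the case definition of $r_f$ and $r_h$ immediately yields $r_f(x,x') = r_h(x,x')$. This direction is essentially an unpacking of definitions and should take only a few lines.

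\textbf{Necessary direction ($\Rightarrow$).} Assume $r_f = r_h$. The natural candidate is to define $\psi$ on $\mathrm{Im}(f)$ by setting $\psi(f(x)) := h(x)$, and then to extend it to $\R$. The partial definition is well posed: if $f(x) = f(x')$ then $r_f(x,x') = 0 = r_h(x,x')$, forcing $h(x) = h(x')$, so the value $\psi(f(x))$ does not depend on the representative $x$. Strict monotonicity of $\psi$ on $\mathrm{Im}(f)$ follows by the same argument applied to the case $r_f = -1$: if $f(x) < f(x')$ then $h(x) < h(x')$, hence $\psi(f(x)) < \psi(f(x'))$. So far everything is bookkeeping from the definitions and from the equivalence of rankings.

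The remaining task—extending this partially defined strictly increasing function to all of $\R$—is where the real work lies. I would split $\R$ into three regions: the two unbounded tails $(-\infty, \inf \mathrm{Im}(f))$ and $(\sup \mathrm{Im}(f), +\infty)$, where any strictly increasing function (e.g.\ a linear shift of $\psi$'s boundary values) can be grafted on, and the complement $[\inf \mathrm{Im}(f), \sup \mathrm{Im}(f)] \setminus \mathrm{Im}(f)$, which is a union of maximal gaps of $\mathrm{Im}(f)$. On each such gap $(a,b)$, I would assign $\psi$ by interpolating strictly monotonically between the one-sided limits $\lim_{y \to a^+, y \in \mathrm{Im}(f)} \psi(y)$ and $\lim_{y \to b^-, y \in \mathrm{Im}(f)} \psi(y)$.

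The main obstacle is precisely this extension step, not the verification of monotonicity on $\mathrm{Im}(f)$. Near accumulation points of $\mathrm{Im}(f)$, the one-sided limits of $\psi$ can pinch together and leave little or no room for strictly increasing values in between, so one must invoke both the compactness of $\X$ and the continuity of $h$ to control these limits and guarantee that a valid strictly monotone extension exists. All the geometric content of the proposition is located here; the rest is routine.
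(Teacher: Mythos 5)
Your ($\Leftarrow$) direction and the first half of your ($\Rightarrow$) direction are fine, and in fact simpler than the paper's own argument: the paper factors both $f$ and $h$ through the auxiliary map $M(x)=\mu(\{x'\in\X : f(x')<f(x)\})$ and needs the continuity of $h$ together with the convexity of $\X$ just to make that detour work, whereas your direct definition $\psi(f(x)):=h(x)$ on $\mathrm{Im}(f)$ is well defined and strictly increasing using nothing but the equality of the two ranking rules.

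The genuine gap is exactly the step you postpone: extending $\psi$ from $\mathrm{Im}(f)$ to all of $\R$. You assert that compactness of $\X$ and continuity of $h$ ``guarantee that a valid strictly monotone extension exists,'' but you give no argument, and no such argument can exist, because the pinching you worry about really does occur. Take $\X=[0,1]$, $h(x)=x$, and $f(x)=x$ for $x\le 1/2$, $f(x)=x+1$ for $x>1/2$: both functions are strictly increasing in $x$, so $r_f=r_h$, and any $\psi$ with $h=\psi\circ f$ must satisfy $\psi(y)=y$ on $[0,1/2]$ and $\psi(y)=y-1$ on $(3/2,2]$; a strictly increasing $\psi$ on $\R$ would then need $\psi(1)>\psi(1/2)=1/2$ and $\psi(1)<y-1$ for every $y\in(3/2,2]$, i.e.\ $\psi(1)\le 1/2$, which is impossible. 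The mechanism is general, and continuity of $h$ works against you rather than for you: since $\X$ is connected, $\mathrm{Im}(h)$ is an interval, so at any gap of $\mathrm{Im}(f)$ the supremum of the $h$-values attached to smaller $f$-values equals the infimum of those attached to larger $f$-values, with the common value attained on exactly one side---precisely the configuration in which no strictly increasing value can be inserted. So your interpolation plan cannot be completed (and, for what it is worth, the paper's own proof hides the same difficulty behind the unproved phrase ``any strictly increasing extension over $\R$''). The statement is only safe if $\psi$ is required to be strictly increasing on $\mathrm{Im}(f)$ rather than on all of $\R$, or if $\mathrm{Im}(f)$ is additionally assumed to be an interval (e.g.\ $f$ continuous); under the first reading your first two paragraphs already constitute a complete proof, with no need for compactness or continuity.
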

  \vspace{-0.7em}
  \begin{figure}[!h]
  \label{pics:sameranking}
    \begin{center}$
	\begin{array}{lcccr}
	\includegraphics[width=35mm, height=18mm]{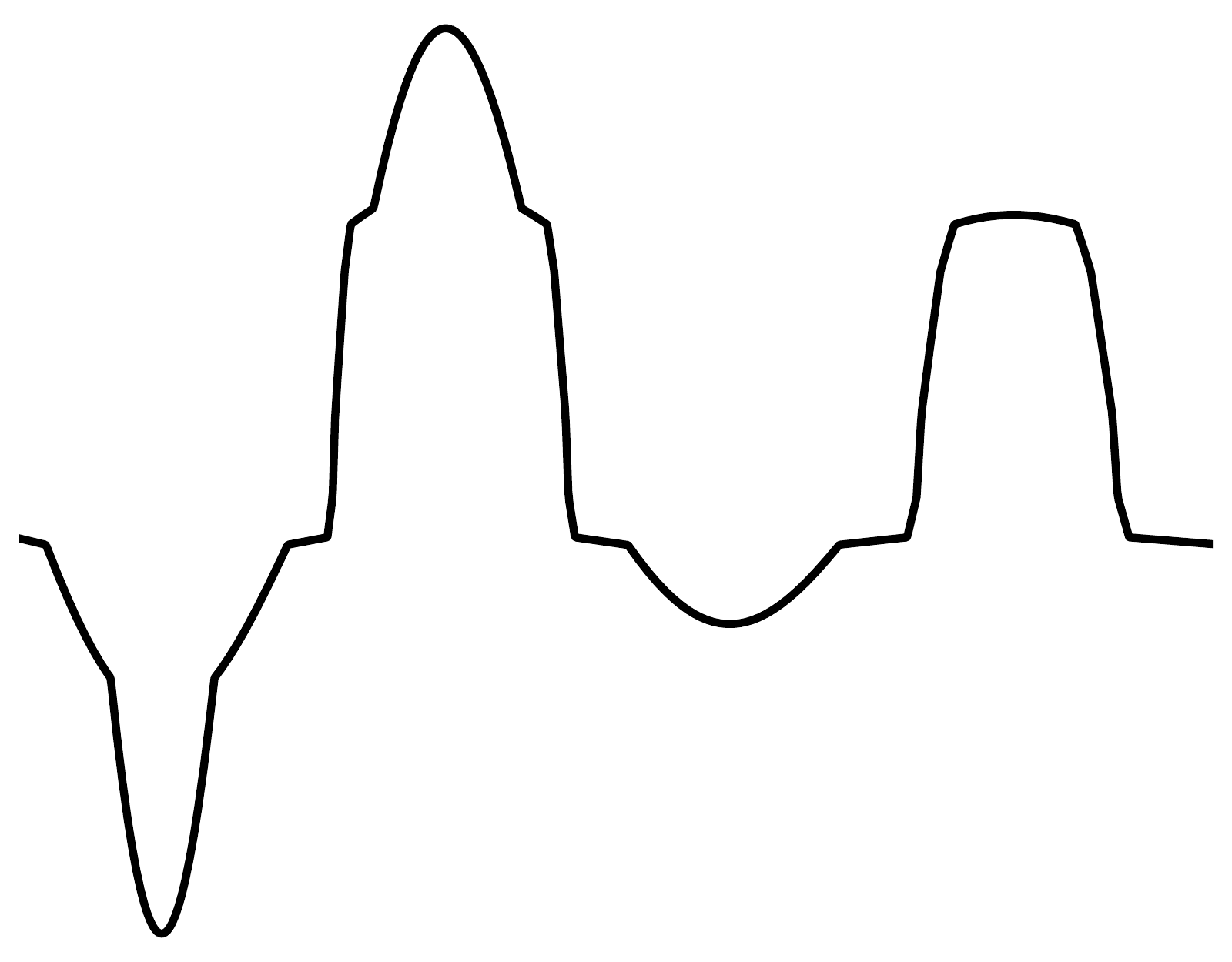}&~~~\!&
	\includegraphics[width=35mm, height=20mm]{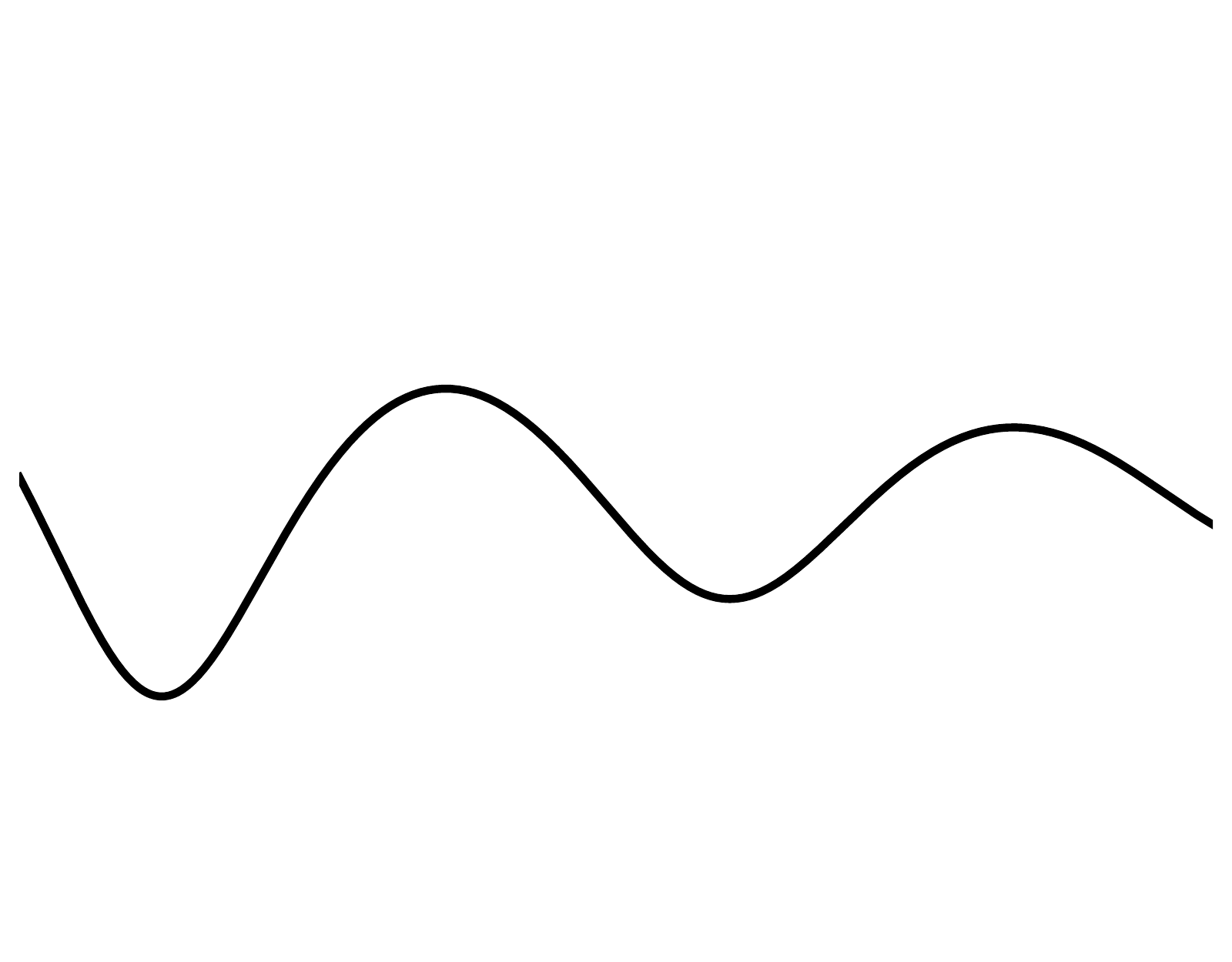}&~~~\!&
	\includegraphics[width=35mm, height=20mm]{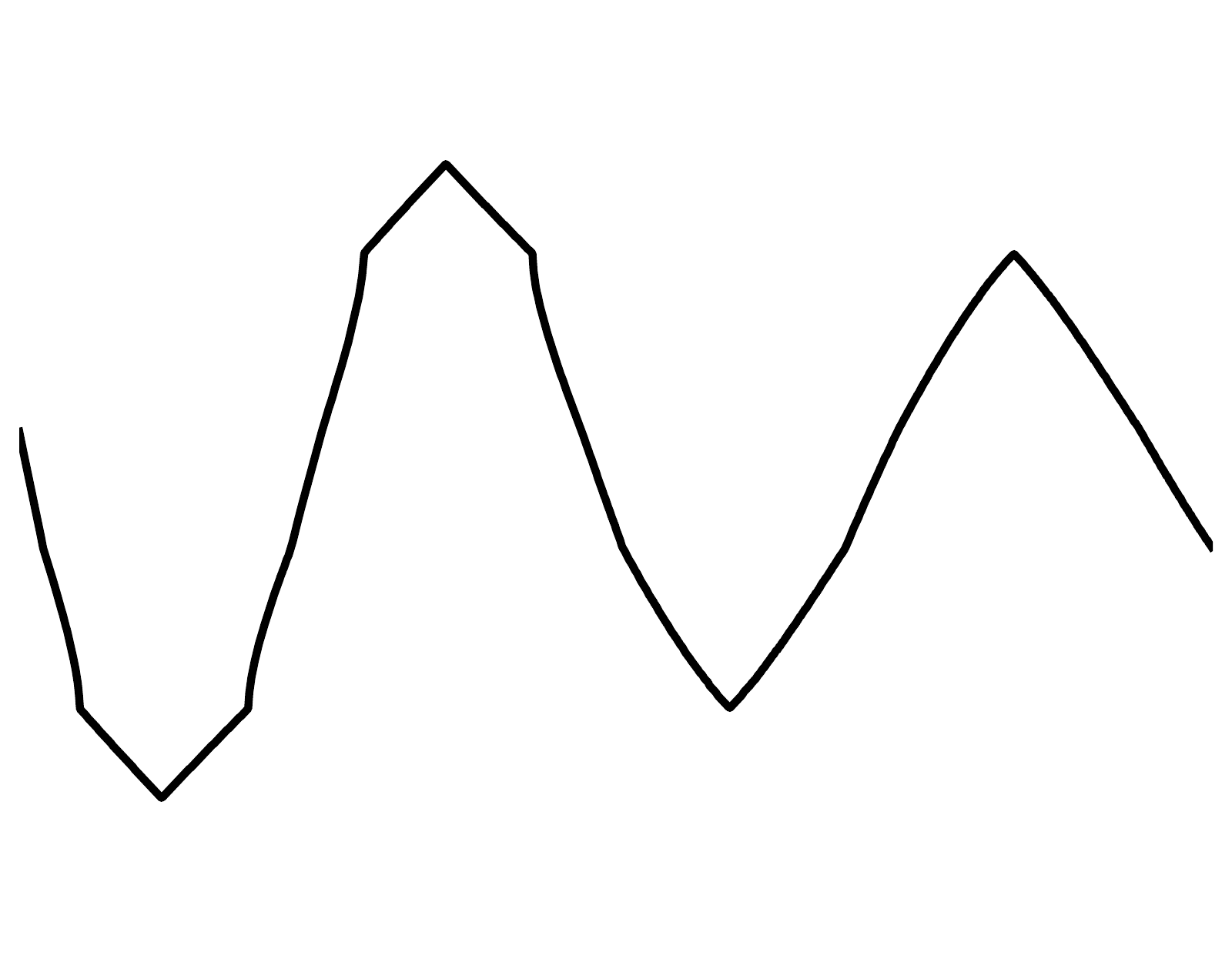}
	\end{array}$
    \end{center}
  \vspace{-0.8em}
  \caption{Three functions that share the same ranking rule}
  \end{figure}
  The previous proposition
  states that
  even if the unknown function
  $f$ admits noncontinuous or large  variations,
  up to a transformation $\psi$,
  there might exist a simpler function $h=\psi \circ f$
  that shares the same induced ranking rule.
  Figure \ref{pics:sameranking} gives an example of three functions that share
  the same ranking rule while they display highly different regularity properties.
  As a second example, we may consider the problem of maximizing the function
  $f(x)=1-1/\abs{\ln(x)}$
  if $x \neq 0$ and $1$ otherwise
  over $\X=[0,1/2]$.
  In this case, the unknown function $f$ is not \emph{'smooth'}
  around its unique global maximizer $x^{\star}=0$
  but shares the same induced ranking rule with $h(x)=-x$ over $\X$.\\

  A ranking structure is a collection
  of ranking rules. The approach developed in this paper consists 
  of seeing the ranking structure as a characterization of the complexity
  of the target function $f$ through the complexity of its induced ranking rule.
  We first introduce a very large class of ranking rules.

  \begin{definition} {\sc (Continuous Ranking Structure and Continuous Ranking Rules)}
    We say that a real-valued function $f: \X \to\R$ has a  continuous ranking rule
    if  $r_f \in \Rank_{\infty}$ where
    $\Rank_{\infty} := \{ r_{h} ~|~ h \in \mathcal{C}^{0}(\X, \R) \}$
    denotes the set of continuous ranking rules
    (i.e.~the set of ranking rules induced by continuous functions).
  \end{definition}
  In the continuation of this definition,
  we further introduce three examples of more stringent ranking structures.
  \begin{definition}{\sc (Polynomial Ranking Rules)}
    The set of polynomial ranking rules of degree $\normalfont{k} \ge 1$ is defined as
    \[
    \Rank_{\mathcal{P}_{\normalfont{k}}}
    := \{r_{h}: (x,x') \mapsto
    \normalfont{\text{sgn}}(h(x) -h(x')) ~| ~ h \in \mathcal{P}_{k}(\X, \R) \}.
    \]
  \end{definition}
  We point out that even a polynomial function of degree $k>1$
  may admit a lower degree polynomial ranking rule.
  For example, consider the polynomial function $f(x)=(x^2-3x+1)^{9}$.
  Since $f(x)= \psi  (x^2-3x)$  where $\psi : x \mapsto (x+1)^{9}$
  is a strictly increasing function,
  the ranking rule induced by  $f$ is a polynomial ranking rule of (at most) degree $2$.
  We may now introduce our second class of ranking structures
  which is an extension of the set of polynomial
  ranking rules.
  \begin{definition}{\sc (Sinusoidal Ranking Rules)}
    The set of sinusoidal ranking rules of degree $k \ge 1$ is defined as
    \[
    \Rank_{\mathcal{S}_k}
    := \{r_{h}:(x,x') \mapsto
    \normalfont{\text{sgn}} \left( 
    \left( h(\cos(2\pi x)) -h(\cos(2\pi x')) \right) \right)
    ~| ~ h \in \mathcal{P}_k(\X, \R) \}
    \]
    where the cosine function is vectorized, {\it i.e.~}$\forall x \in \R^d$,
    $cos(x) = \{cos(x_1), \dots, cos(x_d) \}$.
  \end{definition}
  The last class of ranking structures we introduce is a class of non-parametric ranking
  rules.
  \begin{definition}{\sc (Convex Ranking Rules)}
  The set of convex ranking rules of degree $k \geq 1$ is defined as
    \begin{multline*}
      \Rank_{\mathcal{C}_k} := \{r \in \Rank_{\infty}~
      \textrm{such that}~ \textrm{~for any~} x' \in \X,
      \textrm{~the set~}\\
      \{x \in \X: r(x,x')\geq0 \}\textrm{~is a union of~} k \textrm{~convex sets}\}.
    \end{multline*}
  \end{definition}
  It is easy to see that the ranking rule of a function $f$
  is a convex ranking rule of degree $k$
  if and only all the level sets of the function $f$
  are unions of at most $k$ convex sets.

\subsection{Identifiability and regularity}

  We now state two conditions that will be used in the theoretical analysis:
  the first condition is about the identifiability of the maximum of the 
  function
  and the second  is about the regularity of the function around its maximum.
  \begin{condition}{\sc (Identifiability)}
  \label{cond:id}
    The maximum of a function $f: \X \rightarrow \R$
    is said to be identifiable if for any $\varepsilon>0$ arbitrarily small,
      \[
      \mu(\{x \in \X: f(x) \geq \max_{x \in \X}f(x) - \varepsilon \})>0.
    \]
  \end{condition}
  This condition prevents the function from having a spike on its maximum.
  It will be useful to state  asymptotic results of the type
  $f(X_{\hatin}) \rightarrow \max_{x \in \X} f(x)$
  when $n \rightarrow + \infty$.

  \begin{condition}{\sc (Regularity of the level sets)}
  \label{cond:levelset}
    A function $f: \X \rightarrow \R$
    has $(c_{\alpha}, \alpha)$-regular level sets
    for some $c_{\alpha}>0$, $\alpha \ge 0$ if:
    \begin{enumerate}
      \item The global optimizer $x^{\star}\in\X$ is unique;
      \item For any $y \in$ \normalfont{Im}$(f)$,
	the iso-level set $f^{-1}(y)=\{x \in \X: f(x) = y \}$ satisfies
	\[
	\max_{x \in f^{-1}(y)} \norm{x^{\star}-x}_2 \leq
	c_{\alpha} \cdot \min_{x \in f^{-1}(y)} \norm{x^{\star}-x}^{1/(1+\alpha)}_2.
	\]
    \end{enumerate}
  \end{condition}
  Condition \ref{cond:levelset} guarantees that the points
  associated with high evaluations
  are close to the unique optimizer with respect to the Euclidean distance.
  Note however that
  for any iso-level set $f^{-1}(y)$ with finite distance
  to the optimum, the condition
  is satisfied with $\alpha=0$ and
  $c_{\alpha}=\diam{\X}/\min_{x \in f^{-1}(y)} \norm{x^{\star}-x}_2$.
  Thus, this condition concerns the local behavior of the level sets 
  when
  $\min_{x \in f^{-1}(y)}\norm{x^{\star}-x}_2 \rightarrow 0$.
  As an example, the iso-level sets of three simple functions satisfying
  the condition with different values of $\alpha$
  are shown in Figure \ref{pics:lvlset}.
  \begin{figure}[!h]
    \begin{center}$
      \begin{array}{lcccr}
      \includegraphics[width=20mm]{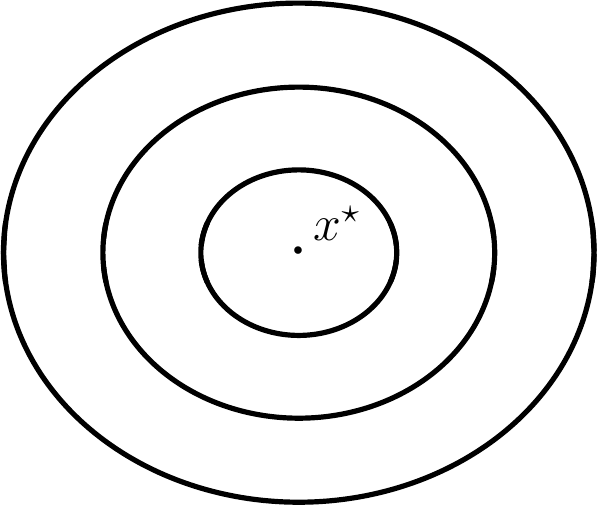}&~~~~~~&
      \includegraphics[width=20mm]{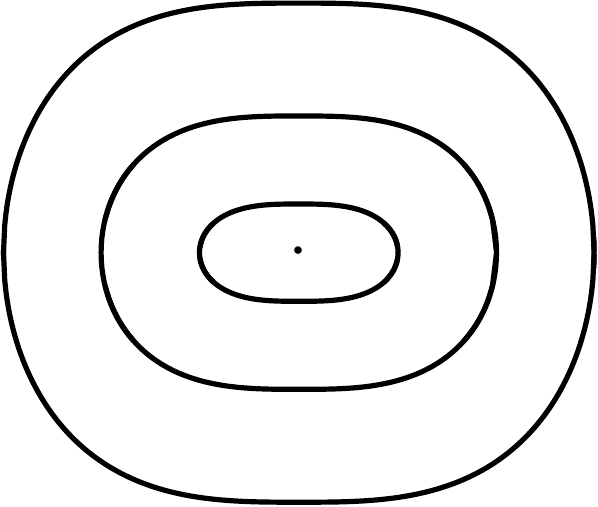}&~~~~~~&
      \includegraphics[width=20mm]{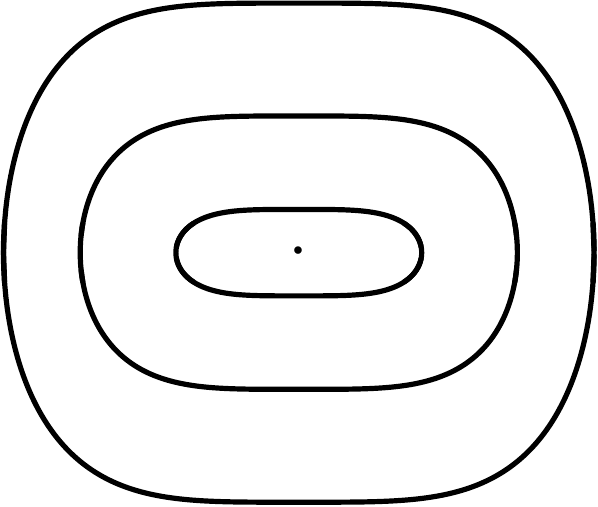}
      \end{array}$
    \end{center}
   \vspace{-0.4cm}
   \caption{Illustration of the regularity of the level sets
   on three simple functions. {\it Left:}
   $f(x_1,x_2)=-x_1^2 -1.4x_2^2$  where $\alpha=0$.
   {\it Middle:} $f(x_1,x_2)=\exp{(-\abs{x_1}^3 -1.4x_2^2)}$  where $\alpha=1/2$.
   {\it Right:} $f(x_1,x_2)=-x_1^4-1.4x_2^2$ where $\alpha=1$.}
  \label{pics:lvlset}
  \end{figure}

\section{Optimization with fixed ranking structure}
\label{sec:rankopt}
  In this section, we consider the problem of optimizing
  an unknown function $f$ given the knowledge that its induced ranking rule
  $r_f$ belongs to a given ranking structure $\Rank \subseteq \Rank_{\infty}$.

\subsection{The \textbf{\textsc{RankOpt}} algorithm}

  {\bf Definitions.} In order to properly set up the algorithm,
  we first introduce some key
  concepts that will be at the core of its strategy
  and used throughout the paper.
  We start with the definition of the empirical ranking loss.
  
  \begin{definition}
   {\sc (Empirical ranking loss)}
   The empirical ranking loss computed over a sample
   $(X_1, f(X_1)), \dots, (X_t, f(X_t))$
   of $t\geq 2$ function evaluations
   is defined for all $r: \X \times \X \to \{-1, 0, 1 \}$ by
  \begin{equation*}
  L_t(r) := \frac{2}{t(t-1)} \sum_{1 \leq i<j \leq t}
  \indic{ r(X_i,X_j)~ \! \neq ~ \! r_f(X_i,X_j) }
  \end{equation*}
  where $r_f(X_i,X_j)=\normalfont{\sgn{f(X_i) - f(X_j)}}$ for all $(i,j)\in\{1, \dots, t \}^2$.
  \end{definition}
  Based on this definition, 
  one might then 
  recover among a ranking structure $\Rank$ 
  the subset of ranking rules $r$ 
  which are consistent 
  with the ranking rule $r_f$ induced by the unknown function over a sample
  of function evaluations.

  \begin{definition}
  \sloppy
   {\sc (Active subset of consistent ranking rules)} 
   The active subset of a ranking structure $\Rank$
   which contains the ranking rules consistent with $r_f$ over 
   a sample $(X_1, f(X_1)), \dots, (X_t, f(X_t))$ 
   of $t\geq 2$ function evaluations  is defined by
   \[
    \Rank_t : = \{r \in \Rank : L_t(r) =0 \}
   \]
   where $L_t(\cdot)$ denotes the empirical ranking loss defined above.
  \end{definition}
  We may now introduce the optimization algorithm.\\
%

\newcommand{\nosemic}{\renewcommand{\@endalgocfline}{\relax}}
\newcommand{\dosemic}{\renewcommand{\@endalgocfline}{\algocf@endline}}
\newcommand{\pushline}{\Indp}
\newcommand{\popline}{\Indm\dosemic}
\let\oldnl\nl
\newcommand{\nonl}{\renewcommand{\nl}{\let\nl\oldnl}}
\SetNlSty{textbf}{}{.}

\RestyleAlgo{boxed}
\begin{figure}[b!]
  \begin{algorithm}[H]
  \vspace{0.5em}
  \textbf{1. Initialization:} Let $X_1 \sim \mathcal{U}(\X)$\\
  \nonl \pushline  Evaluate $f(X_1)$, $t \leftarrow 1$\\
  \nonl $\Rank_1 \leftarrow \Rank$, $\hatiun \leftarrow 1$\\
  \vspace{0.5em}
  \nl \Indm \textbf{2. Iterations:} Repeat while $t <n$: \\
  \pushline \nonl Let $X_{t+1} \sim \mathcal{U}(\X)$\\
  \nonl If  there exists $r \in \Rank_t$ such that
  $ r(X_{t+1},X_{\hatit}) \geq 0$~\textcolor{black!75}{\tt \{Decision rule\}}\\
  \pushline \nonl Evaluate $f(X_{t+1})$, $t \leftarrow t+1$\\
  \nonl $\mathcal{R}_t \leftarrow \{r \in \mathcal{R}: L_t(r)=0 \}$\\
  \nonl $\hatit \in \arg\max_{i=1 \ldots t}f(X_i)$ \\
  \vspace{0.5em}
  \Indm \Indm \textbf{3. Output:} Return $X_{\hatin}$
  \vspace{0.5em}
  \end{algorithm}
  \vspace{-0.5em}
  \caption{The {\sc RankOpt}$(n,f,\X,\Rank)$ algorithm}
  \label{fig:rankopt}
\end{figure}

    \noindent {\bf Algorithm description.}
  The input of the {\sc RankOpt} algorithm (displayed in Figure \ref{fig:rankopt})
  are a number $n$ of iterations,
  the unknown function $f$,  a compact and convex set $\X \subset \R^d$
  and a ranking structure $\Rank \subseteq \Rank_{\infty}$.
  At each iteration $t  <n $, a point $X_{t+1}$ is sampled uniformly over $\X$ and
  the algorithm decides, whether or not, to evaluate the function at this point.
  The decision rule involves the active subset $\Rank_t$
  which contains the ranking rules that are consistent with
  the ranking rule induced by $f$ over the points sampled so far.
  More precisely the decision rule operates as follows:
  if there does not exist  any ranking rule $r \in \Rank_t$
  which satisfies $r(X_{t+1}, X_{\hatit}) \geq 0$, then we know from
  the definition of $\Rank_t$ that
  $r_f(X_{t+1}, X_{\hatit})=-1$ which necessarily means that 
  $f(X_{t+1})< f(X_{\hatit})$.
  Thus, the algorithm never evaluates the function at a point
  that will not return certainly an evaluation at least equal to the highest evaluation
  $f(X_{\hatit})$ observed so far.

  \begin{remark}{\sc (Connection with active learning)}
    Although the problem considered in this paper is 
    very different,
    the \textsl{\textsc{RankOpt}} algorithm
    might be seen as an extension to ranking of
    the baseline
    {active learning}
    algorithm introduced in \cite{cohn1994improving} and further analyzed
    by \cite{hanneke2011rates}.
    However, the main difference with this algorithm
    lies in the fact that in active learning, one estimates a binary classifier
    $h: \X \rightarrow \{0, 1\}$
    where the goal in global optimization is to estimate the winner
    of a tournament deriving from the ranking rule
    $r_f: \X \times \X \rightarrow \{-1, 0, 1 \}$
    and not the ranking rule itself.
  \end{remark}

  \begin{remark}{\sc (Adaptation to noisy evaluations)}
  \label{rem:noisy}
  It is noteworthy that the proposed optimization scheme 
  could be extended to settings with noisy evaluations by slightly
  adapting the ideas developed in \cite{dasgupta2011two} and \cite{hanneke2011rates}.
  More precisely,
  a straightforward strategy would consist in using a relaxed
  version of the active subset $\Rank_{\delta, t} := \{r \in \Rank: L_t(r)
    \leq \min_{r \in \Rank} L_t(r) + UB_{\delta,t } \}$ where the term
    $UB_{\delta,t }$ comes out of some standard generalization bound on
    $\abs{L_t(r_f) - \min_{r \in \Rank}L_t(r)}$
    (see, {\it e.g.,} \cite{clemenccon2008ranking}).
  \end{remark}

  \begin{remark}
   {\sc (Computational aspects)} 
   Due to the theoretical nature and the genericity of the algorithm, 
   several questions remain to be addressed in order to derive a practical implementation.
   In particular, the crucial steps
   of (i) identifying the set of ranking rules
   which minimize the empirical ranking loss and (ii) 
   simulating
   the next evaluation points $X_{t+1}$ 
   with the rejection method 
   might not be trivial in practice.
   Nevertheless, we point out that, under specific conditions,
   a complete implementation of the algorithm can be proposed
   (see Section \ref{sec:equivalence} for further discussions on these aspects).
%
  \end{remark}

\subsection{Convergence analysis}

  We state here some convergence properties of the {\sc RankOpt} algorithm.
  The results are stated in a probabilistic framework.
  Recall however that the source of randomness comes from
  the random variables generated by the algorithm and not from the evaluations
  which are assumed noiseless.
  We start by casting an intermediate result that will be important in order
  to formulate the consistency property of the algorithm and the upper bound
  on the convergence rate.

  \begin{proposition}
  \label{th:fasterprs}
    Let $\X \subset \R^d$ be any compact and convex set
    with non-empty interior,
    let $\Rank$ be any continuous ranking structure 
    and let $f: \X \to \R$ be any function such that $r_f \in \Rank$.
    Then, for any $n \in \mathbb{N}^{\star}$ and 
    all $ y \in \R$, we have that
    \[
    \P( f(X_{\hatin}) \geq y  )
      \geq \P( ~\! \!  \textstyle{\max_{i = 1 \ldots n} f(X'_i)}  \geq y )
    \]
    where $X_{\hatin}$ denotes the output of
    the 
    \textsl{\textsc{RankOpt}}$(n, f, \X, \Rank)$  algorithm and
    $\{ X'_i \}_{i=1}^n$ is a sequence of $n$ independent random variables
    uniformly distributed over $\X$.
  \end{proposition}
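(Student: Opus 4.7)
The plan is to view \textsc{RankOpt} as a form of rejection sampling with a conservative acceptance rule, and to couple its candidate sequence with a sequence of i.i.d.\ uniform draws on $\X$. The key observation is that $L_t(r_f)=0$ trivially, hence $r_f\in\Rank_t$ at every step $t$. Consequently, if the decision rule fails for some candidate $X_{t+1}$ --- i.e.\ no $r\in\Rank_t$ satisfies $r(X_{t+1},X_{\hatit})\ge 0$ --- then $r_f$ in particular fails, which forces $f(X_{t+1})<f(X_{\hatit})$. In other words, \textsc{RankOpt} discards a candidate only when it has certified that the candidate's value lies strictly below the current incumbent.

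Concretely, I would let $(Y_k)_{k\ge 1}$ be an i.i.d.\ sequence drawn from $\mathcal{U}(\X)$ and feed them to the algorithm as the successive candidates in the loop. Writing $T_1<T_2<\cdots$ for the indices of the accepted ones, we have $X_i = Y_{T_i}$, and the total number of draws required to produce $n$ acceptances $N_n := T_n$ satisfies $N_n\ge n$ by construction; moreover $(Y_1,\ldots,Y_n)$ has the same joint distribution as $(X'_1,\ldots,X'_n)$. For any $k\le n$, either $Y_k$ was accepted and hence $f(Y_k)=f(X_i)\le f(X_{\hatin})$ for some $i\le n$, or $Y_k$ was rejected at some step $t\le n$, in which case the observation above gives $f(Y_k)<f(X_{\hatit})\le f(X_{\hatin})$. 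Thus pathwise $\max_{k\le n}f(Y_k)\le f(X_{\hatin})$, and passing to probabilities yields $\P(f(X_{\hatin})\ge y)\ge \P(\max_{k\le n}f(Y_k)\ge y) = \P(\max_{i=1,\ldots,n} f(X'_i)\ge y)$.

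The one delicate point is that the coupling implicitly assumes $N_n<\infty$, i.e.\ that \textsc{RankOpt} almost surely produces its $n$-th accepted sample. Since $r_f\in\Rank_t$, the acceptance region always contains the upper level set $\{x\in\X:f(x)\ge f(X_{\hatit})\}$, so the waiting time to the next acceptance has geometric tails whenever this set has positive Lebesgue measure; the degenerate case in which the incumbent gets stuck on a strictly sub-optimal, measure-zero plateau is readily handled separately and cannot decrease the left-hand probability. This termination check is the main technical obstacle; the substance of the argument is the pathwise coupling sketched above.
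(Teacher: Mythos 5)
Your argument is correct, but it takes a genuinely different route from the paper. The paper proceeds by induction on $n$: it works with the distributional description of the process (Proposition \ref{prop:chain}, $X_{t+1}\mid \{X_i\}_{i=1}^t \sim \mathcal{U}(\X_t)$ with $\{x:f(x)\ge f(X_{\hatit})\}\subseteq \X_t\subseteq\X$), decomposes $\P(\max_{i\le n+1}f(X_i)\ge y)$ according to whether the first $n$ evaluations already exceed $y$, lower-bounds the conditional probability of the new point landing in the level set by $\mu(\X_y)/\mu(\X)$, and closes the induction. You instead exploit the algorithm's rejection-sampling structure directly: feeding an i.i.d.\ uniform candidate stream, you observe that since $r_f\in\Rank_t$ at every step a candidate is discarded only when $f$ at that candidate is certified to be strictly below the incumbent, which yields the pathwise domination $\max_{k\le n}f(Y_k)\le f(X_{\hatin})$ and hence the stochastic ordering by a coupling argument. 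This is more elementary (no induction, no conditioning computation), gives a strictly stronger almost-sure coupling statement, and makes the intuition transparent; what the paper's route buys is a uniform distributional framework (the conditional-uniform Markov description) that is reused essentially verbatim to prove the companion comparison with Pure Adaptive Search (Proposition \ref{prop:slowerpas}) and the finite-time bounds, and it avoids any discussion of termination since the process is defined distributionally. On your ``delicate point'': the degenerate scenario is not a \emph{strictly sub-optimal} measure-zero plateau --- if the incumbent is strictly suboptimal, then writing $r_f=r_h$ with $h$ continuous, the strict superlevel set $\{h>h(X_{\hatit})\}$ is nonempty and relatively open in the convex body $\X$, hence of positive measure, so the acceptance probability is bounded away from zero and the next acceptance occurs almost surely. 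The only way the chain could stall is with an incumbent that is already a global maximizer (and such an incumbent is hit with probability zero when the argmax is Lebesgue-null, since accepted points are drawn uniformly from positive-measure sets); in either reading, as you note, this event cannot decrease the left-hand probability, so the gap is cosmetic rather than substantive.
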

One can then easily derive the next asymptotic result
by combining Proposition \ref{th:fasterprs} with the identifiability condition.

  \begin{corollary} {\sc (Consistency)}
  \label{coro:consistencyrank}
  Consider the same assumptions as in Proposition \ref{th:fasterprs}.
    Then, under Condition \ref{cond:id},
    we have that
    \[
      f(X_{\hatin}) ~\! \! \overset{\mathbb{P}}{\longrightarrow} ~\! \! \! \max_{x \in \X}f(x).
    \]
  \end{corollary}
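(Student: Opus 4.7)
The plan is to exploit Proposition \ref{th:fasterprs} as a black-box stochastic-dominance statement and combine it with Condition \ref{cond:id} to reduce the corollary to the (trivial) consistency of pure random search. Let $M := \max_{x \in \X} f(x)$, which is well defined since $f$ admits a global maximum on the compact domain. Since $f(X_{\hatin}) \leq M$ holds deterministically, convergence in probability to $M$ is equivalent to showing that, for every fixed $\varepsilon > 0$,
\[
\P\bigl(f(X_{\hatin}) \geq M - \varepsilon\bigr) \xrightarrow[n \to \infty]{} 1.
\]

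For the lower bound on this probability, I would invoke Proposition \ref{th:fasterprs} with the choice $y = M - \varepsilon$ to obtain
\[
\P\bigl(f(X_{\hatin}) \geq M - \varepsilon\bigr) \geq \P\Bigl(\max_{i=1,\ldots,n} f(X'_i) \geq M - \varepsilon\Bigr),
\]
where the $X'_i$ are i.i.d.\ uniform on $\X$. By independence, the right-hand side equals $1 - (1 - p_\varepsilon)^n$, with
\[
p_\varepsilon := \P\bigl(f(X'_1) \geq M - \varepsilon\bigr) = \frac{\mu(\{x \in \X : f(x) \geq M - \varepsilon\})}{\mu(\X)}.
\]
Because $\X$ is compact with non-empty interior, we have $0 < \mu(\X) < \infty$, so $p_\varepsilon$ is well defined.

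The only substantive ingredient — and the single place where a nontrivial hypothesis is used — is now to show $p_\varepsilon > 0$, and this is exactly the content of Condition \ref{cond:id}: identifiability asserts that the numerator above is strictly positive for every $\varepsilon > 0$. Given $p_\varepsilon > 0$, the bound $(1 - p_\varepsilon)^n \to 0$ as $n \to \infty$ is immediate, which closes the argument.

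I do not anticipate any real obstacle: the work has already been done in Proposition \ref{th:fasterprs}, which transfers the analysis from \textsc{RankOpt} to pure uniform sampling, and identifiability is precisely the condition preventing the maximum from being an unreachable spike of zero Lebesgue measure. The only mild care needed is to state the argument for arbitrary $\varepsilon > 0$ and to quote that $\mu(\X) > 0$ follows from the non-empty interior assumption inherited from Proposition \ref{th:fasterprs}.
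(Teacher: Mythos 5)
Your argument is correct and is essentially the paper's own proof: both apply Proposition \ref{th:fasterprs} with $y=\max_{x\in\X}f(x)-\varepsilon$ to reduce to i.i.d.\ uniform sampling, compute the failure probability as $(1-\mu(\X_{f^{\star}-\varepsilon})/\mu(\X))^{n}$, and use Condition \ref{cond:id} to guarantee this ratio is strictly positive so the bound vanishes as $n\to\infty$.
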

  
  Now we focus on the nonasymptotic performance of the algorithm.
  The next result provides our first finite-sample bound on the distance
  between the exact solution and the estimate
  provided by {\sc RankOpt}.

  \begin{theorem} {\sc (Upper bound)}
  \label{th:upperbound}
    Suppose that the assumptions of Proposition \ref{th:fasterprs}
    hold true.
    Then, under Condition \ref{cond:levelset},
    for any $n \in \mathbb{N}^{\star}$ and $\delta \in (0,1)$,
    we have with probability at least $1-\delta$,
    \begin{equation*}
      \norm{x^{\star} - X_{\hatin}}_2 \leq
      C_1 \cdot
      \left(  \frac{\ln(1/\delta)}{n}\right)^{\frac{1}{d(1+\alpha)^2}}
    \end{equation*}
    where 
    $C_{1}= c_{\alpha}^{(2+\alpha)/(1+\alpha)}
    \diam{\X} ^{ 1/ (1+\alpha)^{2}}$.
  \end{theorem}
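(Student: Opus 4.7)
The plan is to combine Proposition~\ref{th:fasterprs} with a volume/geometric-probability estimate, then convert the resulting high-probability lower bound on $f(X_{\hatin})$ into a distance bound by invoking Condition~\ref{cond:levelset} \emph{twice}. Using Proposition~\ref{prop:rankingequivalence}, I may replace $f$ by a continuous representative $h=\psi\circ f$ which shares all its level sets; this changes nothing in the statement (both Condition~\ref{cond:levelset} and the conclusion depend only on level sets and on $x^{\star}$) and lets me use continuity and the intermediate value theorem freely on segments in $\X$.

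Fix a level $y<f(x^{\star})$ and write $d_y=\min_{x\in f^{-1}(y)}\norm{x-x^{\star}}_2$, $D_y=\max_{x\in f^{-1}(y)}\norm{x-x^{\star}}_2$, and $S_y=\{x\in\X:f(x)\geq y\}$. The first key step is the inscribed-ball inclusion $B(x^{\star},d_y)\cap\X\subseteq S_y$: if $\norm{x-x^{\star}}_2<d_y$, the segment $[x^{\star},x]$ lies in $\X$ by convexity and $f$ is continuous on it with $f(x^{\star})>y$ at one endpoint, so $f(x)\leq y$ would by IVT produce a level-$y$ point at distance $<d_y$ from $x^{\star}$, contradicting the definition of $d_y$. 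Combined with the standard convex-scaling inclusion (the map $z\mapsto x^{\star}+(d_y/\diam{\X})(z-x^{\star})$ sends $\X$ into $B(x^{\star},d_y)\cap\X$), this yields $\mu(S_y)/\mu(\X)\geq (d_y/\diam{\X})^d$.

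Feeding this into Proposition~\ref{th:fasterprs} with an i.i.d.\ uniform sample $(X'_i)$,
\[
\P\bigl(f(X_{\hatin})<y\bigr)\;\leq\;\P\bigl(\textstyle\max_i f(X'_i)<y\bigr)\;=\;\bigl(1-\mu(S_y)/\mu(\X)\bigr)^n\;\leq\;\exp\bigl(-n(d_y/\diam{\X})^d\bigr).
\]
Choosing the level $y$ so that $d_y=\diam{\X}\,(\ln(1/\delta)/n)^{1/d}$ (possible up to a negligible continuity argument; for larger $(\ln(1/\delta)/n)^{1/d}$ the bound is trivially true) makes the right-hand side $\leq\delta$, so with probability at least $1-\delta$, $X_{\hatin}\in S_y$. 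It remains to bound $\sup_{x\in S_y}\norm{x-x^{\star}}_2$. For $x\in S_y$, write $y'=f(x)\geq y$; Condition~\ref{cond:levelset} at level $y'$ gives $\norm{x-x^{\star}}_2\leq D_{y'}\leq c_{\alpha}\,d_{y'}^{1/(1+\alpha)}$. A second IVT argument along the segment from $x^{\star}$ to a farthest level-$y$ point (at distance $D_y$) shows that every value in $[y,f(x^{\star})]$, in particular $y'$, is attained on that segment, so $d_{y'}\leq D_y$. Applying Condition~\ref{cond:levelset} a second time, $D_y\leq c_{\alpha}\,d_y^{1/(1+\alpha)}$, and chaining,
\[
\norm{X_{\hatin}-x^{\star}}_2\;\leq\;c_{\alpha}\,d_{y'}^{1/(1+\alpha)}\;\leq\;c_{\alpha}\,D_y^{1/(1+\alpha)}\;\leq\;c_{\alpha}^{(2+\alpha)/(1+\alpha)}\,d_y^{1/(1+\alpha)^2}.
\]
Substituting $d_y=\diam{\X}(\ln(1/\delta)/n)^{1/d}$ gives exactly $C_1(\ln(1/\delta)/n)^{1/(d(1+\alpha)^2)}$.

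The main obstacle is this final conversion step: the characteristic exponent $1/(d(1+\alpha)^2)$ and the constant $c_{\alpha}^{(2+\alpha)/(1+\alpha)}\diam{\X}^{1/(1+\alpha)^2}$ arise precisely because Condition~\ref{cond:levelset} has to be invoked twice, linked by the IVT bound $d_{y'}\leq D_y$; a single use would yield only the weaker exponent $1/(d(1+\alpha))$. The supporting ingredients — the inscribed-ball inclusion, the convex-scaling volume bound, and a continuity argument picking the level with the target value of $d_y$ — are routine consequences of continuity of the representative and convexity of $\X$.
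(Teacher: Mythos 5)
Your argument is, in substance, the paper's own: Proposition~\ref{th:fasterprs} reduces everything to an i.i.d.\ uniform sample, the convex-scaling map gives the volume lower bound (the paper's Lemma~\ref{lem:zab}), and Condition~\ref{cond:levelset} is applied twice, chained through the intermediate-value theorem, which is exactly where the exponent $1/(d(1+\alpha)^2)$ and the constant $C_1$ come from in the paper's Lemma~\ref{lem:inclusion}. The one step that does not survive scrutiny as written is the selection of the level: you ``choose $y$ so that $d_y=\diam{\X}(\ln(1/\delta)/n)^{1/d}$'', but the map $y\mapsto d_y=\min_{x\in f^{-1}(y)}\norm{x-x^{\star}}_2$ need not be continuous or surjective, even under Condition~\ref{cond:levelset}. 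In dimension one, take $x^{\star}=0$ with $f$ decreasing from $1$ to $0$ on $[0,1]$, rising to $1/2$ on $[1,1.5]$, then falling below $0$ on $[1.5,2]$: as $y$ crosses $0$ from above, $d_y$ jumps from $1$ to beyond $1.5$, so an entire interval of values of $d_y$ is never attained, while Condition~\ref{cond:levelset} still holds with $\alpha=0$ and a finite $c_{\alpha}$. Since your volume step needs $d_y\geq\rho$ and your circumscribing chain needs $d_y\leq\rho$, the argument genuinely requires $d_y=\rho$ exactly and stalls when no such level exists; there is no ``negligible continuity argument'' to fall back on.

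The repair is immediate and is essentially what the paper does. Instead of solving $d_y=\rho$, set $y=\min\{f(x):x\in B(x^{\star},\rho)\cap\X\}$ (the minimum is attained for the continuous representative). Then $B(x^{\star},\rho)\cap\X\subseteq S_y$ holds by construction of the minimum, which gives the volume bound, and the minimizer lies within distance $\rho$ of $x^{\star}$, so $d_y\leq\rho$, which is all your two-fold application of Condition~\ref{cond:levelset} needs; the constant $C_1$ comes out unchanged. (The paper's Lemma~\ref{lem:inclusion} achieves the same effect by minimizing $f$ over a sphere of prescribed radius and burying the two uses of the regularity condition in its two inclusions.) With that substitution your proof is correct.
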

  More surprisingly, a lower bound can also be derived
  by connecting the {\sc RankOpt} algorithm
  to a
  theoretical algorithm defined below
  which uses the knowledge of the level
  sets of the unknown function.
  \begin{definition}
  \label{def:PAS}
   {\sc (Pure Adaptive Search}\normalfont{, from \cite{zabinsky1992pure}}{\sc)}
   {\it We say that a sequence $\{X^{\star}_i \}_{i=1}^n$
   is distributed as a Pure Adaptive Search indexed by $f$ over $\X$
   if it follows the Markov process defined by:
    \begin{equation*}
      \begin{cases}
	  X^{\star}_1 \sim \mathcal{U}(\X) \\
	  X^{\star}_{t+1}|~ \! X^{\star}_t \sim \mathcal{U}(\X^{\star}_t)
	  \textrm{~~~~~~~} \forall t \in \{1 \ldots n-1 \}
	\end{cases}
    \end{equation*}
    where at each step $t\geq 1$ the next evaluation point $X^{\star}_{t+1}$
    is sampled  uniformly over the level set of the previous evaluation
    $\X^{\star}_t := \{x \in \X: f(x) \geq f(X^{\star}_{t}) \}$.}
  \end{definition}
  Precisely, the next result shows 
  that the value of the highest evaluation observed by a  {\sc Pure Adaptive Search}
  is superior or equal, in the usual stochastic ordering sense, to
  the one 
  observed by the {\sc RankOpt} algorithm tuned with the same
  number of function evaluations.
%
  \begin{proposition}
  \label{prop:slowerpas}
    Consider the same assumptions as in  Proposition \ref{th:fasterprs}.
    Then, for any $n\in \mathbb{N}^{\star}$ and all $y \in \R$, we have that
    \[
    \P( f(X_{\hatin}) \geq y  )
    \leq   \P( f(X^{\star}_{n}) \geq y )
    \]
    where $X_{\hatin}$ denotes the output of 
    the \textsl{\textsc{RankOpt}} algorithm
    after $n$ iterations and
    $\{X^{\star}_i \}_{i=1}^n$
    is a sequence of $n$ evaluation points
    distributed as a Pure Adaptive Search indexed by $f$ over $\X$.
  \end{proposition}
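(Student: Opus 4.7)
The plan is to compare the two processes step by step via a distributional induction, exploiting the fact that \textsc{RankOpt} samples its next evaluation uniformly on a superset of the true upper level set. Define the upper level set $L(z) := \{x \in \X : f(x) \geq z\}$ and, at iteration $t$ of \textsc{RankOpt}, the acceptance region $A_t := \{x \in \X : \exists r \in \Rank_t,~ r(x, X_{\hatit}) \geq 0\}$. The key observation is that $r_f \in \Rank_t$ because $L_t(r_f) = 0$ holds on any observed sample, and therefore $A_t \supseteq L(f(X_{\hatit}))$. Since the rejection rule makes the accepted $X_{t+1}$ uniform on $A_t$ given the history $\mathcal{H}_t$, for any $y > f(X_{\hatit})$ the conditional probability of landing in the strict improvement region is $\mu(L(y))/\mu(A_t) \leq \mu(L(y))/\mu(L(f(X_{\hatit})))$.

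Write $M_t := f(X_{\hatit})$ and $M^{\star}_t := f(X^{\star}_t)$, and set $\bar{F}_t(y) := \P(M_t \geq y)$, $\bar{G}_t(y) := \P(M^{\star}_t \geq y)$. I would prove $\bar{F}_t \leq \bar{G}_t$ pointwise by induction on $t$. The base case $t = 1$ is immediate since $X_1$ and $X^{\star}_1$ are both uniform on $\X$. For the inductive step, introduce
\[
g_y(z) := \indic{z \geq y} + \indic{z < y}\,\frac{\mu(L(y))}{\mu(L(z))},
\]
which is non-decreasing in $z$ because $z \mapsto \mu(L(z))$ is non-increasing on $\{z < y\}$ and $g_y(z) \leq 1 = g_y(y)$. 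Using $M_{t+1} = \max(M_t, f(X_{t+1}))$ together with the per-step bound above on the \textsc{RankOpt} side, and the Markov transition of \textsc{PAS} on the other side, yields
\[
\bar{F}_{t+1}(y) \;\leq\; \mathbb{E}\bigl[g_y(M_t)\bigr]
\qquad\text{and}\qquad
\bar{G}_{t+1}(y) \;=\; \mathbb{E}\bigl[g_y(M^{\star}_t)\bigr].
\]
The inductive hypothesis is equivalent to stochastic domination of $M_t$ by $M^{\star}_t$, which combined with the monotonicity of $g_y$ transfers the inequality to the expectations, closing the induction.

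The delicate point I expect to have to handle is the degenerate case $\mu(L(M_t)) = 0$, corresponding to $X_{\hatit}$ already reaching the essential supremum of $f$ over $\X$; both chains then remain at this value and the inequality becomes trivial, provided the convention $0/0 = 0$ is adopted consistently. A secondary routine check is that the rejection loop of \textsc{RankOpt} terminates almost surely, which holds because $\mu(A_t) \geq \mu(L(M_t))$ is strictly positive as long as the process can still improve, and that $A_t$ is measurable with respect to $\mathcal{H}_t$, which follows from its definition as a deterministic functional of $\Rank_t$ and $X_{\hatit}$.
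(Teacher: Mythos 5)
Your proof is correct and follows essentially the same route as the paper's: an induction on $n$ that combines the sandwich $\{x\in\X: f(x)\geq f(X_{\hatit})\}\subseteq \X_t\subseteq\X$ (from consistency of $r_f$) with a comparison through the quantity $g_y$ of the running maximum, which is exactly the random variable $\indic{f(X_{\hatit})\geq y}+\frac{\mu(\X_y)}{\mu(\X_{f(X_{\hatit})})}\indic{f(X_{\hatit})<y}$ appearing in the paper's decomposition. The only cosmetic difference is that you close the induction by citing the standard fact that stochastic dominance transfers to expectations of non-decreasing functions, whereas the paper carries out that same step by hand via $\esp{X}=\int_0^1\P(X\geq t)\,\mathrm{d}t$ and a rewriting of the event $\{\mu(\X_{f(X_{\hatin})})\leq \mu(\X_y)/t\}$ as a tail event of $f(X_{\hatin})$.
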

  As the performance of the algorithm can now
  be controlled by Proposition \ref{prop:slowerpas},
  it is then possible to establish a second finite-time bound on the distance
  between the exact solution and its approximation.

  \begin{theorem} {\sc (Lower bound)}
  \label{th:lowerbound}
    Suppose that the assumptions of Proposition \ref{th:fasterprs} hold true.
    Then, under Condition \ref{cond:levelset},
    for any $n \in \mathbb{N}^{\star}$ and $\delta \in (0,1)$,
    we have with probability at least $1-\delta$,
      \[
	C_{2} \cdot
	e^{ -\frac{(1+\alpha)^2}{d} \left( n+ \sqrt{2n\ln(1/\delta)} +\ln(1/\delta) \right) }
	\leq \norm{x^{\star}-X_{\hatin}}_2
      \]
      where $C_{2} = c_{\alpha}^{-(1+\alpha)(2+\alpha)}
      \rad{\X}^{(1+\alpha)^2}$.
  \end{theorem}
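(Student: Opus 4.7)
The plan is to derive the lower bound by chaining three ingredients: the stochastic dominance of Proposition~\ref{prop:slowerpas}, a volume-decay analysis for Pure Adaptive Search, and the level-set regularity of Condition~\ref{cond:levelset}. Proposition~\ref{prop:slowerpas} lets me compare $f(X_{\hatin})$ stochastically to the value $f(X^{\star}_n)$ reached by the idealized PAS, so any high-probability upper bound on $f(X^{\star}_n)$ transfers to $f(X_{\hatin})$; Condition~\ref{cond:levelset} then turns such a bound into a high-probability lower bound on $\|x^{\star}-X_{\hatin}\|_2$.

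The key step on the PAS side is the classical volume recursion. Writing $V_t := \mu(\X^{\star}_t)$, the Markov structure together with a change of variables via the decreasing map $y \mapsto \mu(\{f \geq y\})/V_t$ shows that $V_{t+1}/V_t \sim \mathcal{U}(0,1)$ independently of $V_t$, so that
\[
-\log\bigl(V_n / \mu(\X)\bigr) \;=\; \sum_{t=1}^n E_t, \qquad E_t \overset{\mathrm{iid}}{\sim} \mathrm{Exp}(1).
\]
A Bernstein-type tail bound for sums of standard exponentials yields $\sum_{t=1}^n E_t \leq n + \sqrt{2n\ln(1/\delta)} + \ln(1/\delta) =: T$ with probability at least $1-\delta$, which is exactly the quantity appearing in the exponent of the theorem, and hence $V_n \geq \mu(\X)\,e^{-T}$ on this event.

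To convert this into a distance bound, I would transfer the volume inequality to \textsc{RankOpt}. Since $y \mapsto \mu(\{f \geq y\})$ is non-increasing and $f(X_{\hatin})$ is stochastically dominated by $f(X^{\star}_n)$, the event above forces $\mu(\{f \geq f(X_{\hatin})\}) \geq \mu(\X)\,e^{-T}$ with probability at least $1-\delta$. Now $X_{\hatin} \in f^{-1}(f(X_{\hatin}))$, so $\|x^{\star} - X_{\hatin}\|_2 \geq \min_{x \in f^{-1}(f(X_{\hatin}))}\|x^{\star}-x\|_2$. The tautological inclusion $\{f \geq y\} \subseteq B(x^{\star}, \max_{x \in f^{-1}(y)}\|x^{\star}-x\|_2)$ gives a volume-to-radius bound proportional to the outer radius to the $d$-th power, and Condition~\ref{cond:levelset} relates this outer radius to the inner radius by the exponent $1/(1+\alpha)$. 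Combining these with $\mu(\X) \geq \mu(B(\cdot,\rad{\X}))$ and inverting the resulting polynomial inequality yields an exponential lower bound of exactly the form stated.

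The PAS volume recursion and the sub-exponential Bernstein concentration are standard; the main obstacle is the careful bookkeeping in the radius-to-distance step. Arriving at the factor $(1+\alpha)^2/d$ in the exponent and at the precise constant $C_2 = c_\alpha^{-(1+\alpha)(2+\alpha)}\rad{\X}^{(1+\alpha)^2}$ requires propagating the exponent $1/(1+\alpha)$ of Condition~\ref{cond:levelset} through both the volume step and the inversion, rather than the single application that would only produce the weaker $(1+\alpha)/d$ factor. A minor technical point, addressed separately, is ensuring the probability-integral-transform argument for $V_{t+1}/V_t$ is valid, which ultimately follows from Condition~\ref{cond:id} guaranteeing that the push-forward of $\mu|_{\X^{\star}_t}$ under $f$ has no atom at its supremum.
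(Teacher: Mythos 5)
Your proposal follows essentially the same route as the paper's proof: stochastic domination by Pure Adaptive Search (Proposition \ref{prop:slowerpas}), the representation of the PAS level-set volume decay as a product of uniforms---equivalently a sum of standard exponentials---with sub-gamma concentration (Lemmas \ref{lem: fasteruniform} and \ref{prop:concentration}), and the two-fold use of Condition \ref{cond:levelset} through Lemma \ref{lem:inclusion} together with the inner-radius volume comparison, which is exactly what produces the $(1+\alpha)^2/d$ exponent and the constant $C_2$. The one caveat is your claim that $V_{t+1}/V_t$ is \emph{exactly} uniform: this requires $f$ to have no flat parts, which is neither implied by Condition \ref{cond:id} (itself not among the hypotheses of the theorem) nor by Condition \ref{cond:levelset}; the paper's Lemma \ref{lem: fasteruniform} avoids the issue by proving only the one-sided bound $\P(\mu(\X^{\star}_n)/\mu(\X)\leq u)\leq \P(\prod_{i=1}^n U_i\leq u)$, which is the direction your argument actually uses, so your proof is correct once the distributional identity is weakened to this stochastic inequality.
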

  Note that, in addition to the following remarks,
  a complete discussion on the theoretical results obtained in this paper
  can be found in the next section where an adaptive version of the algorithm 
  is presented.

  \begin{remark}
   {\sc (Tightness of the bounds)} We stress that
   the RankOpt algorithm does achieve,
   for specific choices of  ranking structures $\Rank$ and functions $f$,
   the polynomial and exponential rates exhibited in Theorems
   \ref{th:upperbound} and \ref{th:lowerbound}.
   Indeed, 
   noticing that the algorithm is equivalent to
   a pure random search when $\Rank$ is set to  $\Rank_{\infty}$,
   it can be easily shown by means of covering arguments that
   $\norm{X_{\hatin} - x^{\star}}_2 = \Omega_{\P}(n^{-1/d})$
   as soon as $f$ admits a unique maximum and $\Rank=\Rank_{\infty}$.
   Similarly, observing that the algorithm is equivalent to a pure adaptive search 
   when $\Rank$ is set to $\{ r_f\}$,
   one can also show
   by reproducing the same steps as in the proof of the lower bound 
   that $\norm{X_{\hatin} - x^{\star}}_2 = O_{\P} (e^{-n/d(1+\alpha+\epsilon)})$ 
   for any $\epsilon>0$
   when 
   $\Rank=\{ r_f\}$ and 
   $f$ has regular level sets but no flat parts
   {\normalfont (}{\it i.e.~}$\mu(\{x: f(x)=y\})=0$ for 
   all $y\in \normalfont{\text{Im}}(f)${\normalfont)}.
   As these bounds actually match the one
   reported above
   when $\alpha=0$,
   we then deduce that the RankOpt algorithm
   does indeed achieve
   the near-optimal exponential and polynomial rates
   of $\Theta^{*}_{\P}(e^{-n/d})$ and $\Theta_{\P}(n^{-1/d})$ 
   on any function $f$ with $(1,0)$-regular level sets and no flat parts
   when the ranking structure $\Rank$ is respectively set to $\{ r_f\}$ and $\Rank_{\infty}$.
  \end{remark}

  \begin{remark}
  \label{remark:gap}
   {\sc (Gap between the bounds)}
   As a direct consequence of the previous remark, 
   we underline that whereas the upper and lower bounds reported
   in Theorems \ref{th:upperbound} and \ref{th:lowerbound} display
   display very different convergence rates, 
   this gap can not be significantly reduced without imposing
   further conditions on both the function $f$ and
   the ranking structure $\Rank$ set as input.
   Indeed, observe that
   since the algorithm can achieve both the
   rates of $\Theta^{*}_{\P}(e^{-n/d})$ and $\Theta_{\P}(n^{-1/d})$
   on the same function $f$ depending on choice of the ranking structure $\Rank$,
   then
   the gap between any generic lower and upper bounds on the convergence rate
   will necessarily be at least of the order of $[e^{-n/d}, n^{-1/d}]$
   as long as it is only assumed 
   that $\Rank$ is a continuous ranking structure 
   and that $f$ has regular level sets.
   
  \end{remark}

  \begin{remark}
   {\sc (Choice of the ranking structure)}
   Finally,  we point out that
   some simple indications on how to choose in practice the ranking structure 
   $\Rank$ set as input can be deduced from the previous remarks.
   Recall indeed that since
   the algorithm achieves its best performance when $\Rank$ is set to $\{r_f \}$, 
   then an ideal but realistic ranking structure $\Rank$ should be (i) large enough so that 
   it actually has a chance to contain $r_f$ and (ii) 
   as small as possible in order to obtain similar performances as 
   when $\Rank = \{r_f \}$.\!
   Even though these indications seem to serve opposite goals,
   we will however see how to carefully combine  them in the next section
   in order to derive an adaptive version of the algorithm which automatically
   selects the ranking structure $\Rank$ among a series
   of ranking structures of different complexities.

   
  \end{remark}

\section{ Adaptive algorithm and stopping time analysis}
\label{sec:adarank}

  We consider here the problem of optimizing an unknown function $f$ when
  no information is available on its induced ranking rule $r_f$.

\subsection{The \textbf{\textsc{AdaRankOpt}} algorithm}

  The {\sc AdaRankOpt} algorithm (shown in Figure \ref{fig:adarank}) is an extension of
  the {\sc RankOpt} algorithm which involves model selection.
  We consider a parameter $p \in (0,1)$ and
  a nested sequence of ranking structures $\{\Rank_k \}_{k \in \mathbb{N}^{\star}}$
  satisfying
  \begin{equation}
    \label{eq:nested}
    \mathcal{R}_1 \subset \mathcal{R}_2 \subset \dots \subset \Rank_{\infty}.
  \end{equation}

\RestyleAlgo{boxed}
\begin{figure}[t!]
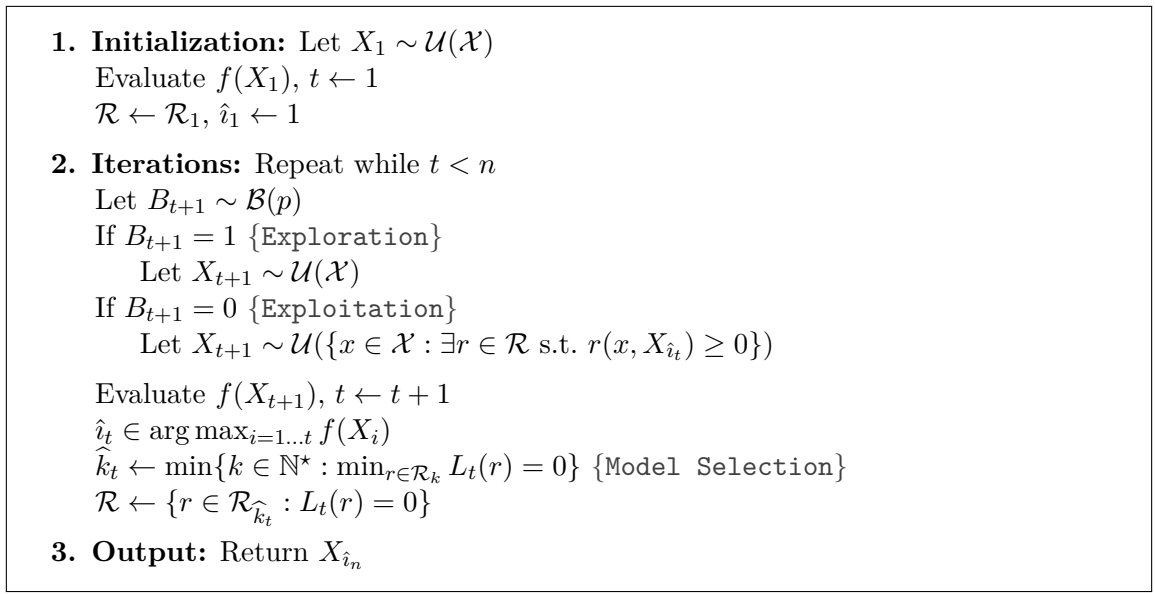

  \begin{algorithm}[H]
  \vspace{0.5em}
  {\bf 1. Initialization:} Let $X_1 \sim \mathcal{U}(\X)$\\
  \nonl \pushline  Evaluate $f(X_1)$, $t \leftarrow1$ \\
  \nonl $\Rank \leftarrow \Rank_1$, $\hatiun \leftarrow 1$\\
  \vspace{0.5em}
  \nl \Indm {\bf 2. Iterations:} Repeat while $t <n$ \\
  \pushline \nonl Let $B_{t+1} \sim \mathcal{B}(p)$\\
  \nonl If $B_{t+1}=1$  \textcolor{black!75}{\tt \{Exploration\}} \\
  \pushline \nonl Let $X_{t+1}\sim\mathcal{U}(\X)$ \\
  \nonl \Indm If $B_{t+1}=0$ \textcolor{black!75}{\tt \{Exploitation\}} \\
  \pushline \nonl Let $X_{t+1} \sim \mathcal{U}(\{x \in \X:
  \exists r \in \Rank \textrm{~s.t.~} r(x,X_{\hatit}) \geq 0\} )$\\
  \vspace{0.5em}
  \Indm Evaluate $f(X_{t+1})$, $t \leftarrow t+1$ \\
   $\hatit \in \arg\max_{i=1 \ldots t}f(X_i)$ \\
   $\widehat{k}_t \leftarrow \min \{k \in \mathbb{N}^{\star}:
   \min_{r \in \Rank_k}L_t(r)=0 \} $ \textcolor{black!75}{\tt \{Model Selection\}}\\
   $\Rank \leftarrow \{r \in \Rank_{\widehat{k}_t}: L_t(r)=0 \}$\\
  \vspace{0.5em}
  \Indm {\bf 3. Output:} Return $X_{\hatin}$
  \vspace{0.5em}
  \end{algorithm}
  \vspace{-0.5em}
  \caption{The
	  $\textsc{AdaRankOpt}(n,f, \X, p, \{\mathcal{R}_k \}_{k \in \mathbb{N}^{\star}})$
	  algorithm}
\label{fig:adarank}
\end{figure}

  \noindent The algorithm is initialized by
  evaluating the function at a point $X_1$ uniformly distributed over $\X$
  and by considering
  the smallest ranking structure $\Rank_1$ of the sequence.
  At each iteration $t <n$, a  Bernoulli random variable $B_{t+1}$
  of parameter $p$ is sampled.
  If $B_{t+1}=1$, the algorithm {\em explores} the space by evaluating
  the function at a point uniformly sampled over $\X$.
  If $B_{t+1}=0$, the algorithm {\em exploits} the previous evaluations
  by making an iteration of the {\sc RankOpt} algorithm
  with the smallest ranking structure $\Rank_{\hat{k}_t}$ of the sequence
  that probably contains the true ranking $r_f$.
  Once a new evaluation $f(X_{t+1})$ has been made,
  the index 
  $
   \widehat{k}_{t} := \min\{ k \in \mathbb{N}^{\star} : \min_{r \in \Rank_k} L_t(r)=0 \}
  $
  of the smallest ranking structure of the sequence 
  $\{\Rank_k \}_{k \in \mathbb{N}^{\star}}$
  which contains a ranking rule consistent with the sample is updated.
  Hence the parameter $p$ drives the trade-off between the exploitation phase and
  the exploration phase which prevents the algorithm from getting stuck in a local maximum.

\subsection{Theoretical properties of \textbf{\textsc{AdaRankOpt}}}

  We start by casting the consistency result for the algorithm.

  \begin{proposition} {\sc (Consistency)}
  \label{prop:cons_adarank}
    Fix any $p \in (0,1)$ and
    let $\{\mathcal{R}_k\}_{k \in \mathbb{N}^{\star}}$
    be any nested sequence of ranking structures.
    Then, under Condition \ref{cond:id}, 
    we have that
    \[
      f(X_{\hatin}) ~\! \! \overset{\mathbb{P}}{\longrightarrow}
      ~\! \! \! \max_{x \in \X}f(x)
    \]
    where $X_{\hatin}$ denotes the output of
    \textsl{\textsc{AdaRankOpt}}$(n, f, \X, p, \{\mathcal{R}_k \}_{k \in \mathbb{N}^{\star}})$.
  \end{proposition}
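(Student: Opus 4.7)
The plan is to show that the exploration steps alone are enough to force convergence; the model selection and exploitation steps only help, and can be safely ignored for the upper bound on $\max f - f(X_{\hatin})$.

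First I would decompose the $n$ evaluations according to the Bernoulli variables $B_2, \dots, B_n$ (noting that $X_1$ is always drawn uniformly as an initial exploration step). Let $E_n := 1 + \sum_{t=2}^n \indic{B_t = 1}$ denote the number of exploration steps. Since the $B_t$ are i.i.d.\ Bernoulli$(p)$ with fixed $p\in(0,1)$ and independent of all previous randomness by construction, $E_n$ stochastically dominates a Binomial$(n-1, p)$ variable, so $E_n \to +\infty$ almost surely (e.g.\ by Borel--Cantelli or the strong law of large numbers). Conditional on which iterations correspond to exploration, the exploration points are i.i.d.\ uniform on $\X$, because they are drawn from $\mathcal{U}(\X)$ independently of the evaluation history.

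Next I would fix an arbitrary $\varepsilon > 0$ and introduce the near-optimal super-level set $A_\varepsilon := \{x\in\X : f(x) \geq \max_{x\in\X} f(x) - \varepsilon\}$. Condition~\ref{cond:id} (identifiability) gives $\mu(A_\varepsilon) > 0$, so $q_\varepsilon := \mu(A_\varepsilon)/\mu(\X) \in (0,1]$. Denoting the exploration points by $X'_1, \dots, X'_{E_n}$, the probability that none of them lies in $A_\varepsilon$ equals $\E[(1-q_\varepsilon)^{E_n}]$. Since $E_n \to +\infty$ a.s.\ and $1-q_\varepsilon \in [0,1)$, dominated convergence yields $\E[(1-q_\varepsilon)^{E_n}] \to 0$.

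Finally I would connect this back to the algorithm's output. By definition $\hatin \in \arg\max_{i=1,\dots,n} f(X_i)$, and the exploration points are a subset of the evaluated points, so
\[
f(X_{\hatin}) \geq \max_{j=1,\dots,E_n} f(X'_j).
\]
If at least one exploration point $X'_j$ falls in $A_\varepsilon$, then $f(X_{\hatin}) \geq \max f - \varepsilon$. Combining with the trivial upper bound $f(X_{\hatin}) \leq \max_{x\in\X} f(x)$, we obtain
\[
\P\bigl(\bigl|\max_{x\in\X} f(x) - f(X_{\hatin})\bigr| > \varepsilon\bigr) \leq \E\bigl[(1-q_\varepsilon)^{E_n}\bigr] \xrightarrow[n\to\infty]{} 0,
\]
which is exactly convergence in probability.

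There is essentially no hard obstacle: the model-selection and exploitation mechanics of \textsc{AdaRankOpt} are irrelevant for consistency because the guaranteed fraction $p$ of pure uniform exploration already dominates a random search, and Condition~\ref{cond:id} prevents the maximum from being hidden in a zero-measure spike. The only minor subtlety worth being careful about is that $E_n$ is random, which is handled cleanly by conditioning on the Bernoulli sequence before applying the i.i.d.\ uniform argument.
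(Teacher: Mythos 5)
Your argument is correct, and it rests on the same mechanism as the paper's proof: only the guaranteed $p$-fraction of uniform exploration is used, and Condition~\ref{cond:id} makes the near-optimal set $A_\varepsilon$ have positive measure, so it is eventually hit. The execution differs in the bookkeeping. The paper proves by induction on $n$ the explicit nonasymptotic bound $\mathbb{P}\bigl(f(X_{\hatin})<\max_{x\in\X}f(x)-\varepsilon\bigr)\leq\bigl(1-p\,\mu(A_\varepsilon)/\mu(\X)\bigr)^{n}$, by lower-bounding at each step the conditional probability of landing in $A_\varepsilon$ given the history by $p\,\mu(A_\varepsilon)/\mu(\X)$ (i.e.\ conditioning along the filtration, one iteration at a time). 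You instead condition once on the whole Bernoulli sequence, observe that the exploration points are then i.i.d.\ uniform on $\X$, bound the miss probability by $\mathbb{E}\bigl[(1-q_\varepsilon)^{E_n}\bigr]$ with $q_\varepsilon=\mu(A_\varepsilon)/\mu(\X)$, and conclude by $E_n\to\infty$ and dominated convergence. Your route is softer (asymptotic) as written, but it loses nothing: computing the expectation via the binomial generating function gives $\mathbb{E}\bigl[(1-q_\varepsilon)^{E_n}\bigr]=(1-q_\varepsilon)\,(1-pq_\varepsilon)^{n-1}$, which recovers (in fact slightly sharpens) the paper's geometric rate, whereas the paper's induction delivers that rate directly without any limit argument. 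The one subtlety you flag---that $E_n$ is random and that the exploration points must be conditionally i.i.d.\ uniform given the coin flips---is indeed the only point requiring care, and it is justified by the process definition (each $B_{t+1}$ is independent of the past, and on $\{B_{t+1}=1\}$ the point $X_{t+1}$ is drawn from $\mathcal{U}(\X)$ independently of the history), so your proof stands as a valid alternative.
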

  The previous result reveals however that
  the adaptive version of the algorithm
  remains consistent over the same set of identifiable functions
  regardless 
  of its tuning
  ({\em e.g.}~the choice of the sequence of ranking structures and the value of $p$).
  We thus have to examine its nonasymptotic performance 
  in order to fully understand the impact of these parameters on its behavior.\\
  

  
  We begin the finite-time analysis
  by investigating
  the number of iterations required to identify a
  ranking structure which contains the ranking rule $r_f$
  induced by the unknown function.

  \begin{definition}{\sc (Stopping Time)}
    Let $k^{\star}=\min\{k \in \mathbb{N}^{\star}: r_f \in \Rank_k \}$
    be the index of the smallest ranking structure of the sequence which contains
    $r_f$ and let $\{ \widehat{k}_t \}_{t \in \mathbb{N}^{\star}}$
    be the sequence of random variables driving the model selection
    defined in the algorithm.
    We define the stopping time which corresponds to the number
    of iterations required to identify the index $k^{\star}$ as
    \[
      \tau_{k^{\star}} := \min \{ t \in \mathbb{N}^{\star}: \widehat{k}_t=k^{\star} \}.
    \]
  \end{definition}
  In order to bound $\tau_{k^{\star}}$,
  we need to control the complexity of the sequence of ranking structures
  $\{\Rank_k\}_{k \in \mathbb{N}^{\star}}$.
  Let us denote by $L(r)=\P(r(X,X') \neq  r_f(X,X')  )$ the
  true ranking loss where $(X,X')$ is a couple of independent random
  variables uniformly distributed over $\X$ and define the Rademacher average
  of a ranking structure $\Rank$ given $r_f$ as
  \[
    R_n(\Rank) :=  \sup_{r \in \Rank} \frac{1}{\lfloor n/2 \rfloor}
    \abs{ \sum_{i=1}^{\floor{n/2}} \epsilon_i \cdot
    \indic{
    r(X_i, X_{\left\lfloor n/2 \right\rfloor+i})
    ~\! \neq ~\!
    r_f(X_i, X_{\left\lfloor n/2 \right\rfloor+i}) }
    }
  \]
  where $\{ X_i\}_{i=1}^n$ are $n$ independent copies of $X \sim \mathcal{U}(\X)$
  and $\{ \epsilon_i \}^{\floor{n/2}}_{i=1}$ are $\floor{n/2}$
  independent Rademacher random variables
  ({\em i.e.,}~random symmetric sign variables), also independent
  of $\{X_i\}_{i=1}^n$.

  \begin{proposition} {\sc (Stopping Time Upper Bound)}
  \label{prop:model}
    Assume that  the index $ k^{\star}> 1$ is finite,
    assume that  $ \inf_{r \in \Rank_{k^{\star} \! -1}} L(r)>0 $ and
    assume that there exists a constant $K>0$ such that $\forall n \in \mathbb{N}^{\star}$,
    the Rademacher complexity of $\Rank_{k^{\star} \minus 1}$
    satisfies $\esp{R_n  (\Rank_{k^{\star} \minus1}) } \leq \sqrt{K/n}$.
    Then, for any $\delta \in (0,1)$, we have with probability at least $1-\delta$,
    \[
      \tau_{k^{\star}} \leq  \frac{10}{p} \cdot
      \left(  \frac{K+ \ln(2/\delta)}{\inf_{r \in \Rank_{k^{\star} \minus 1} }L(r)^2} \right).
    \]
  \end{proposition}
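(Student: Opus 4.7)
My starting observation is that by the nested structure $\Rank_1 \subset \Rank_2 \subset \cdots$ together with the fact that $L_t(r_f)=0$ always, one has $\widehat{k}_t=k^{\star}$ if and only if $\min_{r\in\Rank_{k^{\star}\minus 1}} L_t(r) > 0$. Hence
\[
\tau_{k^{\star}} = \min\{t \in \mathbb{N}^{\star}: \min_{r\in\Rank_{k^{\star}\minus 1}} L_t(r) > 0\}.
\]
So it suffices to show that the right-hand side is small with high probability under the stated assumptions. The main difficulty is that among the $X_1,\ldots,X_t$ produced by \textsc{AdaRankOpt}, only the exploration ones are i.i.d.\ uniform over $\X$; the exploitation samples have a complicated conditional distribution depending on the history.

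\medskip

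\noindent I would sidestep this by passing to the exploration sub-sample. Let $I_t := \{1\} \cup \{i\in\{2,\ldots,t\}: B_i=1\}$ index the exploration rounds (including the initial point), write $N_t := |I_t|$, and define $L^{(I_t)}(r)$ as the empirical ranking loss computed from $\{X_i\}_{i\in I_t}$. Since $L_t(r)$ averages non-negative indicators over all pairs, $L_t(r)=0$ forces $L^{(I_t)}(r)=0$, so
\[
\min_{r \in \Rank_{k^{\star}\minus 1}} L^{(I_t)}(r) > 0 \ \Longrightarrow\ \min_{r \in \Rank_{k^{\star}\minus 1}} L_t(r) > 0.
\]
Because the $B_i$ are i.i.d.\ Bernoulli($p$) and independent of the exploration samples, conditional on $N_t=N$ the points $\{X_i\}_{i\in I_t}$ are i.i.d.\ $\mathcal{U}(\X)$.

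\medskip

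\noindent The next step is a uniform deviation bound for the ranking loss, which is a $U$-statistic of order $2$. Via a standard symmetrization / bounded-difference argument (as in the reference \cite{clemenccon2008ranking}), with $N$ i.i.d.\ uniform samples one has with probability at least $1-\delta/2$,
\[
\sup_{r \in \Rank_{k^{\star}\minus 1}} \bigl|L^{(N)}(r) - L(r)\bigr| \;\leq\; 2\,\esp{R_N(\Rank_{k^{\star}\minus 1})} + \sqrt{\tfrac{2\ln(2/\delta)}{N}} \;\leq\; 2\sqrt{K/N} + \sqrt{\tfrac{2\ln(2/\delta)}{N}}.
\]
In particular, if $\min_{r\in\Rank_{k^{\star}\minus 1}} L^{(N)}(r)=0$ then $L^{\star} := \inf_{r \in \Rank_{k^{\star}\minus 1}} L(r)$ is bounded by the right-hand side, which after squaring gives $N \leq c\,(K+\ln(2/\delta))/(L^{\star})^2$ for a small universal constant $c$. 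Contrapositively, once the exploration count exceeds this threshold we get $\min_r L^{(N)}(r) > 0$ with probability $\geq 1-\delta/2$, hence $\min_r L_t(r) > 0$, hence $\tau_{k^{\star}}\leq t$.

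\medskip

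\noindent Finally, I would control $N_t$ by a multiplicative Chernoff bound: for $t \geq 2N/p$ we have $\P(N_t < N) \leq e^{-pt/8} \leq \delta/2$ once $t$ also satisfies $t \geq 8\ln(2/\delta)/p$. Choosing $t = \lceil (10/p)(K+\ln(2/\delta))/(L^{\star})^2\rceil$ absorbs both requirements (using $L^{\star} \leq 1$), and a union bound over the two failure events yields the claim. The delicate point in turning this sketch into a formal proof is getting the numerical constant $10$: it requires being careful with the $U$-statistic concentration (one typically uses the Hoeffding decomposition or randomization, as in \cite{clemenccon2008ranking}), and absorbing the Chernoff and Rademacher constants into a single clean expression. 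That bookkeeping is the main obstacle; the structural argument itself is straightforward.
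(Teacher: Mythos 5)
Your proposal is correct and follows essentially the same route as the paper's proof: the paper likewise reduces $\{\tau_{k^{\star}}\le t\}$ to $\min_{r\in\Rank_{k^{\star}\minus1}}L_t(r)>0$, lower-bounds the full empirical loss by the loss over the i.i.d.\ exploration sub-sample, applies the Cl\'emen\c{c}on--Lugosi--Vayatis uniform deviation bound together with the Rademacher assumption, controls the number of exploration samples with a Hoeffding (rather than your Chernoff) bound, and finishes with a union bound. The constant-$10$ bookkeeping you defer is exactly what the paper handles by its explicit choice of $n_{\delta}$ and $n'_{\delta}$, so no substantive idea is missing.
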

  In the situation described above, the smallest ranking structure
  of sequence which contains the true ranking rule
  can be identified in a finite number of iterations.
  One can then recover 
  an upper bound similar to the one of Theorem \ref{th:upperbound}
  where a ranking structure containing $r_f$
  is assumed to be known.

  \begin{theorem}{\sc (Upper Bound)}
  \label{th: upper_ada}
    Suppose that the assumptions of Proposition \ref{prop:model} hold ture.
    Then, under Condition \ref{cond:levelset},
    for any $\delta \in (0,1)$ and
    $n \in\mathbb{N}^{\star}$,
    we have with probability at least $1-\delta$,
    $$
    \norm{X_{\hatin} - x^{\star}  }_2 \leq
    C_{1} \cdot
    \left( \frac{11(K+\ln(4/\delta)) }{p\inf_{r \in \Rank_{k^{\star}\minus1}}L(r)^2 }
   \right)
    \cdot
    \left( \frac{\ln(2/\delta)}{ n } \right)^{\frac{1}{d(1+\alpha)^2}}
    $$
    \sloppy
    where 
    $C_{1}$ is the same constant as in Theorem \ref{th:upperbound}
    and $X_{\hatin}$ denotes the  output of
    \textsl{\textsc{AdaRankOpt}}$(n, f,\X,  p,  \{\mathcal{R}_k \}_{k \in \mathbb{N}^{\star}})$.
  \end{theorem}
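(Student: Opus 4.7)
The theorem combines two already-developed ingredients: the stopping-time bound from Proposition~\ref{prop:model}, and the finite-sample distance bound from Theorem~\ref{th:upperbound}. The idea is to split the $n$ evaluations into a ``learning'' phase of at most $\tau_{k^\star}$ steps used to identify the right ranking structure, and a remaining ``exploitation'' phase of length $M := n - \tau_{k^\star}$ during which \textsc{AdaRankOpt} behaves essentially like \textsc{RankOpt} equipped with a ranking structure containing $r_f$. A union bound over these two events then yields the stated guarantee.

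\textbf{Step 1: controlling the stopping time.} I would invoke Proposition~\ref{prop:model} with confidence $\delta/2$ to obtain, with probability at least $1-\delta/2$,
\[
\tau_{k^\star} \;\leq\; T \;:=\; \frac{10\,(K + \ln(4/\delta))}{p\,\inf_{r \in \Rank_{k^\star \minus 1}} L(r)^2}.
\]
Since the family $\{\widehat{k}_t\}_t$ is nondecreasing in $t$ (the set of empirically consistent ranking rules can only shrink with more data) and $r_f \in \Rank_{k^\star}$ guarantees $L_t(r_f)=0$ for every $t$, we have $\widehat{k}_t = k^\star$ for every $t \geq \tau_{k^\star}$. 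Consequently, from that step onward the active set $\Rank$ used in the exploitation rule is a subset of $\Rank_{k^\star}$ that always contains $r_f$.

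\textbf{Step 2: stochastic dominance over uniform sampling after $\tau_{k^\star}$.} For each iteration $t > \tau_{k^\star}$ and each threshold $y > f(X_{\hat{\imath}_{t-1}})$, I would show that
\[
\P\bigl(f(X_t) \geq y \,\big|\, \text{history}\bigr) \;\geq\; \frac{\mu(\{x\in\X : f(x)\geq y\})}{\mu(\X)}.
\]
This holds trivially for exploration steps. For exploitation steps, $X_t$ is uniform on $S_{t-1} := \{x: \exists r \in \Rank,\, r(x,X_{\hat{\imath}_{t-1}}) \geq 0\} \subseteq \X$, and since $r_f \in \Rank$ we have $\{f \geq y\} \subseteq \{f \geq f(X_{\hat{\imath}_{t-1}})\} \subseteq S_{t-1}$ whenever $y > f(X_{\hat{\imath}_{t-1}})$, so $\mu(S_{t-1}\cap\{f\geq y\})/\mu(S_{t-1}) = \mu(\{f\geq y\})/\mu(S_{t-1}) \geq \mu(\{f\geq y\})/\mu(\X)$. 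A straightforward induction over $t = \tau_{k^\star}+1,\dots,n$ (handling the case $y \leq f(X_{\hat\imath_{t-1}})$ by noting the event is already attained) then gives, conditionally on $E_1 = \{\tau_{k^\star}\leq T\}$,
\[
\P\bigl(f(X_{\hat\imath_n}) \geq y \,\big|\, E_1\bigr) \;\geq\; \P\bigl(\max\nolimits_{i=1,\dots,M} f(X'_i) \geq y\bigr),
\]
where the $X'_i$ are i.i.d.\ uniform on $\X$ and $M = n - \tau_{k^\star} \geq n - T$. This mirrors Proposition~\ref{th:fasterprs}.

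\textbf{Step 3: translating to a distance bound.} Replicating the argument of Theorem~\ref{th:upperbound} (covering the neighborhood of $x^\star$ with a ball whose radius is controlled by Condition~\ref{cond:levelset}, and applying the coupon-collector-type bound $(1-p_y)^M \leq e^{-p_y M}$ to the stochastic dominance above), one obtains, with probability at least $1-\delta/2$ conditionally on $E_1$,
\[
\|x^\star - X_{\hat\imath_n}\|_2 \;\leq\; C_1 \left(\frac{\ln(2/\delta)}{M}\right)^{\frac{1}{d(1+\alpha)^2}}.
\]
A union bound over the event of Step~1 and this event yields the same inequality with probability $\geq 1-\delta$.

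\textbf{Step 4: rewriting with $n$ instead of $M$.} The last step is cosmetic: using $M \geq n - T$ and the fact that the exponent $1/(d(1+\alpha)^2) \leq 1$, one splits into cases depending on whether $n \geq \tfrac{11}{10}T$ or not. In the first case $M \geq n/11$ and the multiplicative loss $11^{1/(d(1+\alpha)^2)} \leq 11 \leq F$, where $F = 11(K+\ln(4/\delta))/(p\,\inf_r L(r)^2)$. In the second case, $F(\ln(2/\delta)/n)^{1/(d(1+\alpha)^2)}$ dominates $\diam{\X}$ up to $C_1$, making the bound vacuous and hence trivially true. This produces the stated form of the bound.

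\textbf{Main obstacle.} The substantive difficulty is the coupling argument in Step~2: one must carefully argue that introducing exploration Bernoulli trials and replacing \textsc{RankOpt}'s rejection sampling by direct uniform sampling on $S_{t-1}$ does not destroy stochastic dominance over pure random search. Once that is in place, Steps~1 and~3 are straight applications of earlier results, and Step~4 is routine algebra to absorb the offset $T$ into the multiplicative factor~$F$.
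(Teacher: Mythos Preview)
Your proposal is correct and follows essentially the same route as the paper: split $\delta$ in half, invoke Proposition~\ref{prop:model} to control $\tau_{k^\star}$, rerun the argument of Theorem~\ref{th:upperbound} on the remaining $n-\tau_{k^\star}$ iterations, and then absorb the offset into the multiplicative factor via the case split on $n$ versus $n_{\delta/2}$. Your Step~2 is actually more explicit than the paper's (which simply asserts that one can ``reproduce the same steps as in Theorem~\ref{th:upperbound}'s proof'' without spelling out why the exploration/exploitation mixture preserves the stochastic dominance), but the underlying strategy and the final algebra are identical.
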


  \noindent The following remarks provide some insights on the different 
  conditions and quantities
  involved in the  theorem.

  \begin{remark}\sloppy {\sc (On the complexity assumption)}
    As pointed out in \cite{clemencccon2011u} (see Remark 2 therein),
    standard VC-type arguments can be used in order to bound 
    $\mathbb{E}[R_n(\Rank_{k^{\star}\minus1})]$.
    More specifically, if the set of functions
    $\mathcal{F}_{k^{\star}\minus1}= \{(x,x') \in \X^2 \mapsto
    \indic{r(x,x')\neq r_f(x,x')}~\!|~\! r \in \Rank_{k^{\star}\minus1}) \}$
    is a VC major class with finite VC dimension $V$, then
    $\esp{R_n(\Rank_{k^{\star}\minus1}))} \leq c \cdot \sqrt{V/n}$
    for a universal constant $c>0$.
    This covers, in particular,
    the classes of
    polynomial and sinusoidal ranking rules
    of any degree $k^{\star}>1$.
  \end{remark}

  \begin{remark}
  \sloppy
   {\sc (On the infimum ranking loss)}
   In order to grasp the meaning of 
   the term
   $\inf_{r \in \Rank_{k^{\star}\minus1}} L(r)$, 
   observe first that since the function 
   $\rho : (r,r')
   \mapsto \P_{X,X'\sim \mathcal{U}(\X)}(r(X,X') \neq r'(X,X'))$
   defines a metric over the product space $\Rank_{\infty}\times \Rank_{\infty}$,
   then  the infimum ranking loss
   $\inf_{r \in \Rank_{k^{\star}\minus1}}L(r) 
   = \inf_{r \in \Rank_{k^{\star}\minus1}}\rho(r,r_f)$
   can  be  interpreted as a measure of the distance between the ranking rule $r_f$ and 
   the ranking structure $\Rank_{k^{\star}\minus1}$.
   As a consequence of this observation, we point out 
   that the condition $\inf_{r \in \Rank_{k^{\star}\minus1}}L(r)>0$ 
   can then be easily checked to be
   fulfilled for the sequences of polynomial and sinusoidal ranking rules
   whenever
   $r_f \in \Rank_{k^{\star}}$ for some $k^{\star}>1$ by combining 
   their parametric representation
   with the definition of the metric $\rho(\cdot, \cdot)$.

  \end{remark}

  \subsection{Comparison with previous works}
  \label{sec:comparison}
  
  Our interest here is to compare the theoretical results 
  obtained in this paper to existing results of the global optimization literature.
  We consider three different types of algorithms.\\

    \noindent {\bf DIRECT}  {\bf and}
    {\bf SOO} (\cite{jones1993lipschitzian} and \cite{munosmono}).
    These algorithms use a splitting technique of the search space and
    sequentially evaluate the function on subdivisions of the space
    that have recorded the highest evaluation among
    all the subdivisions of similar size.
    To the best of our knowledge, there is no finite-time analysis of the
    DIRECT algorithm (only the consistency was proven by \cite{finkel2004convergence}).
    However, \cite{munosmono} identified some local smoothness conditions allowing 
    to derive a finite-time analysis of the algorithms.
    Precisely, assuming
    there exists $ x^{\star}\in\X$,
    $\eta, c_1, c_2, \nu >0$ and $\alpha \geq 0$
    such that $\forall x \in B(x^{\star}, \eta)$, 
    $
     c_1 \norm{x^{\star} - x}^{\nu}
    \leq f(x^{\star}) - f(x) \leq c_2 \norm{x^{\star} - x}^{\nu/(1+\alpha)}
    $
    for some norm  $\norm{\cdot}$ ({\it e.g.,} $\ell_2$, $\ell_{\infty}$),
    the author reports for the SOO algorithm 
     a polynomial upper bound on the difference between the maximum and its estimation
    of $\max_{x \in \X}f(x) - f(X_{\hatin}) = {\it O}(n^{- \nu/\alpha d})$
    when $\alpha > 0$ 
    and an exponential decay of ${\it O}(e^{-c \nu \sqrt{n}})$
    for some $c>0$ when $\alpha = 0$.
    As a comparison, 
    we obtain for AdaRankOpt a polynomial bound of
    $\max_{x \in \X}f(x) - f(X_{\hatin})= {\it O}_{\P}(n^{- \nu/ d})$ for all $\alpha \geq 0$
    by assuming that both the conditions of Proposition \ref{prop:model} 
    and the local smoothness condition are fulfilled.
    Hence the bound we obtain turns out to be better 
    when $\alpha> 1$ and worse for $\alpha<1$,
    which is consistent with the fact that
    while the asymmetry in the smoothness of the function around its maximum (captured here 
    by $\alpha$) strongly impacts the performance of SOO in both ways, it does not impact
    AdaRankOpt which remains invariant to the variations of the local smoothness
    of the unknown function around its maximum.\\

    \noindent {\bf Evolution Strategies} (\cite{eigen1973ingo}).
    We now consider the class of $(\mu,\lambda)$-Evolution Strategies
    which use mutation, recombination, and selection
    in order to iteratively evolve the set of evaluation points.
    As far as we know, no consistency results or
    generic upper bounds have been proven for these algorithms.
    However, \cite{teytaud2008lower} were able to
    derive exponential lower bounds for several extensions of the $(\mu, \lambda)$-ES
    using the VC-dimension $V$ of the level sets of the unknown function. 
    Precisely, they  showed that if $V$ is finite, then
    $\norm{X_{\hatin}-x^{\star}}_2=\Omega_{\P}(e^{-c(V)n/d})$ 
    where $c(V)$ is a constant that depends on both the extension under consideration and $V$.
    Moreover \cite{auger2005convergence} also analyzed the convergence of
    the $(1,\lambda)$-SA-ES algorithm on 
    the simple sphere function $f(x) = -\norm{x}^2_2$ 
    and proved specific conditions on the parameters of the algorithm
    in order to ensure that $\ln(\norm{X_{\hatin}}_2)/n \overset{a.s.}{\longrightarrow} c$ 
    for some constant $c\in\R$.   
    However, 
    since the sign of the limit of
    $\ln(\norm{X_{\hatin}}_2)/n$  remains unknown,
    this result only proves the exponential convergence or divergence
    of the algorithm and
    can therefore not be cast into our framework.
    More specifically, we point out 
    all the results reported in those works can
    not be directly compared to the
    one obtained in this paper, as they are opposed by nature.
    Indeed, recall that while we were able to derive 
    (i) a generic upper bound for AdaRankOpt and 
    (ii) a lower bound for its nonadaptive version,
    they obtained on the contrary (i) generic lower bounds for various extensions 
    of the $(\mu, \lambda)$-ES and (ii) an asymptotic upper bound which might be
    only valid for a specific version of the algorithm in the case where $f$ is the
    sphere function.\\

    \sloppy
    \noindent {\bf Expected Improvement Strategy} (\cite{movckus1975bayesian}).  
    The last algorithm we consider is 
    a Bayesian optimization strategy which selects
    at each step $t\geq2$ an evaluation point $x_{t+1} \in \arg\max_{x \in \X} 
    \mathbb{E}_{f \sim \pi}[ \max(f(x) - \max_{i=1\dots t}f(x_i),0)| 
    \{(x_i, f(x_i)) \}_{i=1}^t ]$
    where $f$ is assumed to be drawn from a law $\pi$ set as input.
    \cite{vazquez2010convergence}  showed that when $\pi$ is a fixed Gaussian process 
    prior with a finite smoothness,
    the EI strategy converges on the maximum of any function $f$
    of the reproducing kernel Hilbert space $\mathcal{H}$
    canonically attached to $\pi$.
    Moreover
    \cite{bull2011convergence} went on to prove that an
    adaptive version of the EI algorithm they define
    could achieve a near-optimal polynomial bound of 
    $\max_{x \in \X}f(x) - f(X_{\hatin}) = {\it O}_{\P}(n^{-\nu/d})$
    for all $f \in \mathcal{H}$ when 
    $\pi$ is a prior of smoothness $\nu$.
    As a comparison, 
    considering that both the conditions of Proposition \ref{prop:model} are fulfilled
    and that $f \in \mathcal{H}$,
    we obtain for AdaRankOpt the exact same polynomial bound of
    $\max_{x \in \X}f(x) - f(X_{\hatin})  = 
    {\it O}_{\P}(n^{-\nu/d})$. 
    But we point out that this similarity 
    simply comes from the fact that the author also used
    a very similar---and potentially suboptimal---covering argument
    of the search space in order to prove their result.\\

  \noindent These comparisons suggest that although the upper bounds provided
  in this paper
  are generic, 
  they could certainly be
  improved in order to obtain the exponentially decreasing loss 
  exhibited in 
  Theorem \ref{th:lowerbound} and 
  observed in \cite{munosmono}.
  Nonetheless, as detailed in Remark \ref{remark:gap},
  such an analysis would require a refinement
  of the characterization of a real-valued function 
  with regards to a ranking structure and is therefore left as future work.

\section{Implementation and computational aspects}
\label{sec:equivalence}

  In this section, we discuss some technical aspects involved in the
  practical implementation of {\sc AdaRankOpt}.
  In particular, we provide some equivalences
  that can be used in order to implement the algorithm 
  for the classes of ranking structures introduced in Section \ref{sec:setup}
  without explicitly maintaining the active subset of
  consistent ranking rules.

\subsection{Notations}

  We collect here the specific notations used in this section.
  For any sample $\{(X_i, f(X_i)) \}_{i=1}^{t+1}$ of $t+1$ 
  function evaluations with distinct values ({\it i.e.}~any 
  sample such that $f(X_i) \neq f(X_j)$ for all $i\neq j$),
  we denote by {\small ${(1), (2),\dots, (t+1)}$} the indexes corresponding
  to the strictly increasing reordering:
  $
    f(X_{(1)}) < f(  X_{(2)} )  < \dots <f(X_{(t+1)}).
  $
  %
  For any dimension $d\geq 1$, we respectively denote by 
  $
   \vec{0}=(0,\dots,0) \in \R^{d}$ and by
   $\vec{1}=(1,\dots,1) \in \R^{d}$
  the zero and the unit vector of $\R^{d}$.
  The notation $x\succeq x'$ corresponds to the 
  component-wise inequality ({\it i.e.~}$
  x \succeq x' \in \R^d
  ~\Leftrightarrow~ \forall i \in \{1 \ldots d \},~x_i \geq x'_i
  $) and
  we denote by ConvHull$\{x_i\}_{i=1}^t$ the convex hull of any set
  $\{x_i\}_{i=1}^t$ of $t\geq 1$ points in $\R^d$.
  For any degree $k\geq 1$, 
  the function that maps $\R^d$
  into the corresponding polynomial feature space of degree $k$
  is denoted by
  $\Phi_{k}: \R^d \rightarrow \R^{\textrm{dim}(\Phi_{k})}$
   where
  $\textrm{dim}({\Phi_{k}})=\binom{k+d}{d}-1$. 
  For instance, in the case where $k=d=2$,
  we have that 
  $\Phi_2(x)= (x_1, x_2, x_1 x_2, x_1^2, x_2^2 ) \in \R^5$
  for all $x=(x_1,x_2)\in\R^2$.
  Finally, we denote by $\normalfont{\text{M}^{\Phi_k}_{t}}
  =[C_1 \mid \cdots \mid C_t]$ the
  $(\normalfont{\text{dim}}({\Phi_{\normalfont{k}}}),t)$-matrix
  with its $i$-th  column $C_i$ equal to
  $( \Phi_{\normalfont{k}}(X_{(i+1)}) 
  - \Phi_{\normalfont{k}}(X_{(i)}) )^\mathsf{T}$ and 
  we denote for all $i\leq t+1$
  by $~\normalfont{\text{M}}_i=[ C_1 \mid \cdots \mid C_i]$ the $(d,i)$-matrix
  where its $j$-th column $C_j$ is equal to $X_{(t+2-j)}^{\mathsf{T}}$.
  
\subsection{General ranking structures}

  Suppose now that we have collected a sample
  $\{(X_i, f(X_i)) \}_{i=1}^t$ of $t\geq2$ observations generated
  by AdaRankOpt tuned with any nested sequence of ranking structures
  $\{\Rank_k \}_{k \in \mathbb{N}^{\star}}$.
  We address here the questions of 
  {\bf (i)} sampling the next evaluation point $X_{t+1}$
  and {\bf (ii)} updating the index $\widehat{k}_{t+1}$ of the model selection
  once $f(X_{t+1})$ has been evaluated.\\
  
  \noindent {\bf (i)} We first consider the problem of 
  sampling the next evaluation point $X_{t+1} \sim \mathcal{U}(\X_t)$
   over the non-trivial subset
  $\X_t := \{x \in \X: \exists r \in \Rank_{\widehat{k}_t}$ $
  \textrm{~such that~}L_t(r)=0 \textrm{~and~}r(x,X_{\hatit}) \geq 0 \}$.
  To do so, we propose to use the rejection which consists in
  sampling $X' \sim \mathcal{U}(\X)$ until $X' \in \X_t$.
  We thus need to set up a procedure 
  that tests if any point $X' \in \X$
  belongs to $\X_t$.
  By definition of $\X_t$, we know that $X' \in \X_t$ if and only if
  there exists a ranking rule $r$ in $\Rank_{\widehat{k}_t}$
  which satisfies $L_t(r)=0$ and $r(X', X_{\hatit}) = 0$ {\it or} $1$.
  Therefore, we obtain by
  rewriting the previous statement in terms of minimal error
  that  $X' \in \X_t$ if and only if: 
    \begin{itemize}
    \item[-] either $\min_{r \in \Rank_{\widehat{k}_t}} L_{t+1}(r) = 0$  where
    the empirical ranking loss is taken over
    the sample $\{(X_i, f(X_i)) \}_{i=1}^t \cup  (X', f(X_{\hatit}) )$
    (case $r(X', X_{\hatit}) = 0$);
    \item[-] or $\min_{r \in \Rank_{\widehat{k}_t}} L_{t+1}(r) = 0$ where $L_{t+1}(\cdot)$
    is taken over the sample $\{(X_i, f(X_i)) \}_{i=1}^t \cup  (X', f(X_{\hatit})+c )$
    where $c>0$ is any positive constant
    (case $r(X',X_{\hatit}) = 1$).
  \end{itemize}
   Hence $X_{t+1}$ can be generated by sequentially sampling $X' \sim \mathcal{U}(\X)$
   until there exists a ranking rule $r$ in $\Rank_{\widehat{k}_t}$ that perfectly ranks 
   the initial set of $t$ observations 
   where we added
   a supplementary ghost evaluation
   $\{\{(X_i, f(X_i)) \}_{i=1}^{t} \cup (X',f(X_{\hatin})+c )\}$ for some $c\geq 0$.\\
 
  \noindent {\bf (ii)} We now consider the problem of updating the index $\widehat{k}_{t+1}$
  of the model selection once $f(X_{t+1})$ has been evaluated.
  Since $\{ \Rank_k \}_{k \in \mathbb{N}^{\star}}$ 
  forms, 
  by assumption,
  a nested sequence, 
  it necessarily follows that
  the sequence of indexes
  $\{\widehat{k}_t \}_{t \in \mathbb{N}^{\star}}$ is also increasing.
  One can thus write that
  $\textstyle{
  \widehat{k}_{t+1}= \widehat{k}_t  + 
  \min \{ i \in \mathbb{N}^{\star}: \min_{r \in \Rank_{\widehat{k}_t + i}}L_{t+1}(r) =0 \}
  }
  $
  where the empirical ranking loss 
  $L_{t+1}(\cdot)$
  is computed over the sample $\{( X_i, f(X_i) )\}_{i=1}^{t+1}$.
  Hence, the index $\widehat{k}_{t+1}$ can be updated by
  sequentially testing  if
  $\min_{r \in \Rank_{\widehat{k}_t+i}}L_{t+1}(r) =0$ for $i=0,1,2,\ldots$\\

   \noindent As shown above, both the steps {\bf(i)} and {\bf(ii)}
  can be done using  a single generic procedure that determines if
  $
   \min_{r \in \Rank_k} L_{t+1}(r) = 0
   $
  holds true for any ranking structure $\Rank_k$ of the sequence with $k\geq1$
  and where the empirical ranking loss $L_{t+1}(\cdot)$ is computed over
  any sample of $t+1$ function evaluations.
  In the next subsections, we provide some equivalences that can be used in order
  to design such a procedure for the classes of
  ranking structures introduced in Section \ref{sec:setup}.
  For simplicity,
  we will consider in the sequel that all the function evaluations of the sample
  have distinct values.

\subsection{Polynomial and sinusoidal ranking rules}

  We consider here the sequence of polynomial ranking rules
  $\{ \Rank_{\mathcal{P}_{k}} \}_{k \in \mathbb{N}^{\star}}$
  and we recall that $\Phi_k(\cdot)$ denotes the function that maps
  $\R^d$ into the corresponding polynomial feature space of degree $k$.
  However, we point out that the results stated below can easily be adapted for the 
  sequence of sinusoidal ranking rules by considering the adequate feature space.
  The first result we establish relates
  the existence of a consistent polynomial ranking rule to
  the linear separability of a sample-dependent set of points
  which belong to 
  the corresponding feature space.

  \begin{proposition}{\sc (Separability)}
  \label{prop:binary}
  \sloppy
    Let 
    $\{(X_i, f(X_i)) \}_{i=1}^{t+1}$ 
    be any sample of $t+1$  function evaluations with distinct values.
    Then, there exists a polynomial ranking rule of degree $\normalfont{k} \geq 1$
    that perfectly ranks the sample {\normalfont(}{\it i.e.}~$\min_{r \in
    \Rank_{\mathcal{P}_k}}L_{t+1}(r)=0${\normalfont)}
    if and only if there exists
    an axis $\omega \in \R^{\normalfont{\text{dim}}({\Phi_{k}})}$
    satisfying:
      \[
      \inner{\omega,\Phi_{\normalfont{k}}(X_{(i+1)}) 
      - \Phi_{\normalfont{k}}(X_{(i)})}>0,
      \textrm{~}\forall i \in \{1 \ldots t \}.
      \]
     where {\small$(1),(2),\dots(t+1)$} denote the indexes of the strictly increasing
     reordering of the sample.
  \end{proposition}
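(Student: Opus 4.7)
The plan is to exploit two simple facts: first, that a polynomial ranking rule only depends on the non-constant part of the underlying polynomial (since constant terms cancel in $h(x)-h(x')$), and second, that agreement of an induced ranking rule with $r_f$ on all pairs of the sample is equivalent to agreement on the $t$ consecutive pairs of the reordered sample, by transitivity of the strict order on $\R$.

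First I would unpack the definition. Any polynomial $h \in \mathcal{P}_k(\X,\R)$ may be written as $h(x) = c + \inner{\omega, \Phi_k(x)}$ for some $c \in \R$ and $\omega \in \R^{\dim(\Phi_k)}$, since $\Phi_k$ precisely enumerates the non-constant monomials of degree at most $k$. Hence for any $x,x' \in \X$,
\[
h(x) - h(x') = \inner{\omega, \Phi_k(x) - \Phi_k(x')},
\]
and therefore $r_h(x,x') = \sgn(\inner{\omega, \Phi_k(x) - \Phi_k(x')})$. In particular, the set of polynomial ranking rules of degree $k$ is in correspondence with the set of vectors $\omega \in \R^{\dim(\Phi_k)}$ through this formula.

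For the forward implication, I would assume there exists $r \in \Rank_{\mathcal{P}_k}$ with $L_{t+1}(r)=0$. Writing $r = r_h$ with $h(x)=c+\inner{\omega,\Phi_k(x)}$, the condition $L_{t+1}(r_h)=0$ means that $r_h$ agrees with $r_f$ on every sample pair. Since the $f(X_i)$ are pairwise distinct and the reordering satisfies $f(X_{(1)}) < \dots < f(X_{(t+1)})$, agreement implies $h(X_{(1)}) < \dots < h(X_{(t+1)})$. Taking consecutive differences and using the expression above immediately yields $\inner{\omega, \Phi_k(X_{(i+1)}) - \Phi_k(X_{(i)})} > 0$ for each $i \in \{1,\dots,t\}$.

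For the converse, given such an $\omega$, I would define $h(x) := \inner{\omega, \Phi_k(x)}$, which lies in $\mathcal{P}_k(\X,\R)$. The hypothesis yields $h(X_{(i+1)}) > h(X_{(i)})$ for all $i$, and by transitivity (telescoping) one has $h(X_{(i)}) < h(X_{(j)})$ whenever $i < j$. Combined with the fact that $f(X_{(i)}) < f(X_{(j)})$ for $i<j$ by construction, this shows that $r_h(X_i,X_j) = r_f(X_i,X_j)$ for every ordered pair of indices, so $L_{t+1}(r_h)=0$ and hence the minimum over $\Rank_{\mathcal{P}_k}$ vanishes. No step is truly hard here; the only subtle point — and what really makes the statement useful algorithmically — is the reduction from the $O(t^2)$ pairwise constraints defining $L_{t+1}(r)=0$ down to the $t$ consecutive linear inequalities in the feature space, which is purely a consequence of the transitivity of $<$.
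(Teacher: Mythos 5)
Your proof is correct and follows essentially the same route as the paper's: in both directions you represent a degree-$k$ polynomial as a constant plus $\inner{\omega,\Phi_k(\cdot)}$, so that consistency with the sample translates into the $t$ consecutive linear inequalities in feature space. The only (welcome) difference is that you make explicit the transitivity/telescoping step showing agreement on consecutive pairs implies agreement on all pairs, which the paper's converse leaves implicit.
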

  Unfortunately, the equivalence exhibited in Proposition \ref{prop:binary}
  might not be always convenient in practice 
  since the computational cost of estimating such an axis $\omega \in \R^{\text{dim}(\Phi_k)}$ 
  can be prohibitive
  when the dimensionality of the feature space dim$(\Phi_k)$ is large.
  Nonetheless, as the previous result only makes the link with
  the {\em existence} of a separating axis,
  one can then use the following lemma presented in the generic 
  framework of binary classification 
  and illustrated in Figure \ref{fig:separability}
  in order  get an equivalence generally easier to check in practice.

  \begin{lemma}
  \label{lem:zero}
    Let $\{ (x_i, y_i)\}_{i=1}^t$ be any set of binary classification
    samples where $(x_i, y_i)\in \R^d \times \{-1, +1\}$.
    Then, there exists a separating axis $\omega \in \R^d$
    satisfying
    \[
      y_i \cdot \inner{\omega, x_i} >0, \textrm{~} \forall i \in \{1 \dots t \}
    \]
    if and only if
    \[
      \vec{0} \notin \textsc{ConvHull}\{y_i \cdot x_i \}_{i=1}^t.
    \]
\end{lemma}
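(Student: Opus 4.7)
The strategy is to recognize this as a standard consequence of the separating hyperplane theorem (essentially Gordan's theorem of the alternative). The first move is a simple change of variables: let $z_i := y_i \cdot x_i \in \R^d$, so that the condition $y_i \cdot \inner{\omega, x_i} > 0$ becomes $\inner{\omega, z_i} > 0$. The lemma is thus equivalent to showing that there exists $\omega \in \R^d$ with $\inner{\omega, z_i} > 0$ for all $i \in \{1,\dots,t\}$ if and only if $\vec{0} \notin \textsc{ConvHull}\{z_i\}_{i=1}^t$.

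For the easy direction ($\Rightarrow$), I would argue by contrapositive. Suppose $\vec{0} \in \textsc{ConvHull}\{z_i\}_{i=1}^t$, so that there exist nonnegative weights $\lambda_1, \dots, \lambda_t \geq 0$ with $\sum_{i=1}^t \lambda_i = 1$ and $\sum_{i=1}^t \lambda_i z_i = \vec{0}$. Then for any $\omega \in \R^d$,
\[
0 = \inner{\omega, \vec{0}} = \sum_{i=1}^t \lambda_i \inner{\omega, z_i},
\]
and since the $\lambda_i$ are nonnegative, not all zero, at least one term $\inner{\omega, z_i}$ must be nonpositive. This rules out the existence of a $\omega$ making all inner products strictly positive.

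For the converse direction ($\Leftarrow$), the plan is to invoke the strict form of the separating hyperplane theorem. The set $C := \textsc{ConvHull}\{z_i\}_{i=1}^t$ is convex and, being the convex hull of finitely many points, is compact. The singleton $\{\vec{0}\}$ is also compact and convex, and by hypothesis the two sets are disjoint. The strict separation theorem then provides $\omega \in \R^d$ and $c \in \R$ such that
\[
\inner{\omega, \vec{0}} < c < \inner{\omega, z} \quad \text{for all } z \in C.
\]
In particular $c > 0$, so $\inner{\omega, z_i} > c > 0$ for each $i$, giving the desired separating axis after undoing the substitution.

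The result is essentially classical, so there is no real obstacle; the only subtlety worth flagging is ensuring one uses the \emph{strict} separation theorem (which requires compactness of at least one of the two sets) rather than the weak version, since the lemma demands strict inequalities $y_i \cdot \inner{\omega, x_i} > 0$ rather than $\geq 0$. Compactness of $\textsc{ConvHull}\{z_i\}_{i=1}^t$ as the convex hull of finitely many points is exactly what delivers this.
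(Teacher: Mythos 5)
Your proof is correct. The forward direction is exactly the paper's argument (a convex combination of strictly positive inner products cannot vanish), and the reverse direction differs only in packaging: you invoke the strict separating hyperplane theorem between the compact set $\textsc{ConvHull}\{z_i\}_{i=1}^t$ and the point $\vec{0}$, whereas the paper proves the needed separation from scratch. Concretely, the paper takes $x_d$ to be the minimal-norm point of the convex hull (which exists and has norm $d_{\min}>0$ since the hull is compact and excludes the origin) and shows by a contradiction argument---if some $x$ in the hull had $\inner{x,x_d}<d_{\min}^2$, the segment from $x$ to $x_d$ would enter the open ball $B(\vec{0},d_{\min})$, contradicting minimality---that $\inner{x_d,x}\geq d_{\min}^2>0$ for every $x$ in the hull, so $\omega=x_d$ is an explicit separating axis. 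Your route is shorter and rests on a standard theorem (your flag about needing the \emph{strict} version, justified by compactness of the hull, is the right subtlety to note); the paper's route buys self-containedness and an explicit construction of $\omega$, which is essentially a proof of that separation theorem in this finite-point special case. Either argument is complete.
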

  \begin{figure}[!t]
    \begin{center}$
	\begin{array}{lcccr}
	\includegraphics[width=30mm]{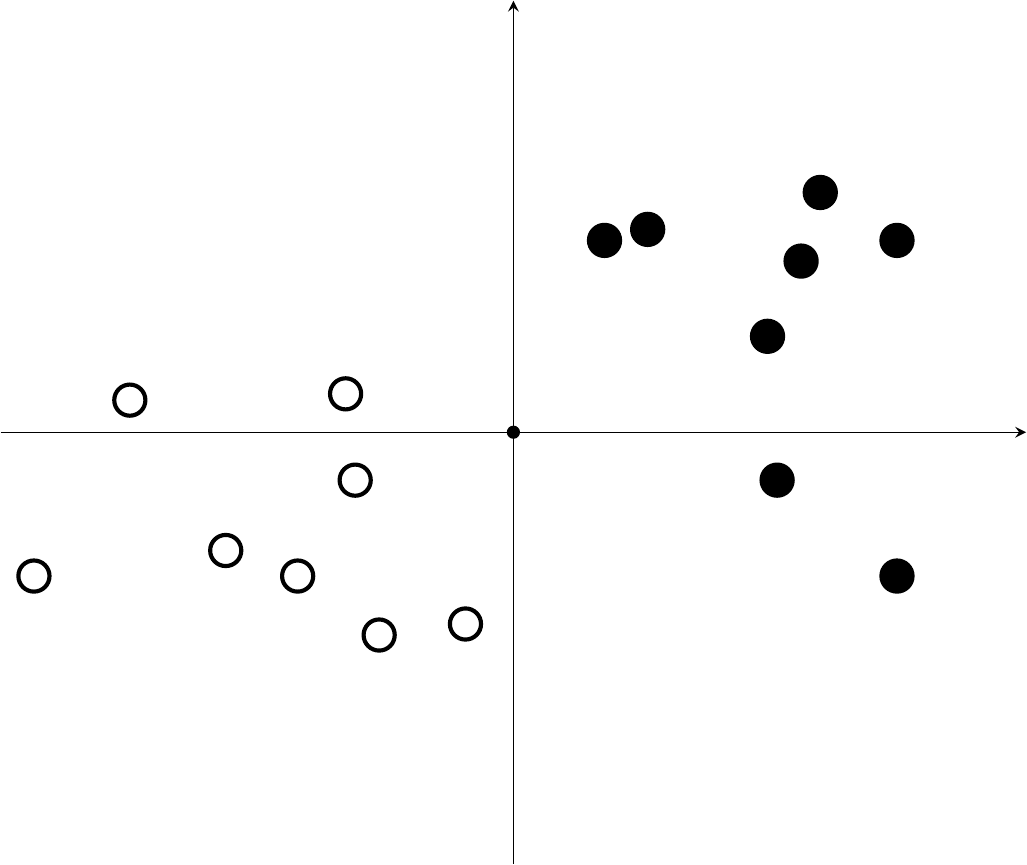}&~~~~~&
	\includegraphics[width=30mm]{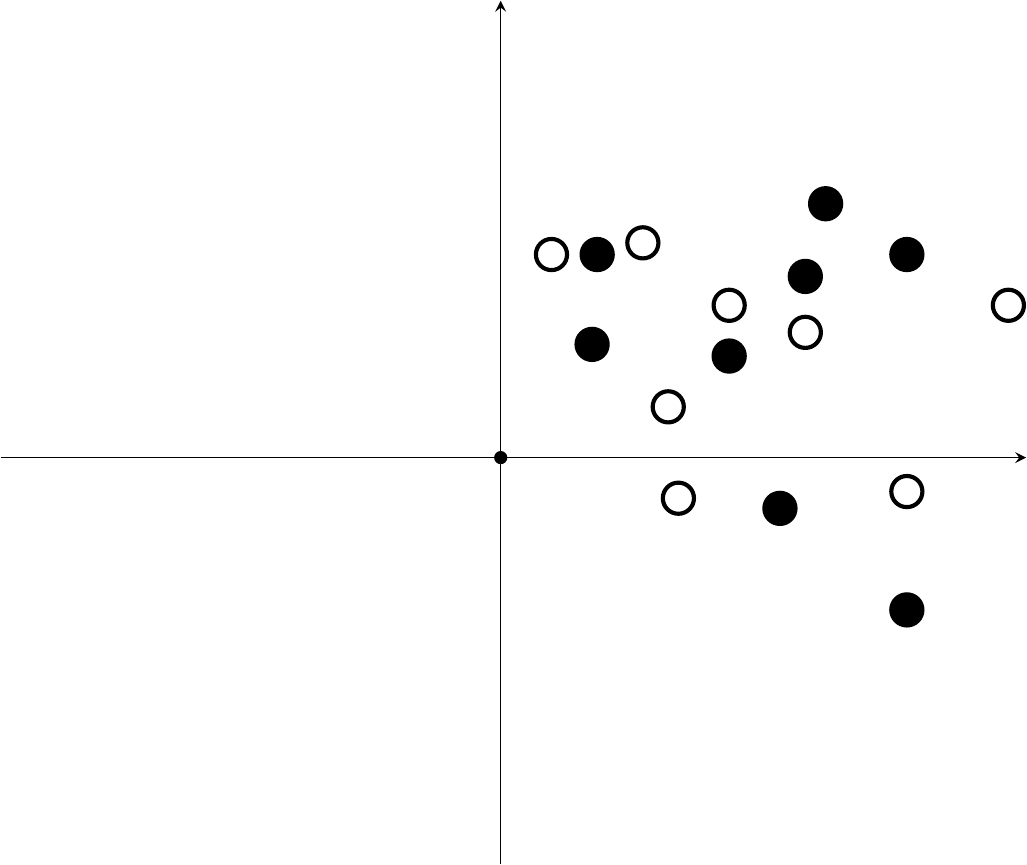}&~~~~~&
	\includegraphics[width=30mm]{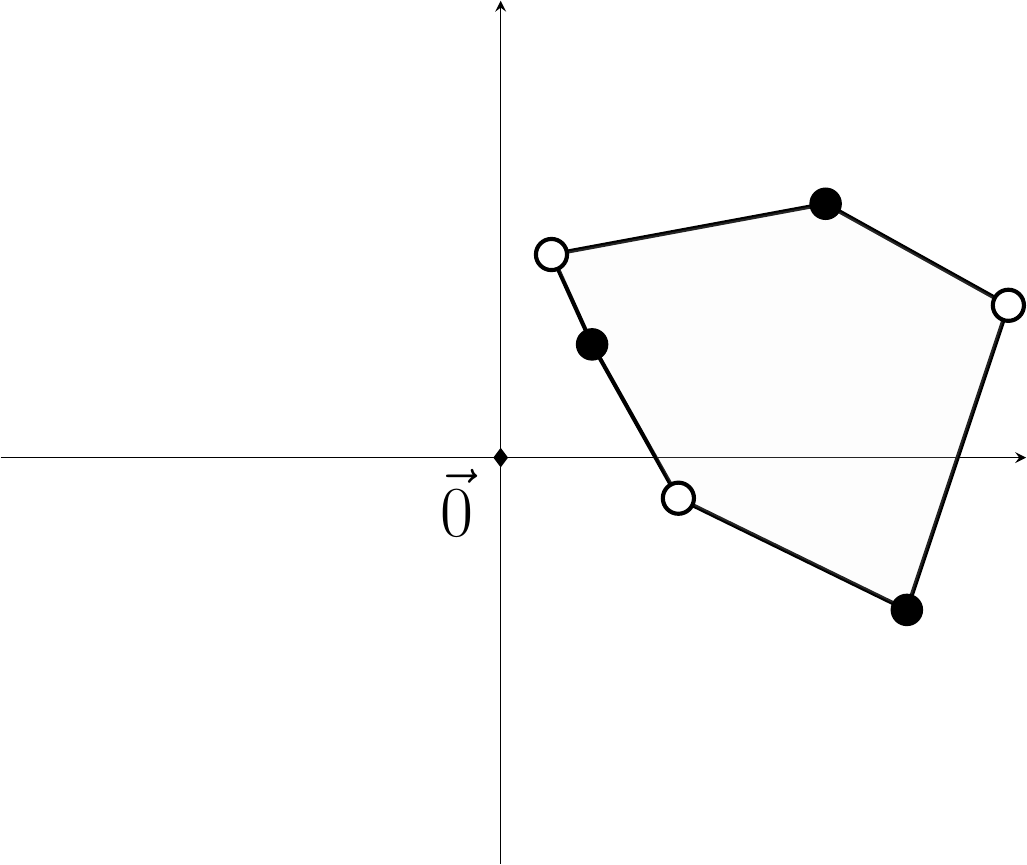}
	\end{array}$
    \end{center}
    \vspace{-0.5cm}
    \caption{Illustration of Lemma \ref{lem:zero}.
     {\it Left:} A separable sample $\{(x_i, y_i)\}_{i=1}^{n}$.
     {\it Middle:} The sample $\{ y_i \cdot x_i\}_{i=1}^{n}$.
     {\it Right:} The convex hull of $\{y_i \cdot x_i\}_{i=1}^{n}$
     next to the zero vector.
    }
    \label{fig:separability}
  \end{figure}

  \noindent  One can then deduce from the combination of Proposition \ref{prop:binary} 
  and Lemma \ref{lem:zero} that testing the existence of a consistent polynomial ranking rule
  can simply be performed by checking the emptiness of a specific
  polyhedron built from the sample as detailed in the next corollary.
  
  \begin{corollary}
  \label{coro:LP}
    Consider the same assumptions as in Proposition \ref{prop:binary}.
    Then, there exists a polynomial ranking rule of degree $k \geq 1$ 
    that perfectly ranks the sample
    if and only if the polyhedral set $\Omega^{\Phi_k}_t$ defined by
    \[
      \Omega^{\Phi_k}_t := \left\{\lambda \in \R^{t}: ~\normalfont{\text{M}^{\Phi_k}_{t}}
      \lambda^{\mathsf{T}} =\vec{0},
      ~\inner{\vec{1},\lambda} =1,~ \lambda \succeq \vec{0} \right\}
    \]
    is empty where $\normalfont{\text{M}^{\Phi_k}_{t}}
    =[C_1 \mid \cdots \mid C_t]$ is the
    $(\normalfont{\text{dim}}({\Phi_{\normalfont{k}}}),t)$-matrix
    with its $i$-th  column $C_i$ equal to
    $( \Phi_{\normalfont{k}}(X_{(i+1)}) 
    - \Phi_{\normalfont{k}}(X_{(i)}) )^\mathsf{T}$.
  \end{corollary}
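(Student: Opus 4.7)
The plan is to prove Corollary~\ref{coro:LP} by directly chaining the two preceding results and unpacking the definition of a convex hull into matrix form. The statement is really a translation of ``separability of a sample-dependent set of points'' into a concrete feasibility test for a polyhedron, so the proof should be essentially a bookkeeping argument with no hidden difficulty.

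First, I would invoke Proposition~\ref{prop:binary} to reduce the existence of a consistent polynomial ranking rule of degree $k$ to the existence of an axis $\omega \in \R^{\text{dim}(\Phi_k)}$ satisfying $\inner{\omega, \Phi_k(X_{(i+1)}) - \Phi_k(X_{(i)})} > 0$ for every $i \in \{1,\ldots,t\}$. Writing $v_i := \Phi_k(X_{(i+1)}) - \Phi_k(X_{(i)})$ for the $i$-th column of $\text{M}^{\Phi_k}_t$, the condition becomes $\inner{\omega, v_i} > 0$ for all $i$.

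Second, I would apply Lemma~\ref{lem:zero} with the binary labels all set to $y_i = +1$ and the points $x_i = v_i$. This gives the equivalence: such a separating $\omega$ exists if and only if $\vec{0} \notin \textsc{ConvHull}\{v_i\}_{i=1}^t$. Negating, the \emph{non-existence} of a consistent polynomial ranking rule is equivalent to $\vec{0} \in \textsc{ConvHull}\{v_i\}_{i=1}^t$.

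Third, I would unpack the convex hull membership in matrix form: by definition, $\vec{0}$ lies in $\textsc{ConvHull}\{v_i\}_{i=1}^t$ if and only if there exist coefficients $\lambda_1,\ldots,\lambda_t \geq 0$ with $\sum_i \lambda_i = 1$ such that $\sum_i \lambda_i v_i = \vec{0}$, which is precisely the statement that $\lambda = (\lambda_1,\ldots,\lambda_t)$ belongs to $\Omega^{\Phi_k}_t$. Hence $\vec{0} \in \textsc{ConvHull}\{v_i\}$ iff $\Omega^{\Phi_k}_t \neq \emptyset$. Contrapositively, a consistent ranking rule of degree $k$ exists iff $\Omega^{\Phi_k}_t = \emptyset$, which yields the claim. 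There is no real obstacle here; the only minor point to be careful about is confirming that in Lemma~\ref{lem:zero} one may take all labels equal to $+1$ (so the lemma degenerates to the standard separation-of-origin-from-convex-hull result), and that the matrix-vector product $\text{M}^{\Phi_k}_t \lambda^{\mathsf{T}}$ genuinely encodes $\sum_i \lambda_i v_i$ given the column convention fixed in the notation subsection.
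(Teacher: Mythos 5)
Your proposal is correct and follows exactly the paper's own argument: apply Proposition~\ref{prop:binary} to reduce to the existence of a separating axis, invoke Lemma~\ref{lem:zero} (with all labels $+1$) to translate this into $\vec{0}\notin\textsc{ConvHull}\{\Phi_k(X_{(i+1)})-\Phi_k(X_{(i)})\}_{i=1}^t$, and then rewrite convex-hull membership via its vertex representation as nonemptiness of $\Omega^{\Phi_k}_t$. No gaps; nothing further is needed.
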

  As a full implementation of the algorithm 
  can be derived at this point (see Figure \ref{fig:implementation} in Section
  \ref{sec:xps}
  for more details),
  a few comments are in order.

   \begin{remark}{\sc (Algorithmic aspects)}
    \label{rem:algo}
    Notice that, in practice, the problem of testing the emptiness
    of a polyhedral set admits a tractable solution. Indeed,
    it can be seen as the problem of
    determining if a particular linear program admits a 
    feasible point and can therefore be solved with the simplex algorithm.
    For further details on this topic,
    we refer to Chapter 11.4 in \cite{boyd2004convex} where
    practical examples as well as algorithmic solutions are discussed.
  \end{remark}

   \begin{remark}
   {\sc (Numerical complexity)} 
   In contrast, the numerical complexity
   of the proposed implementation can not  be tracked precisely
   due to the stochastic nature of the rejection method.
   Nonetheless, we point out that a simple union bound indicates that
   the complexity of generating the next evaluation point
   $X_{t+1}|\{X_i\}_{i=1}^t \sim \mathcal{U}(\X_t)$ 
   given a sample $\{(X_i, f(X_i))  \}_{i=1}^t$ 
   is upper bounded,
   with probability at least $1-\delta$,
   by the complexity of testing the emptiness of a polyhedron multiplied by 
   $\lceil {\ln(\delta)/\ln(1-\mu(\X_t)/\mu(\X ) )} \rceil$.
   But, we stress that the value of the ratio $\mu(\X_t)/\mu(\X)$ 
   which controls the upper bound depends on both
   the random evaluations previously made and
   the nested structure of the level sets of the unknown function
   and can therefore
   not be developed further.
   
  \end{remark}

\subsection{Convex ranking rules}

  We now consider the nonparametric sequence of convex ranking rules
  $\{\Rank_{\mathcal{C}_k}  \}_{k \in \mathbb{N}^{\star}}$.
  The equivalences provided below essentially rely on the fact that
  any bipartite ranking rule can be approximated
  by overlaying a finite sequence binary classifiers
  as previously shown
   in \cite{clemenccon2010overlaying}.
  We start with the one-dimensional case.

  \begin{proposition} {\sc (Overlaying classifiers)}
    \label{prop:binary_cvx}
    Set $d=1$ and
    assume that we have collected a sample 
    $\{(X_i, f(X_i)) \}_{i=1}^{t+1}$
    of $t+1$ function evaluations with distinct values.
    Then, there exists a convex ranking rule of degree $k \geq 1$ that perfectly 
    ranks the sample if and only if there a exists a sequence
    of classifiers $\{ h_i\}_{i=1}^{t+1}$
    of the form $h_i(x)= \sum_{m=1}^k  \indic{ l_{i,m} \leq x \leq u_{i,m}} $
    satisfying:
    \begin{enumerate}
      \item $h_i(X_{(j)}) =  \indic{ j \geq i }$, $\forall (i,j) \in \{1 \ldots t+1\}^2$;
      \item $h_1 \geq h_2 \geq \dots  \geq h_{t+1}$.
    \end{enumerate}
    where {\small$(1), (2) \dots (t+1)$} denote the indexes
    of the strictly increasing reordering of the sample.
  \end{proposition}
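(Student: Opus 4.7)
The plan is to handle each direction separately by translating between a continuous function inducing a convex ranking rule and the indicators of its superlevel sets at the observed values. For the forward direction, assume $g \in \mathcal{C}^0(\X,\R)$ satisfies $r_g \in \Rank_{\mathcal{C}_k}$ and perfectly ranks the sample. Define
\[
h_i(x) := \indic{g(x) \geq g(X_{(i)})}
\]
for $i=1,\dots,t+1$. By the definition of $\Rank_{\mathcal{C}_k}$ applied with $x' = X_{(i)}$, the superlevel set $\{g \geq g(X_{(i)})\}$ is a union of at most $k$ convex subsets of $\R$, i.e.~at most $k$ closed intervals, which yields the claimed representation $h_i(x) = \sum_{m=1}^{k} \indic{l_{i,m} \leq x \leq u_{i,m}}$ (allowing degenerate intervals when fewer than $k$ components are needed). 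Condition~(1) holds because $r_g = r_f$ on sample pairs, so $g(X_{(j)}) \geq g(X_{(i)})$ iff $j \geq i$, and Condition~(2) follows from the nesting of superlevel sets, since $g(X_{(1)}) \leq \dots \leq g(X_{(t+1)})$.

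For the converse, given classifiers $h_1,\dots,h_{t+1}$ of the prescribed form, set $A_i := \{x \in \X : h_i(x)=1\}$, which is a union of at most $k$ closed intervals. Condition~(2) yields the nesting $A_1 \supseteq \dots \supseteq A_{t+1}$, while Condition~(1) gives $X_{(j)} \in A_i$ iff $j \geq i$. A natural first candidate is the piecewise constant function
\[
g_0(x) := \max\{ i : x \in A_i\}
\]
(with the convention $\max \emptyset = 0$), which satisfies $g_0(X_{(j)}) = j$ and whose superlevel sets $\{g_0 \geq c\}$ equal $A_{\lceil c \rceil}$ for $c \in (0,t+1]$, each a union of at most $k$ intervals. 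This would immediately suffice if continuity were not required.

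The main obstacle is producing a \emph{continuous} witness, since the definition of $\Rank_{\mathcal{C}_k}$ forces the inducing function to lie in $\mathcal{C}^0(\X,\R)$. I would fix $\epsilon > 0$ smaller than half the minimum pairwise distance between distinct sample points and between distinct boundary points of the family $\{\partial A_i\}_i$, and define $g$ by replacing $g_0$ on each $\epsilon$-neighborhood of a jump location $z \in \partial A_i$ by a strictly monotone continuous ramp interpolating between the two adjacent integer values (oriented so that $g$ increases toward the interior of $A_i$); elsewhere set $g := g_0$. By the choice of $\epsilon$ the sample values $g(X_{(j)}) = j$ are preserved, and $g$ is continuous on $\X$. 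A direct case analysis on the level $c$---non-integer $c \in (i-1,i)$ yielding a small dilation of $A_i$, integer $c = i$ yielding $A_i$ with boundary trimmed by the ramps, and non-integer $c \in (i,i+1)$ yielding a small erosion of $A_i$---shows that every superlevel set of $g$ remains a union of at most $k$ intervals. Hence $r_g \in \Rank_{\mathcal{C}_k}$ and $r_g$ ranks the sample correctly, which concludes the converse direction.
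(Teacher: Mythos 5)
Your proposal is correct and follows essentially the same route as the paper: the forward direction extracts the $h_i$ as indicators of the superlevel sets $\{x : r(x,X_{(i)})\geq 0\}$, and the converse builds the step function $\sum_i h_i$ (your $g_0$) and then mollifies it with $\epsilon$-ramps at the jumps so that sample values are preserved and every level set remains a union of at most $k$ intervals, exactly as in the paper's $\hat{f}_{\epsilon}$ construction. The only nit is that your constraint on $\epsilon$ should also bound the distance between sample points and the interval endpoints $l_{i,m},u_{i,m}$ (not just pairwise distances within each family), as the paper does, so that no ramp overlaps an evaluation point.
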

 In the specific case where $d >1$ and  $k=1$,
 we further argue that checking the existence of a consistent and finite collection
 of nested convex classifiers
 can be performed by determining the emptiness of a cascade of polyhedral sets.

  \begin{proposition}
    \label{prop:lp_cvx}
    Set any $d \in \mathbb{N}^{\star}$ and
    assume that we have collected a sample 
    $\{(X_i, f(X_i)) \}_{i=1}^{t+1}$ of $t+1$ function evaluations
    with distinct values. 
    Then, there exists a convex ranking rule of degree $k=1$ that
    perfectly ranks the sample
    if and only if for each $i =1, \ldots, t$, the polyhedral set $\Omega_{i}$ defined by
    \[
      \Omega_i := \left\{ \lambda \in \R^{i}: ~\normalfont{\text{M}}_i
      \lambda = X_{(t+1-i)}^{\mathsf{T}},~ \inner{\vec{1}, \lambda }=1,
      ~\lambda \succeq \vec{0} \right\}
    \]
    is empty where $~\normalfont{\text{M}}_i=[ C_1 \mid \cdots \mid C_i]$ is the $(d,i)$-matrix
    with its $j$-th column $C_j$ is equal to $X_{(t+2-j)}^{\mathsf{T}}$.
  \end{proposition}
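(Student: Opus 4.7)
My plan is to translate the polyhedral emptiness conditions into convex-hull inclusion statements, then prove each direction separately; the harder direction will require explicitly constructing a continuous concave function realizing the prescribed ranking.

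First, I would reinterpret the hypothesis in geometric terms. By its very definition, $\Omega_i \neq \emptyset$ is the same as writing $X_{(t+1-i)}$ as a convex combination of the columns of $\text{M}_i$, namely of $X_{(t+1)}, X_{(t)}, \ldots, X_{(t+2-i)}$. Substituting $j = t+1-i$, the assumption that $\Omega_i$ is empty for every $i \in \{1, \ldots, t\}$ is therefore equivalent to the geometric condition
\[
X_{(j)} \notin \text{ConvHull}\{X_{(j+1)}, X_{(j+2)}, \ldots, X_{(t+1)}\} \quad \text{for every } j \in \{1, \ldots, t\}.
\]

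The necessity ($\Rightarrow$) is then immediate from the definition of $\Rank_{\mathcal{C}_1}$. Suppose $r_h$ is a convex ranking rule of degree $1$ that perfectly ranks the sample, so that the continuous function $h$ satisfies $h(X_{(1)}) < \cdots < h(X_{(t+1)})$ and every superlevel set $\{x \in \X : h(x) \geq y\}$ is convex. Fixing $j$ and taking $y = h(X_{(j+1)})$, the resulting superlevel set is a convex subset of $\X$ containing $X_{(j+1)}, \ldots, X_{(t+1)}$, hence it contains their convex hull; but $X_{(j)}$ lies strictly below level $y$, which yields the displayed condition above.

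For sufficiency ($\Leftarrow$), I would exhibit an explicit continuous concave function that realizes the ranking. Set $C_i := \text{ConvHull}\{X_{(i)}, \ldots, X_{(t+1)}\}$ for $i = 2, \ldots, t+1$. Each $C_i$ is a closed convex subset of $\X$, and the squared distance $x \mapsto d(x, C_i)^2$ is continuous, nonnegative and convex (the distance to a convex set is a nonnegative convex function, and the square of a nonnegative convex function is again convex). Defining
\[
h(x) := -\sum_{i=2}^{t+1} \alpha_i \, d(x, C_i)^2
\]
with positive weights $\alpha_i > 0$ therefore yields a continuous \emph{concave} function on $\X$, whose superlevel sets are automatically convex, so $r_h \in \Rank_{\mathcal{C}_1}$.

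The main obstacle is choosing the weights so that $h(X_{(1)}) < \cdots < h(X_{(t+1)})$. Using the fact that $X_{(j)} \in C_i \Leftrightarrow i \leq j$, one gets $h(X_{(j)}) = -\sum_{i=j+1}^{t+1} \alpha_i \, d(X_{(j)}, C_i)^2$, hence
\[
h(X_{(j+1)}) - h(X_{(j)}) = \alpha_{j+1} \, d(X_{(j)}, C_{j+1})^2 + \sum_{i=j+2}^{t+1} \alpha_i \bigl(d(X_{(j)}, C_i)^2 - d(X_{(j+1)}, C_i)^2\bigr).
\]
The geometric condition derived above guarantees $d(X_{(j)}, C_{j+1}) > 0$, so the first summand is strictly positive, whereas the remainder is bounded in absolute value by $\text{diam}(\X)^2 \sum_{i > j+1} \alpha_i$. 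A backward induction, starting from $\alpha_{t+1} = 1$ and at each step picking $\alpha_{j+1}$ large enough that $\alpha_{j+1} \, d(X_{(j)}, C_{j+1})^2$ dominates $\text{diam}(\X)^2 \sum_{i > j+1} \alpha_i$, makes every increment strictly positive and finishes the proof.
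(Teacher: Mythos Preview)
Your proof is correct, and the sufficiency direction takes a genuinely different route from the paper's.

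Both you and the paper handle the necessity direction identically: translate ``$\Omega_i$ empty'' into the convex-hull exclusion $X_{(j)} \notin \textsc{ConvHull}\{X_{(j+1)},\ldots,X_{(t+1)}\}$, then observe that a convex superlevel set containing $X_{(j+1)},\ldots,X_{(t+1)}$ must contain their hull, while $X_{(j)}$ lies strictly below.

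For sufficiency, the paper defines the step function $f_{\text{step}}(x)=\sum_{i}\indic{x\in \textsc{CH}\{X_{(j)}\}_{j=i}^{t+1}}$, then smooths it by replacing each indicator with a continuous bump supported on an $\epsilon$-neighborhood of the hull; convexity of the superlevel sets is obtained from an auxiliary lemma stating that $\epsilon$-balls of convex sets remain convex, and one must tune $\epsilon$ carefully so that the smoothed function still separates the sample points. Your construction $h(x)=-\sum_i \alpha_i\,d(x,C_i)^2$ sidesteps all of this: since squared distance to a convex set is convex, $h$ is concave, and concavity delivers convex superlevel sets for free without any $\epsilon$-argument or auxiliary lemma. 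The price you pay is the backward choice of the weights $\alpha_i$, but that is a routine domination argument. Your approach is shorter and arguably cleaner; the paper's approach is perhaps more constructive in spirit (building the ranking as a stack of nested classifiers) and connects more visibly to the overlaying-classifiers viewpoint mentioned elsewhere in the paper.
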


\section{Numerical experiments}
\label{sec:xps}
  
  In this section, we compare the empirical performance of 
  the main algorithm of the paper to
  the existing state-of-the-art global optimization methods
  on real and synthetic problems.\\
  
  \noindent {\bf Algorithms.} 
  We compared {\sc AdaRankOpt} with five different types of
  algorithms, developed from various approaches of global optimization:
  \begin{itemize}
    \item \textbf{\textsc{BayesOpt}} (\cite{martinez2014bayesopt})
    is a Bayesian optimization algorithm.
    It uses a distribution over functions to build
    a surrogate model of the unknown function.
    The parameters controlling the distribution
    are estimated during the optimization process.
    \item \textbf{\textsc{CMA-ES}} (\cite{hansen2006cma}) is
    an evolutionary algorithm.
    At each iteration, the
    new evaluation points are sampled according to a multivariate
    normal distribution with a mean vector and a covariance matrix
    computed from the previous evaluations.
    \item \textbf{\textsc{CRS}} (\cite{kaelo2006some})
    is a variant of the Controlled Random Search of \cite{price1983global}
    which includes local mutations.
    It starts with a random population
    and randomly evolves these points by an heuristic rule.
    \item \textbf{\textsc{DIRECT}} (\cite{jones1993lipschitzian}) 
    is a Lipschitz optimization
    algorithm where the Lipschitz constant is unknown.
    It uses a deterministic splitting technique of
    the search space and it is therefore the only purely deterministic algorithm
    of the benchmark.
    
    \item \textbf{\textsc{MLSL}} (\cite{kan1987stochastic})
    is a multistart algorithm.
    It performs a series of local optimizations starting from
    points randomly chosen 
    by a clustering heuristic
    that helps to avoid repeated searches of the same local optima. 
  \end{itemize}

  For a fair comparison, the tuning parameters of the algorithms
  were all set to default and the {\sc AdaRankOpt} algorithm
  was used in all the experiments with the sequence of polynomial ranking rules
  and with a parameter $p$ fixed to $1/10$.
  The detailed implementation of the {\sc AdaRankOpt} algorithm used in the experiments
  can be found in Figure \ref{fig:implementation}.
  The source of the implementations of the remaining algorithms
  are also reported  in Table \ref{table:implement_competitor}.\\

  ~\smallskip
  
  \noindent {\bf Data sets.} We considered a series of nonconvex optimization problems 
  which involve real data sets and naturally arise 
  in the tuning of machine learning algorithms,
  and two series of artificial problems
  that are commonly met in standard global optimization benchmarks:
  
  \begin{itemize}
    \item[{1.}] We first studied the task of estimating
    the regularization parameter $\lambda$
      and the bandwidth $\sigma$ of a gaussian kernel ridge regression
      that minimize the empirical mean squared error of the predictions
      over a 10-fold cross validation.  We employed
    five data sets from the 
    UCI Machine Learning Repository (\cite{Lichman:2013}):
    {\it Auto-MPG}, {\it Breast Cancer Wisconsin (Prognostic)},
    {\it Concrete slump test}, {\it Housing}
    and {\it Yacht Hydrodynamics}.
     For each dataset, we only considered the real-valued attributes 
    which were centered and normalized so that
    $\sum_{i=1}^nX_i = 0$  and
     $\frac{1}{n}\sum_{i=1}^n X_i^2  = 1$ for all the attributes. 

    \item[{2.}] We then compared the algorithms on a series of 
    five bidimensional problems
    taken from \cite{jamil2013literature} and \cite{simulationlib}.
    The dimensionality of these problems allows 
    an easy visualization of the test functions and it can be seen
    that this series covers a wide variety of situations, including multimodal
    and non-linear functions as well as ill-conditioned and well-shaped functions.
    \item[{3.}]
    The last benchmark we used to assess the performance 
    of the algorithms consists of a set of five synthetic functions
    with a dimensionality varying from three to seven
    taken from \cite{finck2010real} and \cite{jamil2013literature}.
    Remark that,
    due to the the high dimensionality 
    of the input spaces,
    only few information is available 
    on the structure of the test functions of this series.
    \end{itemize}
    
     \noindent A complete description of the test functions of the benchmark can be found 
    in Table \ref{tab:func}.

\newpage
\vspace{-5em}
\RestyleAlgo{boxed}
\begin{figure}[!t]
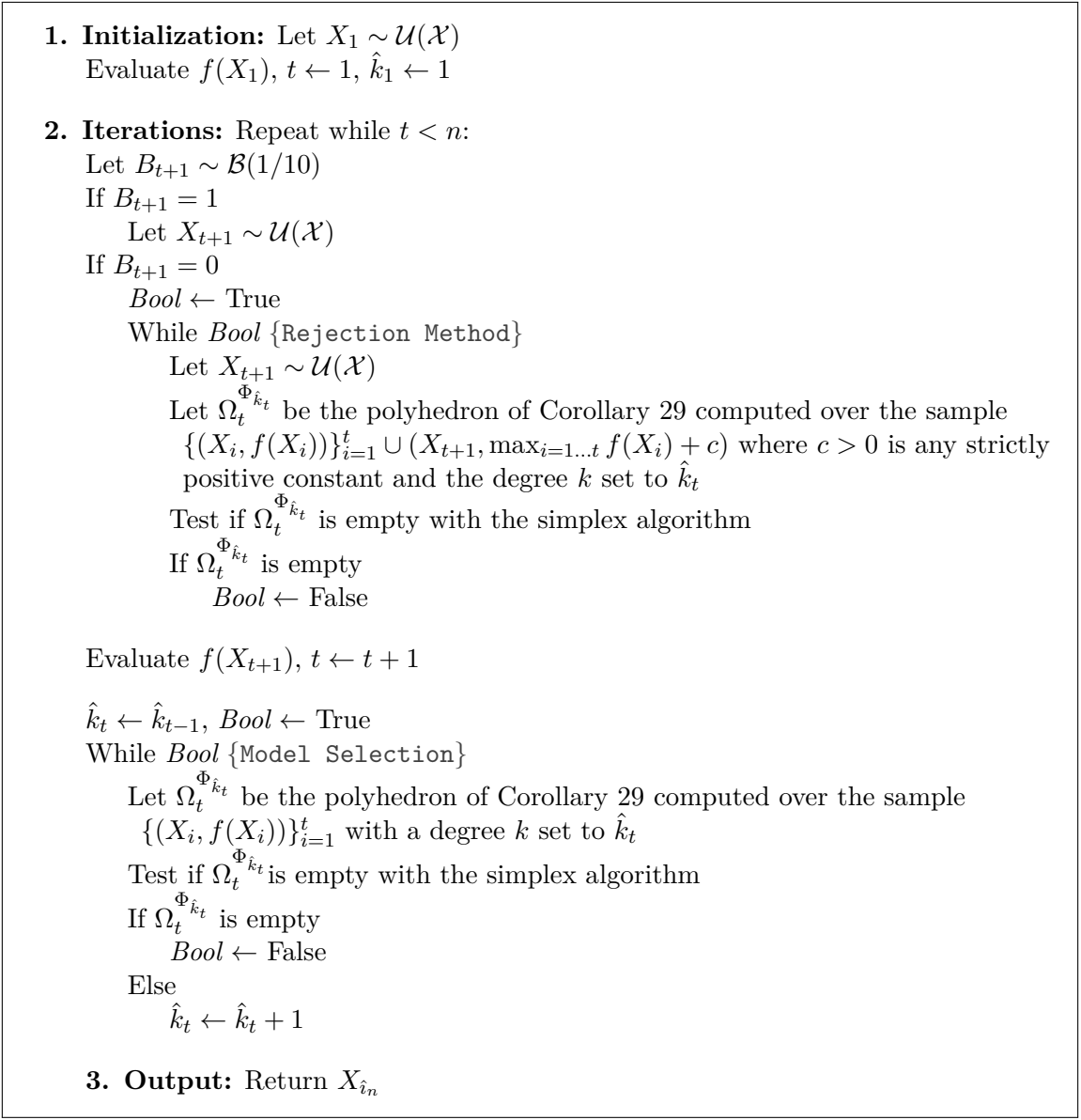

{
  \begin{algorithm}[H]
  \vspace{0.5em}
  \textbf{1. Initialization:} Let $X_1 \sim \mathcal{U}(\X)$\\
  \nonl \pushline  Evaluate $f(X_1)$, $t \leftarrow 1$,  $\hat{k}_1 \leftarrow 1$\\
  \vspace{1em}
  \nl \Indm \textbf{2. Iterations:} Repeat while $t <n$: \\
  \pushline \nonl Let $B_{t+1} \sim \mathcal{B}(1/10)$\\
  \nonl If  $B_{t+1} = 1$\\
  \pushline \nonl Let $X_{t+1} \sim \mathcal{U}(\X) $\\
  \nl \Indm  If  $B_{t+1} = 0$\\
  \pushline {\it Bool} $\leftarrow $ True \\
  \nonl While {\it Bool}~\textcolor{black!75}{\tt \{Rejection Method\}}\\
  \pushline Let $X_{t+1} \sim \mathcal{U}(\X) $\\
  Let $\Omega_{t}^{\Phi_{\hat{k}_t}}$ be the polyhedron of Corollary \ref{coro:LP}
  computed over the sample $\{(X_i, f(X_i))  \}_{i=1}^t
  \cup ( X_{t+1} ,\max_{i=1 \dots t}f(X_i) + c ) $ where $c>0$ is any strictly positive 
  constant and the degree $k$ set to $\hat{k}_t$\\
  Test if $\Omega_{t}^{\Phi_{\hat{k}_t}}$ is empty with the simplex algorithm\\
  \nonl If $\Omega_{t}^{\Phi_{\hat{k}_t}}$ is empty \\
  \nonl \pushline {\it Bool} $\leftarrow$ False \\
  \vspace{1em}
  \Indm \Indm \Indm Evaluate $f(X_{t+1})$, $t \leftarrow t +1$ \\
  \vspace{1em}
  $\hat{k}_t \leftarrow \hat{k}_{t-1} $, {\it Bool} $\leftarrow$ True\\
  While {\it Bool}~\textcolor{black!75}{\tt \{Model Selection\}} \\
  \pushline Let $\Omega_{t}^{\Phi_{\hat{k}_t}}$ be the polyhedron of Corollary \ref{coro:LP}
  computed over the sample $\{(X_i, f(X_i))  \}_{i=1}^t$
  with a degree $k$  set to $\hat{k}_t$ \\
  Test if $\Omega_{t}^{\Phi_{\hat{k}_t}}$is empty with the simplex algorithm\\
  If $\Omega_{t}^{\Phi_{\hat{k}_t}}$ is empty\\
  \pushline {\it Bool} $\leftarrow$ False \\
  \Indm Else \\
  \pushline $\hat{k}_t \leftarrow \hat{k}_{t} +1 $ \\
  \vspace{1em}
  \Indm \Indm \textbf{3. Output:} Return $X_{\hatin}$
  \vspace{0.5em}
  \end{algorithm}
   \vspace{-0.5em}
  \caption{Implementation of the {\sc AdaRankOpt} algorithm
	  with the sequence of polynomial ranking structures
	  and with a parameter $p$ set to 1/10.
	  }
  \label{fig:implementation}
 }
\end{figure}

\begin{table}[!h]
\centering
{\small
\begin{tabular}{lcc}
\toprule
{\bf Library} &  & {\bf Algorithm(s)} \\
\toprule
The CMA 1.1.06 package (\cite{CMAES_implementation}) &   &  CMA-ES \\
NLOpt Library (\cite{johnson2014nlopt}) &  & CRS, DIRECT, MLSL \\ 
BayesOpt Library (\cite{martinez2014bayesopt}) & ~~~~~~~~~~~~~~~~& BayesOpt \\
\bottomrule
\end{tabular}
\caption{Source of the implementations of the algorithms used for comparison.}
\label{table:implement_competitor}
}
\end{table}

\newpage
    
\begin{table}[!h]
\begin{center}
{ \footnotesize \tt
\vspace{-0.5em}
\begin{tabular}{@{}llcc@{}}
\toprule
 {\bf Problem} & {\bf Objective function}  & {\bf Domain}  & {\bf Local max.} \\ \midrule
  & & & \\
   {\bf Auto MPG} 
 &  
 &[-2,4]$\times$[-5,5]
 & - \\
 &\multirow{-2}{*}{-$\displaystyle \tt \frac{1}{10}\sum_{k=1}^{10} \sum_{i \in D_k}$($\tt \hat{f}_k(X_i) -Y_i $)$^{\tt 2}$  } 
 &  &  \\
  &  &  &  \\
 {\bf Breast Cancer~~} 
 & \multicolumn{1}{l}{where:} 
 &~\![-2,4]$\times$[-5,5]
 & - \\
 & \multicolumn{1}{l}{- $\tt \hat{f}_k \in$ $\tt\underset{f\in \mathcal{H}_{\sigma}}{argmin}$
 $\tt \frac{1}{n- |D_k|} \sum_{i \notin \mathcal{D}_k } (f(X_i)- Y_i)^2
 + \lambda \norm{f}_{\mathcal{H}_{\sigma}}$}
 &  &  \\
 &  
 \multicolumn{1}{l}{- the data set \{($\tt X_i, Y_i$)\}$_{\tt i=1}^n$ is split}
 &  &  \\
 {\bf Concrete} 
 & \multicolumn{1}{l}{~~into 10 folds $\tt D_1\dots D_{10}$}
 & [-2,4]$\times$[-5,5]
 & - \\
 & \multicolumn{1}{l}{- $\mathcal{H}_{\sigma}$ denotes the gaussian RKHS of}
 &
 &  \\
 &  
 \multicolumn{1}{l}{~~bandwidth $\sigma$}
 &  &  \\
  {\bf Yacht} 
 & \multicolumn{1}{l}{- $\tt \norm{f}_{\mathcal{H}_{\sigma}}$ is the corresponding norm}
 & [-2,4]$\times$[-5,5]
 &  - \\
 &
 &
 &  \\
  & 
  \multicolumn{1}{l}{- $\tt\sigma =10^{x_1}$}
  &  &  \\
 {\bf Housing} 
 & 
 & [-2,4]$\times$[-5,5]
 & -
 \\
 & \multicolumn{1}{l}{- $\tt\lambda = 10^{x_2}$}
 &  &  \\
 \cmidrule[0.2pt]{1-4}\\
  {\bf Branin-Hoo} 
 & 10(1\text{~\!-~\!}1/(8$\pi$))cos($\tt x_1$)\text{~\!+~\!}10 
 &~\![-5,10]$\times$[0,15]
 & 3 \\
 & +~\!\!($\tt x_2\text{~\!-~\!}5.1 x_1^2\text{/(}4\pi^2\text{)~\!\!+~\!\!}5x_1$/$\pi$\text{~\!-~\!}6)$^{\tt 2}$ 
 &  &  \\
 &  &  &  \\
 {\bf Himmelblau} 
 &  -~\!($\tt x_1^2\text{~\!+~\!}x_2$\text{~\!-~\!}11)$^{\tt 2}$
 &[-5,5]$\times$[-5,5]
 & 4 \\
 &\text{-~\!}($\tt x_1\text{~\!+~\!}x_2^2$\text{~\!-~\!}7)$^{\tt 2}$
 &  &  \\
  &  &  &  \\
 {\bf Styblinski} & 8$\tt x_2^2\text{~\!-~\!}0.5 x_2^4\text{~\!-~\!}2.5x_2$ 
 & [-5,5]$\times$[-5,5] 
 & 4 \\
 & $\tt\text{+~\!\!}8x_1^2\text{~\!-~\!}0.5x_1^4\text{~\!-~\!}2.5x_1$
 &
 &  \\
 &  &  &  \\
  {\bf Holder Table} 
 & |sin($\tt x_1$)|$\times$|cos($\tt x_2$)| 
 & [-10,10]$^{\tt 2}$
 &  36 \\
 & $ \times$exp(|1-($\tt x_1^2\text{~\!+~\!}x_2^2\text{)}^{1/2}\text{/}\pi$|) 
 &
 &  \\
  &  &  &  \\
 {\bf Levy N.13} 
 & -($\tt x_1$-1)$^2$(1+sin$^{\tt2}$(3$\tt\pi x_2$)) 
 & [-10,10]$^{\tt 2}$
 & $>$100
 \\
 & -sin$\tt^3$(3$\tt\pi x_1$)\text{~\!\!-~\!\!}($\tt x_2$-1)(1+sin$^{\tt2}$(2$\tt\pi x_2$))&  &  \\
 \cmidrule[0.2pt]{1-4}\\
 {\bf Rosenbrock} 
 & -$\tt \sum_{i=1}^3\text{(}x_i\text{~\!-~\!}1\text{)}^2$
 -~\!$\tt \sum_{i=1}^3 100\text{(}x_{i+1}\text{~\!+~\!}x_i^2 \text{)}^2$
 & [-2.048,2.048]$^{\tt3}$
 & - \\
  & 
  
  &  &  \\
  &  &  &  \\
    {\bf Mishra N.2} 
 &  -(6\text{~\!-~\!}$\tt
 \sum_{i=1}^5$0.5($\tt x_i +x_{i+1}$))$^{\tt 5 - \sum_{i=1}^5\text{0.5(}x_i+x_{i+1}\text{)} }$
 & [0,1]$^{\tt6
 }$
 & - \\
 &  
 &  &  \\
 &  &  &  \\
  {\bf Linear Slope }
 &  $\tt\sum_{i=1}^7 10^{\text{(}i-1\text{)/}6 } \text{(}x_i$-~\!\!5)
 &  [-5,5]$^{\tt7}$
 & 1 \\
 &  &  &  \\
 &  &  &  \\
  {\bf Deb N.1} 
 &  $\tt \frac{1}{5} \sum_{i=1}^5$sin$^{\tt6}$(5$\pi x_i$)
 & [-5,5]$^{\tt5}$
 & 36 \\
 &  &  &  \\
 &  &  &  \\
 {\bf Griewank N.4} 
 &  -1~\!-~\!$\tt\sum_{i=1}^4 x_i^2$/4000\!~+~\!$\tt\prod_{i=1}^4$cos($\tt x_i$~\!\!/~\!\!$\tt\sqrt{i}$)
 & [-300,600]$^{\tt4}$
 & >100 \\
 &  &  &  \\
 \bottomrule
\end{tabular}
}
\end{center}
\vspace{-1em}
\caption{Description of the test functions of the benchmark.
Dash symbols are used when a value can not be calculated.}
\label{tab:func}
\end{table}

\newpage
    
    \noindent {\bf Protocol and performances.} For each problem and each algorithm,
    we performed $K\!=\!\!100$ distinct runs with a budget of $n\!=\!\!1000$ 
    function evaluations.
    For each target parameter $t=$ 90\%, 95\% and 99\%, we have collected
    the stopping times corresponding to the number of evaluations 
    required by each method to reach the specified target 
     \[
      \tau_{k} := \min\{i=1,\dots, n:~
      f(X^{(k)}_{i}) \geq f_{\text{target}}(t)\} 
    \]
    where $\min\{ \emptyset \} = 1000$ by convention, $\{ f(X^{(k)}_{i})\}_{i=1}^n$ denotes 
    the evaluations made by a given method on the $k$-th run,
    $k \leq 100$ and the target value is set to 
    \[
      f_{\text{target}}(t) := \max_{x \in \X}f(x)  - 
      \left(\max_{x \in \X}f(x) -\int_{x \in \X} f(x)~\text{d}x/\mu(\X)
      \right) \times (1 - t).
    \]
    Note that the target is normalized to
    the average value of the function over the domain 
    to prevent the performance measures from being dependent of any 
    constant term in the unknown function. 
    In practice, the average value was estimated from a Monte Carlo sampling 
    of $10^6$ evaluations
    and the maximum of the function was estimated, for the real task problems,
    by taking the best value observed over all the sets of experiments.
    Based on these stopping times, 
    we then measured performance through a collection of indicators:
    \begin{itemize}
     \item[I)] Average and standard
     deviation of the number of evaluations required
     to reach the specified target:
     $\overline{\tau}_{K} =
     \frac{1}{K}\sum_{k=1}^K \tau_k$ and 
     $\widehat{\sigma}_{\tau} = ( \frac{1}{K} \sum_{k=1}^K (\tau_k -\overline{\tau}_K  )^2
     )^{1/2}$.
     \item[II)] Proportion of
     runs that reached the specified target in terms of function evaluations:
     $\forall i \leq n$, $\widehat{\P}_K(\tau \leq i) = \frac{1}{K} \sum_{k=1}^K
     \mathbb{I} \{ \tau_{k}  \leq i \}$.
     \item[III)] Number of runs for which a method has executed less (or more)
     evaluations to reach the target than {\sc AdaRankOpt}.
     Precisely, we have collected the following win/tie/loss indicators:
     $W= \sum_{k=1}^K \mathbb{I}\{ \tau_k < (1-0.1)\tau^{\text{ada}}_k \}$,
     $L =
      \sum_{k=1}^K \mathbb{I}\{ \tau_k> (1+0.1)\tau^{\text{ada}}_k \}$
     and $T = \text{K}- (W + L)$ 
     where $\tau^{\text{ada}}_k$ denotes
     the hitting times of the {\sc AdaRankOpt} algorithm.
    \end{itemize}

    These indicators capture the most important properties
    of global optimization algorithms, such as accuracy, stability 
    and velocity of convergence.\\

    \noindent {\bf Results and comments.} Results are collected 
    in a series of Tables \ref{table:num1}, \ref{table:num2}, \ref{table:num3}. 
    We also report the proportion
    of runs that reached the different targets in terms of
    function evaluations in Figures \ref{fig:num1}, \ref{fig:num2}, \ref{fig:num3}.
    Our main observations are the following: 
    \begin{itemize}
    
    \item[-]
    The proposed method displays---as one should expect---very competitive results 
    on test problems with estimated ranking rules of moderate complexity
    with regards to the sequence of ranking structures set as input
    (see, {\it e.g.}, {\it Breast Cancer},
    {\it Concrete}, {\it Housing}, {\it Himmelblau} or {\it Styblinski}).
    Moreover, experiments {\it Linear Slope} and {\it Mishra N.2} 
    also confirm that the algorithm can be robust against
    the dimensionality of the input space
    in the case of test functions with estimated ranking rule of low complexity.

    \item[-] In contrast
    the method stalls on test problems
    which do not admit an estimated ranking rule of moderate complexity
    (see, {\it e.g.}, {\it Deb N.1} and {\it Holder Table}).
    Indeed, the algorithm can not estimate efficiently the ranking rules induced
    by some classes of functions with a single sequence
    of ranking structures set as input.
    Considering at the same time multiple sequences of ranking structures
    might be a promising approach to address this issue,
    allowing the algorithm to adapt to wider varieties of shapes.
    
    \item[-]
    Finally, it can  be observed that
    in the case of test functions with strong global structure but many
    local optima, the algorithm reaches the 95\% target with few function evaluations
    but fails at getting to the 99\% target
    (see, {\it e.g.}, {\it Griewank N.4} or {\it Levy N.13}).
    Indeed, the algorithm starts moving toward the global optima
    by learning the global structure of the function
    but then considers ranking rules of a level of complexity higher than required
    when many local variations are met.
    As detailed in Remark \ref{rem:noisy},
    adding a noise parameter would allow the algorithm to be more robust
    against this type of local perturbations.
    \end{itemize}
    
    \noindent These empirical results aim at (i) providing numerical evidence
    that the main algorithm of the paper
    is competitive with the state-of-the-art methods 
    and effective on a wide range of optimization problems
    and (ii) identifying some limits of the ranking-based approach we developed that could
    be solved with further extensions.
    However, a complete and detailed empirical analysis of the merits
    and limitations of the algorithm with these extensions
    is beyond the scope of this paper and will be carried out in future work.

\begin{table}[!h]
\centering
\vspace{1em}
{\footnotesize {\tt
\begin{tabular}{@{}lccccc@{}}
\toprule
{\bf Problem} &
\multicolumn{1}{l}{{\bf Auto MPG}} 
&\multicolumn{1}{c}{{\bf Breast Cancer}}
&\multicolumn{1}{c}{{\bf Concrete}} 
& \multicolumn{1}{c}{{\bf Housing}} 
& 
{\bf Yacht}\\
\toprule
{\scriptsize  Target 90\%} & & & & & \\
{\bf AdaRank}  & 14.77 ($\pm$007) & ~6.14 ($\pm$003) & ~5.82 ($\pm$003) & ~{\selectfont\fontsize{8.7}{2}\fontfamily{pcr}\textbf{6~\!\!\!\!.64}} ($\pm$003) & 17.33 ($\pm$008)\\
{\bf BayesOpt} & {\selectfont\fontsize{8.7}{2}\fontfamily{pcr}\textbf{10~\!\!\!\!.\!84}} \!($\pm$003) & ~6.83 ($\pm$003) & ~6.40 ($\pm$004) & ~7.67 ($\pm$003) & 13.81 ($\pm$020)\\
{\bf CMA-ES}   & 29.27 ($\pm$024) & 11.10 ($\pm$009) & 10.41 ($\pm$008) & 12.84 ($\pm$012) & 29.61 ($\pm$025)\\
{\bf CRS }     & 28.73 ($\pm$014) & ~8.87 ($\pm$008) & 10.03 ($\pm$009) & 14.15 ($\pm$011) & 32.63 ($\pm$015)\\
{\bf DIRECT}   & 11.00 ($\pm$000) & ~{\selectfont\fontsize{8.7}{2}\fontfamily{pcr}\textbf{6~\!\!\!\!.\!00}} ($\pm$000) & ~6.00 ($\pm$000) & 11.00 ($\pm$000) & {\selectfont\fontsize{8.7}{2}\fontfamily{pcr}\textbf{11~\!\!\!\!.\!00}} ($\pm$000)\\
{\bf MLSL}     & 13.06 ($\pm$015) & ~6.59 ($\pm$004) & ~{\selectfont\fontsize{8.7}{2}\fontfamily{pcr}\textbf{3~\!\!\!\!.\!85}} ($\pm$004) & ~7.19 ($\pm$003) & 14.36 ($\pm$013)\\
\vspace{-0.3em} \\
{\scriptsize  Target 95\%} & & & & & \\
{\bf AdaRank}  & 17.14 ($\pm$008) & ~{\selectfont\fontsize{8.7}{2}\fontfamily{pcr}\textbf{6~\!\!\!\!.\!89}} ($\pm$004) & ~{\selectfont\fontsize{8.7}{2}\fontfamily{pcr}\textbf{6~\!\!\!\!.\!69}} ($\pm$003) & 12.25 ($\pm$004) & 23.45 ($\pm$012)\\
{\bf BayesOpt} & 12.20 ($\pm$006) & ~8.35 ($\pm$004) & ~7.94 ($\pm$004) & 14.10 ($\pm$022) & {\selectfont\fontsize{8.7}{2}\fontfamily{pcr}\textbf{15~\!\!\!\!.\!91}} ($\pm$021)\\
{\bf CMA-ES}   & 42.90 ($\pm$031) & 13.71 ($\pm$010) & 13.45 ($\pm$011) & 23.53 ($\pm$016) & 40.49 ($\pm$030)\\
{\bf CRS }     & 35.82 ($\pm$012) & 13.58 ($\pm$010) & 14.60 ($\pm$011) & 23.00 ($\pm$013) & 38.28 ($\pm$014)\\
{\bf DIRECT}   & {\selectfont\fontsize{8.7}{0.5}\fontfamily{pcr}\textbf{11~\!\!\!\!.\!00}} ($\pm$000) & 11.00 ($\pm$000) & 11.00 ($\pm$000) & 19.00 ($\pm$000) & 27.00 ($\pm$000)\\
{\bf MLSL}     & 14.97 ($\pm$015) & ~7.64 ($\pm$003) & ~7.31 ($\pm$004) & {\selectfont\fontsize{8.5}{2}\fontfamily{pcr}\textbf{11~\!\!\!\!.\!82}} ($\pm$007) & 16.25 ($\pm$013)\\
\vspace{-0.3em} \\
{\scriptsize  Target 99\%} & & & & & \\ 
{\bf AdaRank}  & 41.75 ($\pm$033) & 16.03 ($\pm$010) & 22.09 ($\pm$011) & 24.51 ($\pm$016) & 448.7 ($\pm$438)\\
{\bf BayesOpt} & {\selectfont\fontsize{8.7}{2}\fontfamily{pcr}\textbf{13~\!\!\!\!.\!97}} ($\pm$007) & 31.05 ($\pm$031) & 28.15 ($\pm$033) & 18.54 ($\pm$022) & {\selectfont\fontsize{8.7}{2}\fontfamily{pcr}\textbf{18~\!\!\!\!.\!84}} ($\pm$022)\\
{\bf CMA-ES}   & 73.74 ($\pm$049) & 35.13 ($\pm$035) & 46.31 ($\pm$029) & 62.14 ($\pm$085) & 70.87 ($\pm$049)\\
{\bf CRS }     & 48.48 ($\pm$016) & 34.84 ($\pm$034) & 36.55 ($\pm$014) & 44.09 ($\pm$014) & 52.89 ($\pm$018)\\
{\bf DIRECT}   & 47.00 ($\pm$000) & 27.00 ($\pm$027) & 37.00 ($\pm$000) & 41.00 ($\pm$000) & 49.00 ($\pm$000)\\
{\bf MLSL}     & 20.60 ($\pm$017) & {\selectfont\fontsize{8.7}{2}\fontfamily{pcr}\textbf{12~\!\!\!\!.\!84}} ($\pm$012) & {\selectfont\fontsize{9}{2}\fontfamily{pcr}\textbf{14~\!\!\!\!.\!73}} \!($\pm$010) & {\selectfont\fontsize{8.7}{2}\fontfamily{pcr}\textbf{16~\!\!\!\!.\!38}} ($\pm$010) & 21.43 ($\pm$014)\\
\vspace{-0.3em} \\
\cmidrule[0.5pt]{1-1} \vspace{-1.1em}\\
{\scriptsize  Target 90\%} & & & & & \\
{\bf BayesOpt} & 62/20/18 & 38/13/49 & 39/12/49 & 40/06/54 & 66/14/20\\
{\bf CMA-ES}   & 30/06/64 & 27/09/64 & 31/06/63 & 32/07/61 & 31/04/65\\
{\bf CRS }     & 17/06/77 & 42/06/52 & 39/03/58 & 28/05/67 & 13/07/80\\
{\bf DIRECT}   & 64/22/14 & 36/06/58 & 37/09/54 & 07/13/80 & 73/09/18\\
{\bf MLSL}     & 55/20/55 & 40/09/51 & 42/09/49 & 38/09/53 & 59/10/31\\
\vspace{-0.3em} \\
 {\scriptsize  Target 95\%}  &  &  &  &  &   \\
{\bf BayesOpt} & 68/17/15 & 29/14/57 & 32/12/56 & 54/15/31 & 74/09/17\\
{\bf CMA-ES}   & 17/01/87 & 21/10/69 & 29/02/69 & 25/02/73 & 26/04/70\\
{\bf CRS }     & 10/05/85 & 30/07/63 & 29/05/66 & 20/15/65 & 14/07/79\\
{\bf DIRECT}   & 78/12/10 & 10/13/77 & 07/13/80 & 02/07/91 & 21/20/59\\
{\bf MLSL}     & 52/22/26 & 34/10/56 & 34/09/57 & 52/18/30 & 68/11/21\\
\vspace{-0.3em} \\
 {\scriptsize  Target 99\%}  & & & & & \\
{\bf BayesOpt} & 87/09/04 & 52/12/36 & 47/11/42 & 72/09/19 & 92/03/05\\
{\bf CMA-ES}   & 24/05/71 & 13/08/79 & 19/10/71 & 12/03/85 & 65/02/33\\
{\bf CRS }     & 26/05/69 & 11/03/86 & 17/10/71 & 11/05/84 & 62/02/36\\
{\bf DIRECT}   & 25/03/69 & 02/07/91 & 06/09/85 & 05/04/91 & 58/04/38\\
{\bf MLSL}     & 66/14/20 & 61/12/27 & 71/07/22 & 07/08/21 & 88/05/07\\
\bottomrule
\end{tabular}
}
\vspace{-0.5em}
\caption{Results achieved on the cross-validation problems.
The top of the table displays 
the number of evaluations to reach the specified targets
(mean $\pm$ standard deviation).
In bold, the best result obtained for each target
in terms of average of evaluations.
The bottom of the table displays 
the number of win/tie/loss to {\sc AdaRankOpt}.}
\label{table:num1}
}
\end{table}

\begin{table}[!h]
\centering
{\footnotesize {\tt
\begin{tabular}{@{}lccccc@{}}
\toprule
{\bf Problem} &
\multicolumn{1}{l}{{\bf Branin-Hoo}} 
&\multicolumn{1}{c}{{\bf Himmelblau}}
&\multicolumn{1}{c}{{\bf Styblinski}} 
& \multicolumn{1}{c}{{\bf Holder Table}} 
& 
{\bf Levy N.13}\\
\toprule
{\scriptsize  Target 90\%} & & & & & \\
{\bf AdaRank}  & ~7.23 ($\pm$004) & 12.24 ($\pm$009) & ~27.5 ($\pm$010) & 170.8 ($\pm$185) & 13.10 ($\pm$012)\\
{\bf BayesOpt} & ~6.46 ($\pm$004) & 12.69 ($\pm$013) & ~79.9 ($\pm$079) & 410.0 ($\pm$417) & 10.37 ($\pm$006)\\
{\bf CMA-ES}   & 20.61 ($\pm$017) & 18.04 ($\pm$014) & 259.6 ($\pm$385) & ~{\selectfont\fontsize{8.7}{2}\fontfamily{pcr}\textbf{79~\!\!\!\!.\!9}} ($\pm$115) & 16.98 ($\pm$014)\\
{\bf CRS }     & ~8.77 ($\pm$007) & 13.41 ($\pm$013) & 160.3 ($\pm$297) & 307.9 ($\pm$422) & 17.89 ($\pm$016)\\
{\bf DIRECT}   & ~{\selectfont\fontsize{8.7}{2}\fontfamily{pcr}\textbf{4~\!\!\!\!.\!00}} ($\pm$000) & ~{\selectfont\fontsize{8.7}{2}\fontfamily{pcr}\textbf{2~\!\!\!\!.\!00}} ($\pm$000) & ~{\selectfont\fontsize{8.7}{2}\fontfamily{pcr}\textbf{20~\!\!\!\!.\!0}} ($\pm$000) & ~80.0 ($\pm$000) & ~{\selectfont\fontsize{8.7}{2}\fontfamily{pcr}\textbf{1~\!\!\!\!.\!00}} ($\pm$000)\\
{\bf MLSL}     & ~8.91 ($\pm$005) & ~7.60 ($\pm$005) & 116.4 ($\pm$090) & 305.0 ($\pm$379) & 35.57 ($\pm$035)\\
\vspace{-0.3em} \\ 
{\scriptsize  Target 95\%} & & & & & \\
{\bf AdaRank}  & ~{\selectfont\fontsize{8.7}{2}\fontfamily{pcr}\textbf{8~\!\!\!\!.\!79}} ($\pm$005) & 18.86 ($\pm$011) & ~34.5 ($\pm$011) & 285.4 ($\pm$276) & 19.67 ($\pm$022)\\
{\bf BayesOpt} & 10.40 ($\pm$004) & 20.77 ($\pm$020) & 150.3 ($\pm$146) & 417.8 ($\pm$410) & 14.64 ($\pm$006)\\
{\bf CMA-ES}   & 29.28 ($\pm$021) & 38.17 ($\pm$027) & 339.5 ($\pm$406) & 135.9 ($\pm$184) & 26.99 ($\pm$023)\\
{\bf CRS }     & 18.89 ($\pm$017) & 31.31 ($\pm$029) & 170.6 ($\pm$294) & 580.1 ($\pm$444) & 27.56 ($\pm$020)\\
{\bf DIRECT}   & 11.00 ($\pm$000) & 26.00 ($\pm$000) & ~{\selectfont\fontsize{8.7}{2}\fontfamily{pcr}\textbf{34~\!\!\!\!.\!0}} ($\pm$000) & ~{\selectfont\fontsize{8.7}{2}\fontfamily{pcr}\textbf{80~\!\!\!\!.\!0}} ($\pm$000) & ~{\selectfont\fontsize{8.7}{2}\fontfamily{pcr}\textbf{1~\!\!\!\!.\!00}} ($\pm$000)\\
{\bf MLSL}     & 14.53 ($\pm$017) & {\selectfont\fontsize{8.7}{2}\fontfamily{pcr}\textbf{10~\!\!\!\!.\!07}} ($\pm$005) & 118.0 ($\pm$090) & 315.7 ($\pm$384) & 43.10 ($\pm$160)\\
\vspace{-0.3em} \\
{\scriptsize  Target 99\%} & & & & & \\
{\bf AdaRank}  & 16.08 ($\pm$006) & 35.80 ($\pm$013) & ~58.3 ($\pm$023) & 808.6 ($\pm$301) & 184.2 ($\pm$230)\\
{\bf BayesOpt} & 14.99 ($\pm$005) & 32.19 ($\pm$023) & 602.5 ($\pm$376) & 422.0 ($\pm$407) & 37.17 ($\pm$028)\\
{\bf CMA-ES}   & 55.83 ($\pm$041) & 96.71 ($\pm$083) & 426.5 ($\pm$399) & 214.6 ($\pm$198) & 105.7 ($\pm$125)\\
{\bf CRS }     & 57.06 ($\pm$057) & 88.97 ($\pm$045) & 212.9 ($\pm$280) & 599.1 ($\pm$427) & 90.87 ($\pm$039)\\
{\bf DIRECT}   & {\selectfont\fontsize{8.7}{2}\fontfamily{pcr}\textbf{11~\!\!\!\!.\!00}} ($\pm$000) & 55.00 ($\pm$000) & ~{\selectfont\fontsize{8.7}{2}\fontfamily{pcr}\textbf{34~\!\!\!\!.\!0}} ($\pm$000) & ~{\selectfont\fontsize{8.7}{2}\fontfamily{pcr}\textbf{80~\!\!\!\!.\!0}} ($\pm$000) & {\selectfont\fontsize{8.7}{2}\fontfamily{pcr}\textbf{30~\!\!\!\!.\!00}} ($\pm$000)\\
{\bf MLSL}     & 61.79 ($\pm$177) & {\selectfont\fontsize{8.7}{2}\fontfamily{pcr}\textbf{15~\!\!\!\!.\!17}} ($\pm$005) & 121.2 ($\pm$090) & 321.7 ($\pm$382) & 67.41 ($\pm$201)\\
\vspace{-0.3em} \\
\cmidrule[0.5pt]{1-1} \vspace{-0.3em} \\
{\scriptsize  Target 90\%} & & & & & \\
{\bf BayesOpt} & 51/13/36 & 52/04/44 & 19/05/76 & 43/01/56 & 42/13/45\\
{\bf CMA-ES}   & 23/02/75 & 40/03/57 & 21/04/75 & 60/06/34 & 35/09/56\\
{\bf CRS }     & 44/08/48 & 45/03/52 & 23/05/72 & 47/03/50 & 44/03/53\\
{\bf DIRECT}   & 71/08/21 & 88/05/07 & 77/06/17 & 56/02/42 & 96/04/00\\
{\bf MLSL}     & 40/11/49 & 59/07/34 & 27/00/73 & 49/02/49 & 57/14/29\\
\vspace{-0.3em} \\
 {\scriptsize  Target 95\%}  &  &  &  &  &   \\
{\bf BayesOpt} & 35/15/50 & 55/05/40 & 14/05/81 & 44/05/51 & 39/15/46\\
{\bf CMA-ES}   & 16/02/82 & 25/06/69 & 11/03/86 & 62/03/35 & 32/07/61\\
{\bf CRS }     & 33/06/61 & 34/10/56 & 17/06/77 & 31/04/65 & 33/04/63\\
{\bf DIRECT}   & 28/27/45 & 25/15/60 & 26/43/31 & 70/02/28 & 99/01/00\\
{\bf MLSL}     & 37/11/52 & 72/02/26 & 29/00/71 & 52/02/46 & 56/18/26\\
\vspace{-0.3em} \\
 {\scriptsize  Target 99\%}  & & & & & \\
{\bf BayesOpt} & 48/10/42 & 62/02/36 & 03/13/94 & 64/20/16 & 61/07/32\\
{\bf CMA-ES}   & 08/05/87 & 14/06/80 & 09/13/88 & 87/05/08 & 50/03/47\\
{\bf CRS }     & 09/04/87 & 12/14/84 & 08/13/86 & 48/30/22 & 48/04/48\\
{\bf DIRECT}   & 79/10/11 & 02/13/15 & 79/13/07 & 95/00/05 & 59/01/40\\
{\bf MLSL}     & 29/14/57 & 87/05/08 & 31/13/66 & 77/11/12 & 69/08/23\\

\bottomrule
\end{tabular}
}
\caption{Results achieved on the first series of synthetic problems.
The top of the table displays 
the number of evaluations to reach the specified targets
(mean $\pm$ standard deviation).
In bold, the best result obtained  for each target
in terms of average of evaluations.
The bottom of the table displays 
the number of win/tie/loss to {\sc AdaRankOpt}.}
\label{table:num2}
}
\end{table}

\begin{table}[!h]
\centering
{\footnotesize {\tt
\begin{tabular}{@{}lccccc@{}}
\toprule
{\bf Problem} &
\multicolumn{1}{l}{{\bf Rosenbrock}} 
&\multicolumn{1}{c}{{\bf Mishra N.2}}
&\multicolumn{1}{c}{{\bf Linear Slope}} 
& \multicolumn{1}{c}{{\bf Deb N.1}} 
& 
{\bf Griewank N.4}\\
\toprule
{\scriptsize  Target 90\%} & & & & & \\
{\bf AdaRank}  & 10.53 ($\pm$009) & ~{\selectfont\fontsize{8.7}{2}\fontfamily{pcr}\textbf{4~\!\!\!\!.\!84}} ($\pm$003) & 54.60 ($\pm$009) & 950.0 ($\pm$180) & 35.87 ($\pm$016)\\
{\bf BayesOpt} & 11.97 ($\pm$008) & ~5.56 ($\pm$003) & 319.2 ($\pm$406) & 814.7 ($\pm$276) & {\selectfont\fontsize{8.7}{2}\fontfamily{pcr}\textbf{27~\!\!\!\!.\!67}} ($\pm$020)\\
{\bf CMA-ES}   & 16.30 ($\pm$012) & ~5.00 ($\pm$004) & 213.1 ($\pm$105) & 930.1 ($\pm$166) & 66.79 ($\pm$043)\\
{\bf CRS }     & 15.08 ($\pm$014) & ~5.10 ($\pm$005) & 368.1 ($\pm$239) & 980.7 ($\pm$108) & 76.70 ($\pm$044)\\
{\bf DIRECT}   & 10.00 ($\pm$000) & 16.00 ($\pm$000) & 390.0 ($\pm$000) & ~1000 ($\pm$000) & 103.0 ($\pm$000)\\
{\bf MLSL}     & ~{\selectfont\fontsize{8.7}{2}\fontfamily{pcr}\textbf{8~\!\!\!\!.\!82}} ($\pm$005) & ~5.51 ($\pm$003) & {\selectfont\fontsize{8.7}{2}\fontfamily{pcr}\textbf{27~\!\!\!\!.\!48}} ($\pm$036) & {\selectfont\fontsize{8.7}{2}\fontfamily{pcr}\textbf{198~\!\!\!\!.\!0}} ($\pm$326) & 218.1 ($\pm$394)\\
\vspace{-0.3em} \\
{\scriptsize  Target 95\%} & & & & & \\
{\bf AdaRank}  & 14.92 ($\pm$014) & ~7.89 ($\pm$004) & 76.15 ($\pm$015) & 991.8 ($\pm$091) & 185.0 ($\pm$274)\\
{\bf BayesOpt} & 17.39 ($\pm$014) & ~7.76 ($\pm$003) & 467.0 ($\pm$455) & 949.1 ($\pm$153) & {\selectfont\fontsize{8.7}{2}\fontfamily{pcr}\textbf{46~\!\!\!\!.\!43}} ($\pm$027)\\
{\bf CMA-ES}   & 22.09 ($\pm$015) & 10.12 ($\pm$008) & 279.9 ($\pm$100) & 952.0 ($\pm$127) & 138.3 ($\pm$109)\\
{\bf CRS }     & 21.95 ($\pm$018) & 10.38 ($\pm$009) & 553.5 ($\pm$319) & 997.1 ($\pm$038) & 136.3 ($\pm$047)\\
{\bf DIRECT}   & {\selectfont\fontsize{8.7}{2}\fontfamily{pcr}\textbf{10~\!\!\!\!.\!00}} ($\pm$000) & 36.00 ($\pm$000) & 512.0 ($\pm$000) & ~1000 ($\pm$000) & 130.0 ($\pm$000)\\
{\bf MLSL}     & 10.01 ($\pm$006) & ~{\selectfont\fontsize{8.7}{2}\fontfamily{pcr}\textbf{7~\!\!\!\!.\!40}} ($\pm$004) & {\selectfont\fontsize{8.7}{2}\fontfamily{pcr}\textbf{37~\!\!\!\!.\!74}} ($\pm$057) & {\selectfont\fontsize{8.7}{2}\fontfamily{pcr}\textbf{215~\!\!\!\!.\!8}} ($\pm$328) & 282.2 ($\pm$421)\\
\vspace{-0.3em} \\
{\scriptsize  Target 99\%} & & & & & \\
{\bf AdaRank}  & 33.62 ($\pm$029) & 19.33 ($\pm$005) & 127.5 ($\pm$032) & ~1000 ($\pm$000) & ~1000 ($\pm$000)\\
{\bf BayesOpt} & 27.71 ($\pm$027) & 22.63 ($\pm$019) & 468.3 ($\pm$468) & ~1000 ($\pm$000) & 422.7 ($\pm$360)\\
{\bf CMA-ES}   & 43.59 ($\pm$043) & 35.75 ($\pm$021) & 380.0 ($\pm$106) & 962.1 ($\pm$106) & {\selectfont\fontsize{8.7}{2}\fontfamily{pcr}\textbf{267~\!\!\!\!.\!5}} ($\pm$102)\\
{\bf CRS }     & 43.58 ($\pm$043) & 93.78 ($\pm$037) & 612.9 ($\pm$322) & ~1000 ($\pm$000) & 424.8 ($\pm$089)\\
{\bf DIRECT}   & 24.00 ($\pm$000) & 98.00 ($\pm$000) & 910.0 ($\pm$000) & ~1000 ($\pm$000) & 908.0 ($\pm$000)\\
{\bf MLSL}     & {\selectfont\fontsize{8.7}{2}\fontfamily{pcr}\textbf{19~\!\!\!\!.\!72}} ($\pm$051) & {\selectfont\fontsize{8.7}{2}\fontfamily{pcr}\textbf{13~\!\!\!\!.\!66}} ($\pm$007) & ~{\selectfont\fontsize{8.7}{2}\fontfamily{pcr}\textbf{50~\!\!\!\!.\!5}} ($\pm$080) & {\selectfont\fontsize{8.7}{2}\fontfamily{pcr}\textbf{256~\!\!\!\!.\!7}} ($\pm$334) & 451.1 ($\pm$445)\\
\vspace{-0.3em} \\
\cmidrule[0.5pt]{1-1} \vspace{-0.3em} \\
{\scriptsize  Target 90\%} & & & & & \\
{\bf BayesOpt} & 40/08/52 & 36/08/56 & 50/04/46 & 32/59/09 & 64/05/31\\
{\bf CMA-ES}   & 33/05/62 & 45/14/41 & 01/00/99 & 15/78/07 & 25/08/67\\
{\bf CRS }     & 38/03/59 & 52/06/42 & 00/0/100 & 04/86/10 & 27/02/71\\
{\bf DIRECT}   & 34/05/61 & 00/0/100 & 00/0/100 & 00/90/10 & 00/01/99\\
{\bf MLSL}     & 49/05/46 & 35/10/55 & 92/01/07 & 87/12/01 & 68/01/31\\
\vspace{-0.3em} \\
 {\scriptsize  Target 95\%}  &  &  &  &  &   \\
{\bf BayesOpt} & 37/08/55 & 48/08/44 & 42/01/57 & 11/88/01 & 70/07/23\\
{\bf CMA-ES}   & 33/01/66 & 42/07/51 & 00/0/100 & 15/84/01 & 32/03/65\\
{\bf CRS }     & 33/06/61 & 42/04/54 & 00/01/99 & 01/98/01 & 29/05/66\\
{\bf DIRECT}   & 48/03/49 & 00/0/100 & 00/0/100 & 00/99/01 & 23/01/76\\
{\bf MLSL}     & 56/06/38 & 46/11/43 & 90/02/08 & 87/13/00 & 68/00/32\\
\vspace{-0.3em} \\
 {\scriptsize  Target 99\%}  & & & & & \\
{\bf BayesOpt} & 56/04/40 & 58/11/31 & 43/01/56 & 0/100/00 & 75/25/00\\
{\bf CMA-ES}   & 36/06/58 & 18/10/72 & 00/01/99 & 12/88/00 & 99/01/00\\
{\bf CRS }     & 32/03/65 & 04/00/96 & 00/03/97 & 00/100/0 & 100/0/00\\
{\bf DIRECT}   & 36/15/49 & 00/0/100 & 00/0/100 & 00/100/0 & 00/100/0\\
{\bf MLSL}     & 79/01/18 & 81/07/12 & 91/01/08 & 85/15/00 & 63/37/00\\
\bottomrule
\end{tabular}
}
\caption{Results achieved on the second series of synthetic problems.
The top of the table displays 
the number of evaluations to reach the specified targets
(mean $\pm$ standard deviation).
In bold, the best result obtained for each target
in terms of average of evaluations.
The bottom of the table displays 
the number of win/tie/loss to {\sc AdaRankOpt}.}
\label{table:num3}
}
\end{table}

  \begin{figure}[!h]
  \label{fig:comparisonCV}
    {
    \begin{center}
    $\begin{array}{ccc}
      \includegraphics[height=40mm, page=1]{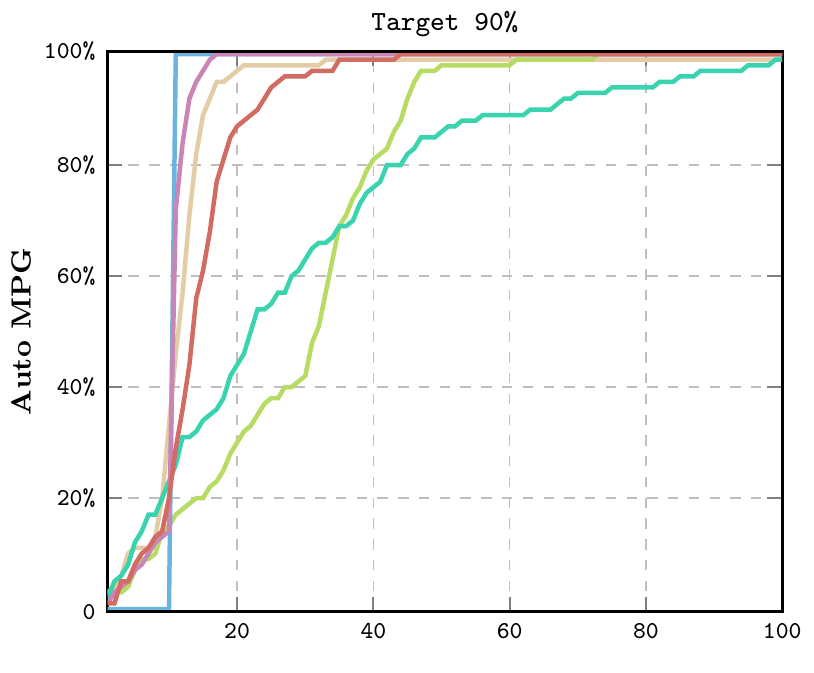}
      & \hspace{-1em} \includegraphics[height=40mm, page=2]{AutoMPG.pdf}
      & \hspace{-1em} \includegraphics[height=40mm, page=3]{AutoMPG.pdf}
      \vspace{-0.8em} \\
      \includegraphics[height=40mm, page=1]{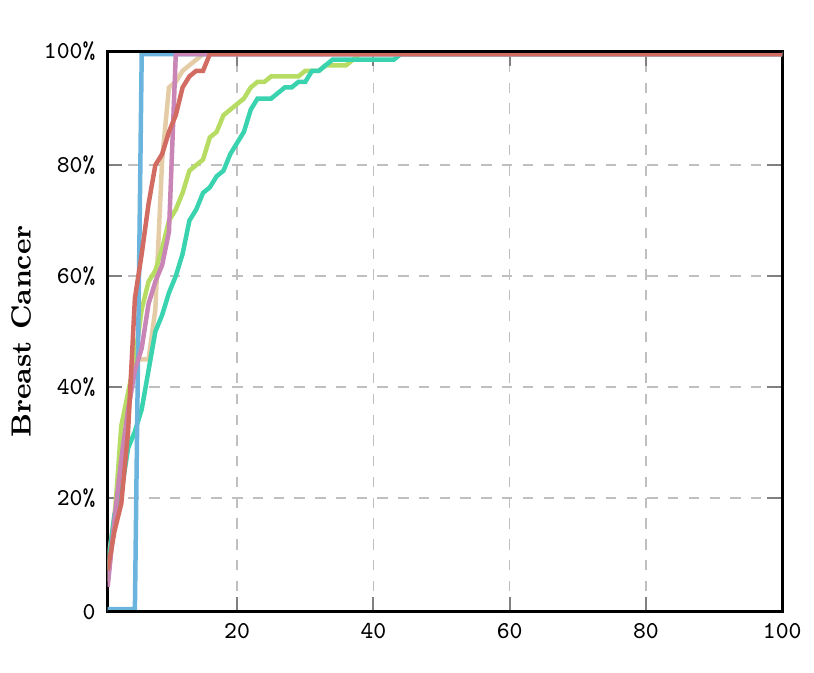}
      & \hspace{-1em} \includegraphics[height=40mm, page=2]{BreastCancer.pdf}
      & \hspace{-1em} \includegraphics[height=40mm, page=3]{BreastCancer.pdf}
      \vspace{-0.8em} \\
      \includegraphics[height=40mm, page=1]{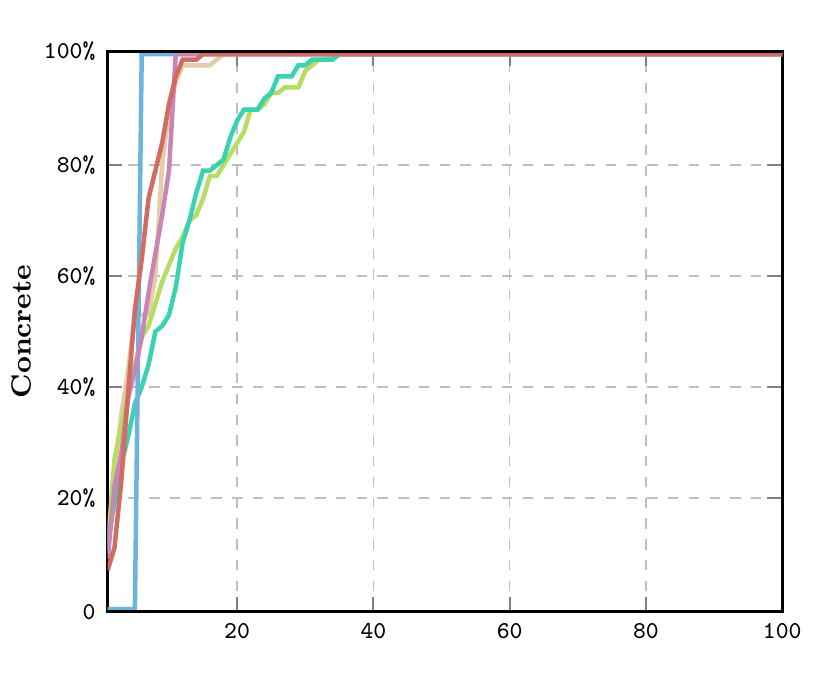}
      & \hspace{-1em} \includegraphics[height=40mm, page=2]{Concrete.pdf}
      & \hspace{-1em} \includegraphics[height=40mm, page=3]{Concrete.pdf}
      \vspace{-0.8em} \\
      \includegraphics[height=40mm, page=1]{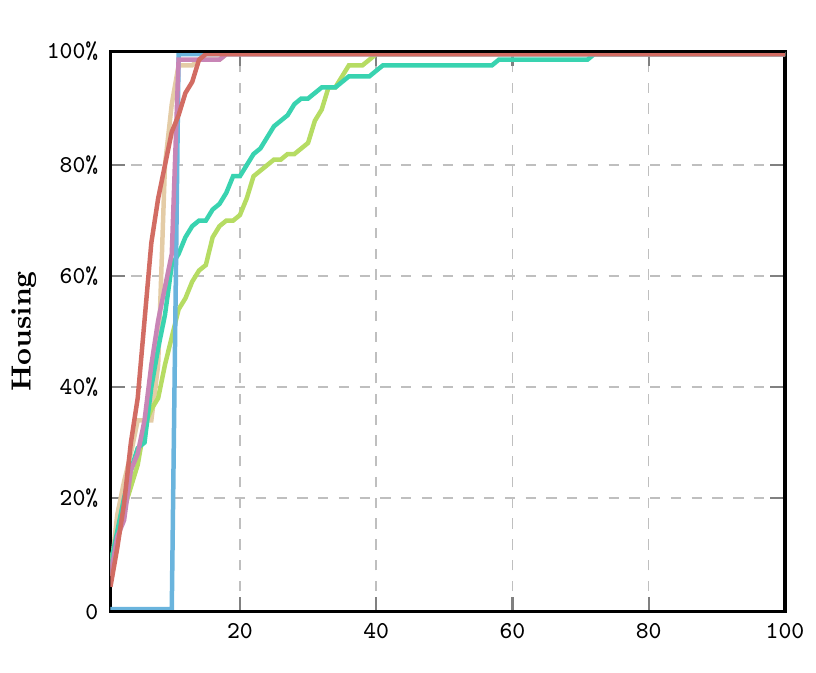}
      & \hspace{-1em} \includegraphics[height=40mm, page=2]{Housing.pdf}
      & \hspace{-1em} \includegraphics[height=40mm, page=3]{Housing.pdf}
      \vspace{-0.8em} \\
      \includegraphics[height=40mm, page=1]{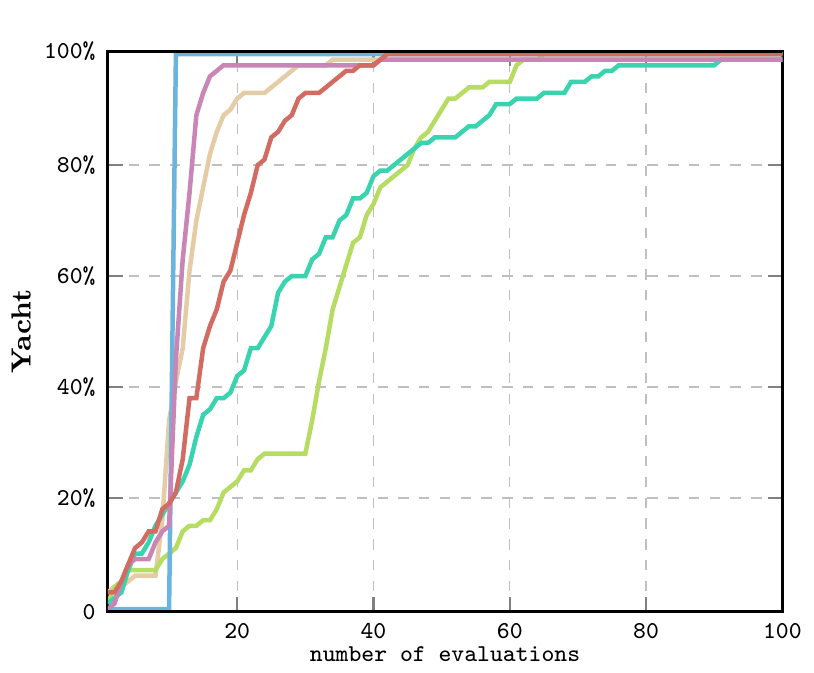}
      & \hspace{-1em} \includegraphics[height=40mm, page=2]{Yacht.pdf}
      & \hspace{-1em} \includegraphics[height=40mm, page=3]{Yacht.pdf}
      \\
      \end{array}$
    \end{center}
    }
  \vspace{-2em}
    \caption{
    Proportion of runs that reached the targets 90\%, 95\% and 99\% 
    in terms of function evaluations on each of the cross-validation problems.
    }
    \label{fig:num1}
  \end{figure}

  \begin{figure}[!h]
  \label{fig:comparisonFirst}
    {
    \begin{center}
    $\begin{array}{ccc}
      \includegraphics[height=40mm, page=1]{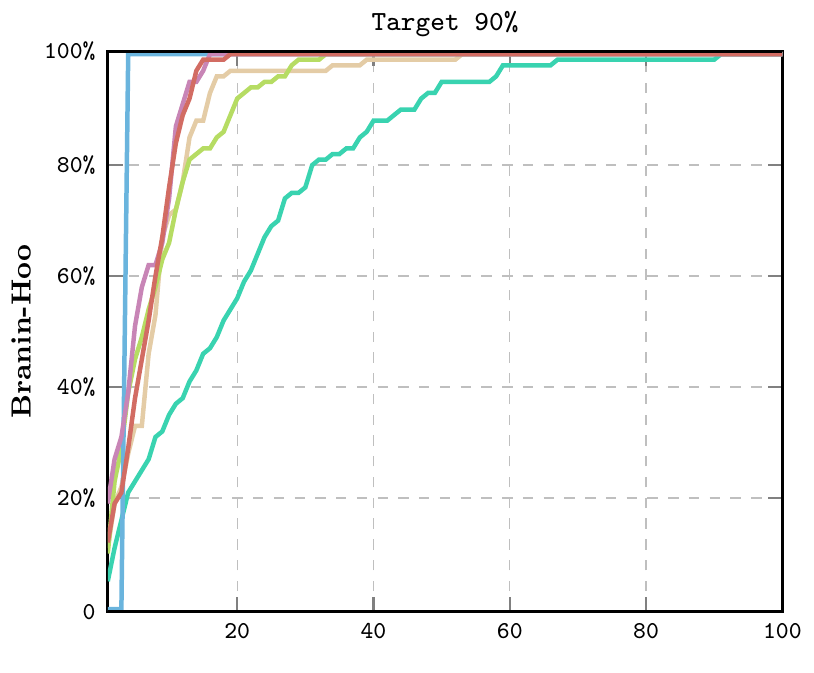}
      & \hspace{-1em} \includegraphics[height=40mm, page=2]{Branin.pdf}
      & \hspace{-1em} \includegraphics[height=40mm, page=3]{Branin.pdf}
      \vspace{-0.8em} \\
      \includegraphics[height=40mm, page=1]{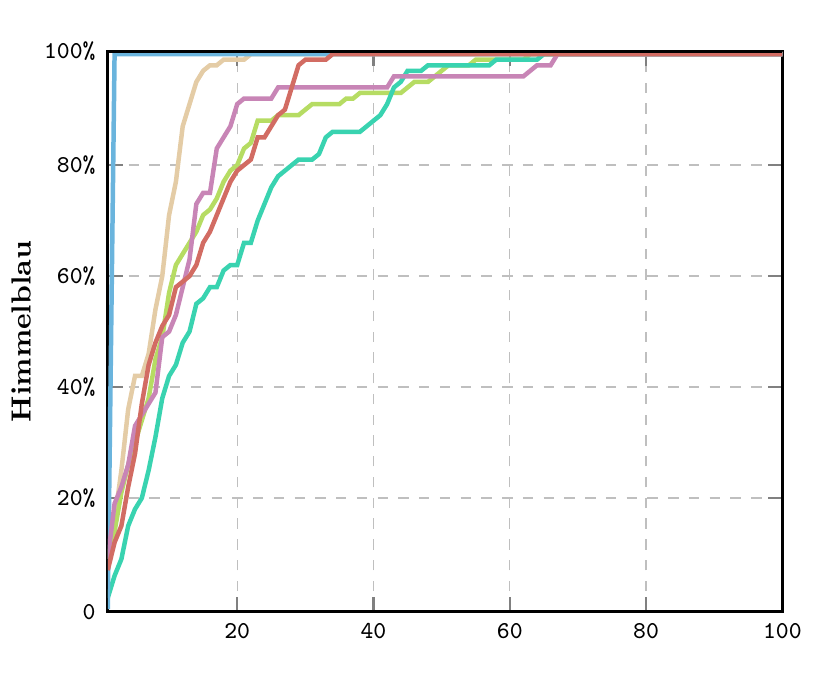}
      & \hspace{-1em} \includegraphics[height=40mm, page=2]{Himmelblau.pdf}
      & \hspace{-1em} \includegraphics[height=40mm, page=3]{Himmelblau.pdf}
      \vspace{-0.8em} \\
      \includegraphics[height=40mm, page=1]{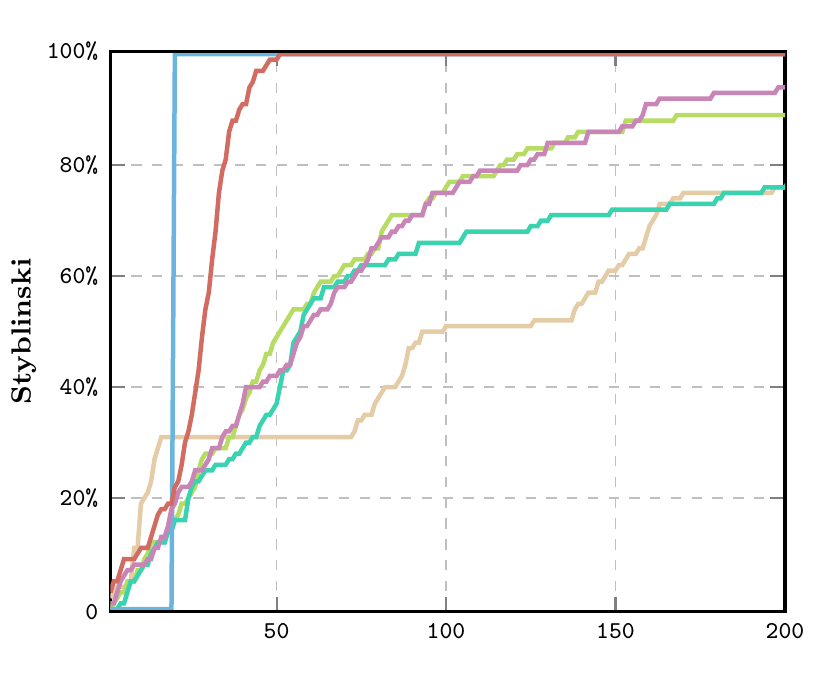}
      & \hspace{-1em} \includegraphics[height=40mm, page=2]{Styblinski.pdf}
      & \hspace{-1em} \includegraphics[height=40mm, page=3]{Styblinski.pdf}
      \vspace{-0.8em} \\
      \includegraphics[height=40mm, page=1]{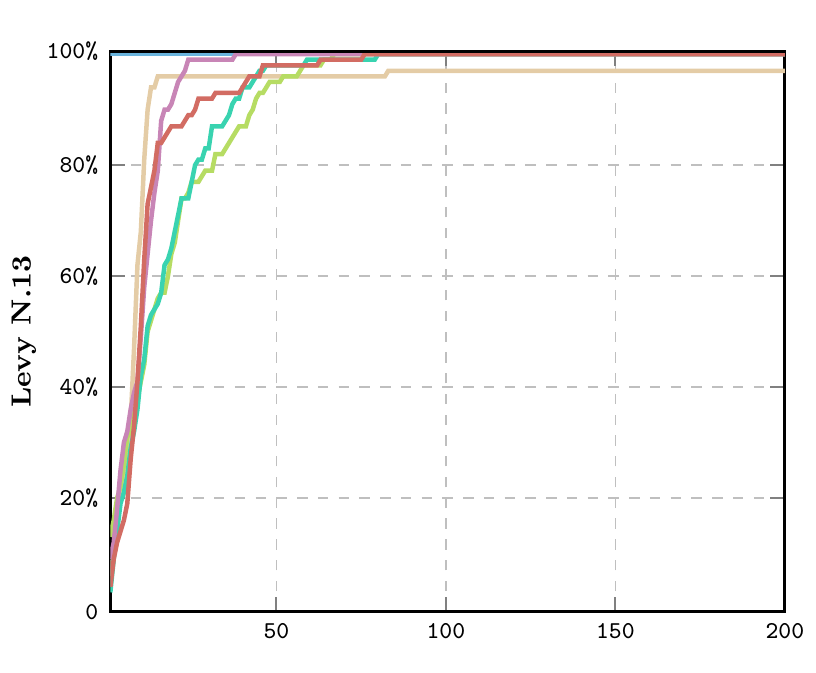}
      & \hspace{-1em} \includegraphics[height=40mm, page=2]{Levy.pdf}
      & \hspace{-1em} \includegraphics[height=40mm, page=3]{Levy.pdf}
      \vspace{-0.8em} \\
      \includegraphics[height=40mm, page=1]{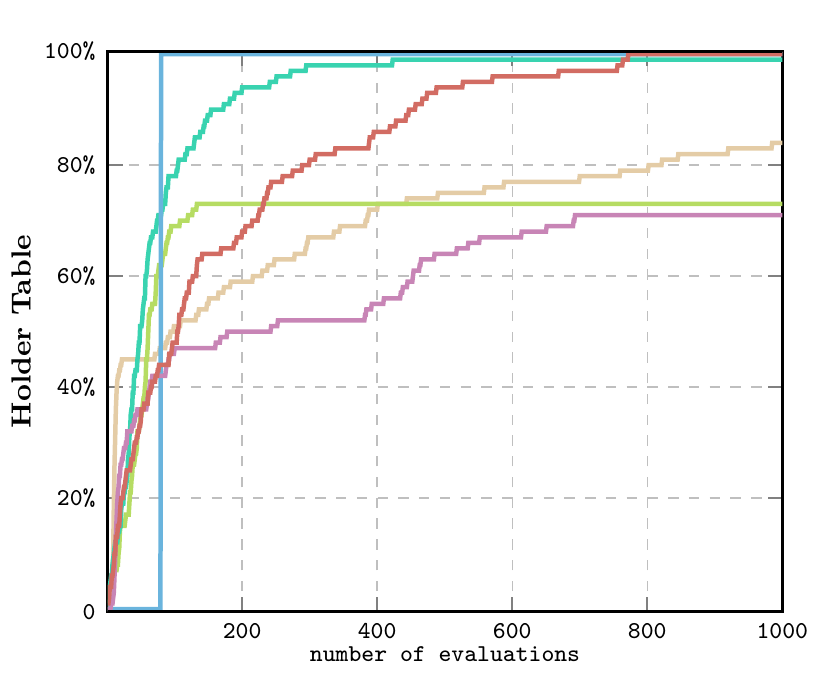}
      & \hspace{-1em} \includegraphics[height=40mm, page=2]{HolderTable.pdf}
      & \hspace{-1em} \includegraphics[height=40mm, page=3]{HolderTable.pdf}
      \\
      \end{array}$
    \end{center}
    }

  \vspace{-2em}
    \caption{
    Proportion of runs that 
    reached the targets 90\%, 95\% and 99\% 
    in terms of function evaluations on each problem of the 
    first series of synthetic functions.
    }
    \label{fig:num2}
  \end{figure}

  \begin{figure}[!h]
  \label{fig:comparisonFirst}
    {
    \begin{center}
    $\begin{array}{ccc}
      \includegraphics[height=40mm, page=1]{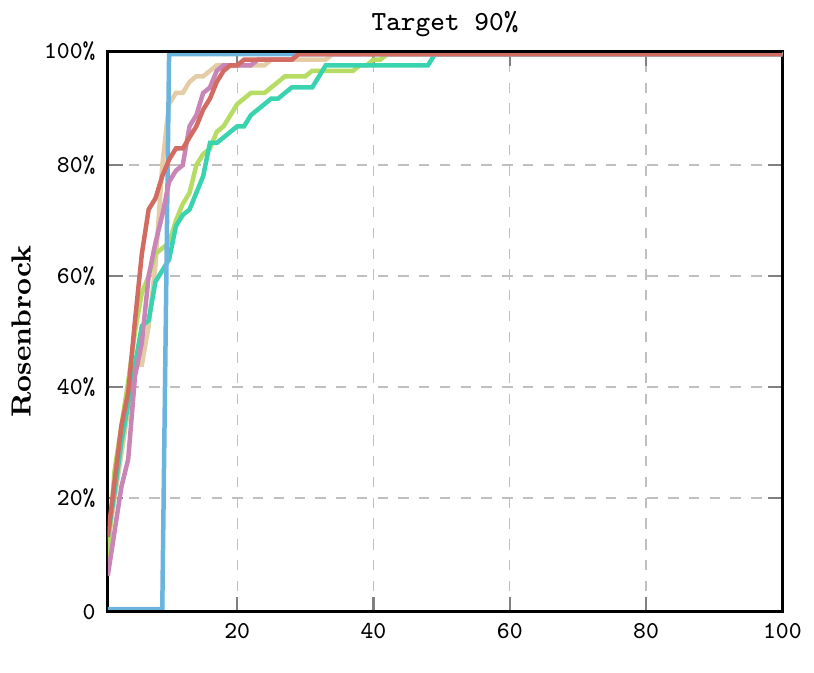}
      & \hspace{-1em} \includegraphics[height=40mm, page=2]{Rosenbrock.pdf}
      & \hspace{-1em} \includegraphics[height=40mm, page=3]{Rosenbrock.pdf}
      \vspace{-0.8em} \\
      \includegraphics[height=40mm, page=1]{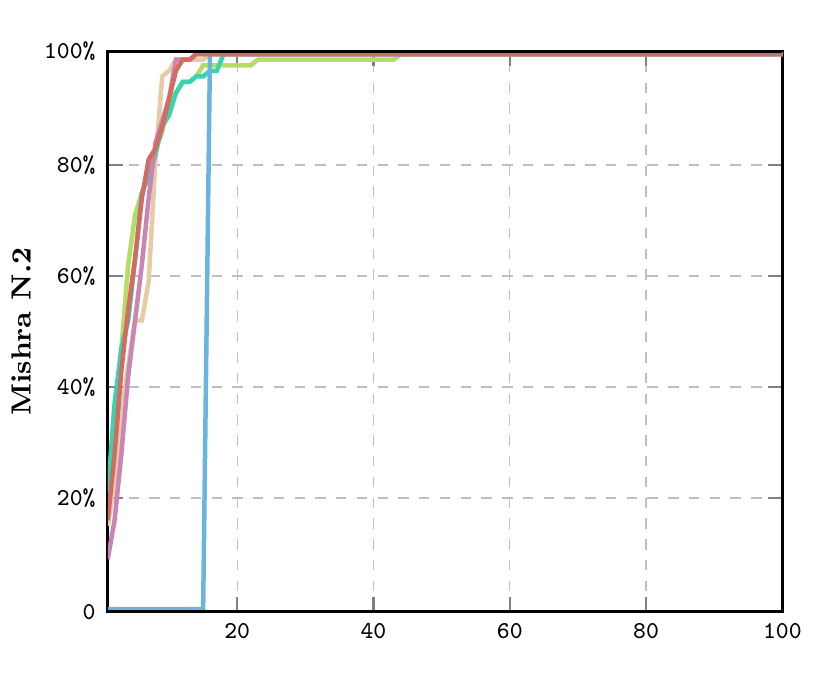}
      & \hspace{-1em} \includegraphics[height=40mm, page=2]{Mishra.pdf}
      & \hspace{-1em} \includegraphics[height=40mm, page=3]{Mishra.pdf}
      \vspace{-0.8em} \\
      \includegraphics[height=40mm, page=1]{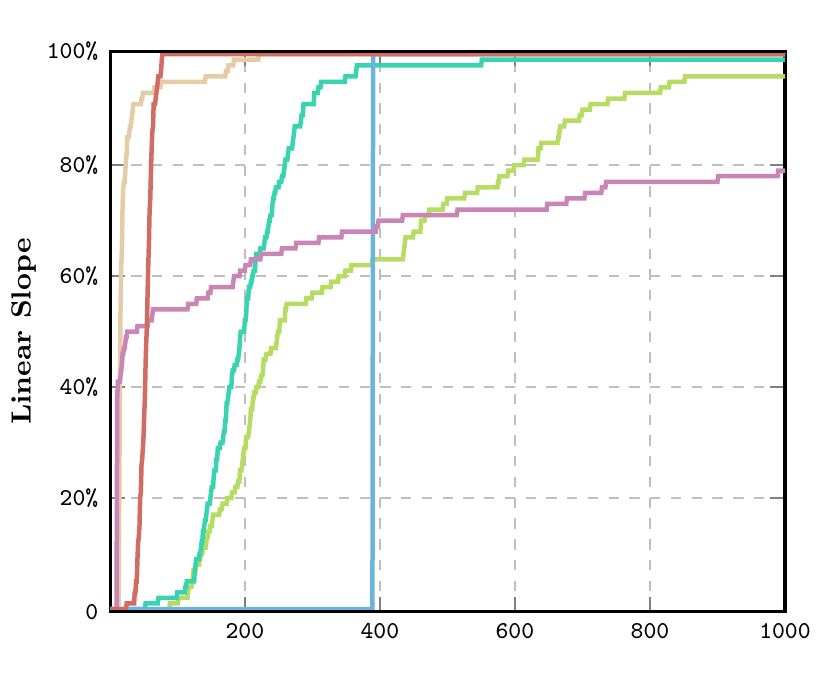}
      & \hspace{-1em} \includegraphics[height=40mm, page=2]{LinearSlope.pdf}
      & \hspace{-1em} \includegraphics[height=40mm, page=3]{LinearSlope.pdf}
      \vspace{-0.8em} \\
      \includegraphics[height=40mm, page=1]{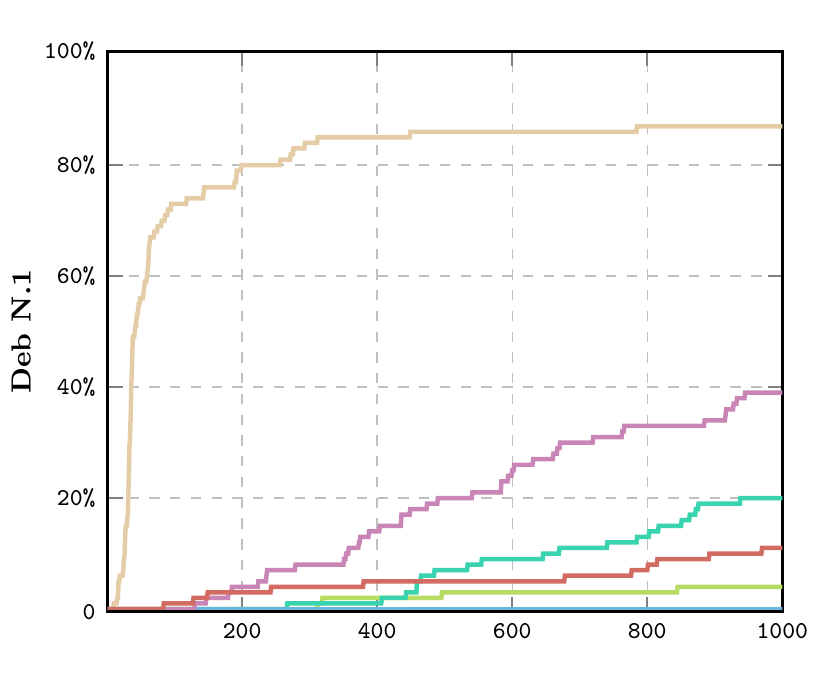}
      & \hspace{-1em} \includegraphics[height=40mm, page=2]{Deb.pdf}
      & \hspace{-1em} \includegraphics[height=40mm, page=3]{Deb.pdf}
      \vspace{-0.8em} \\
      \includegraphics[height=40mm, page=1]{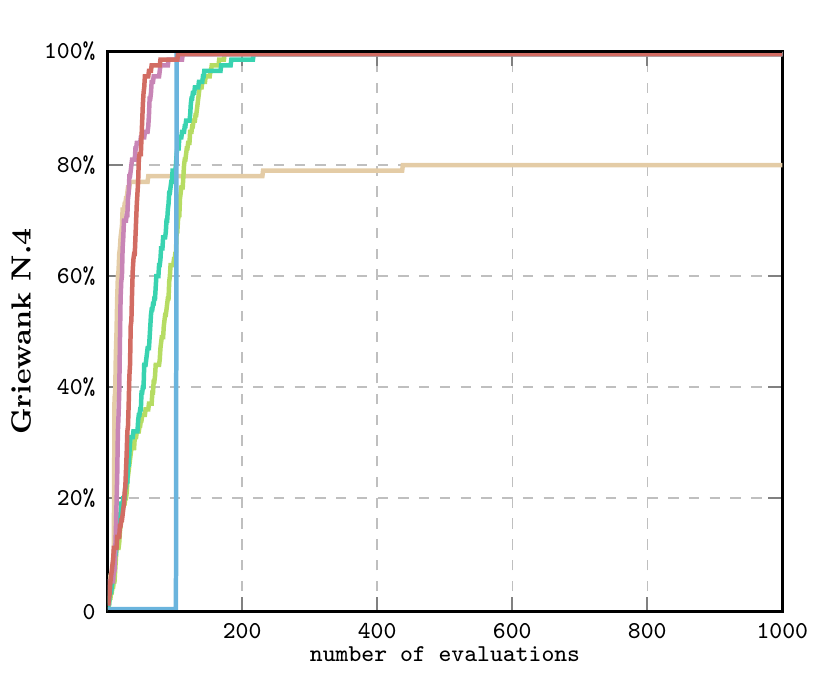}
      & \hspace{-1em} \includegraphics[height=40mm, page=2]{Griewank.pdf}
      & \hspace{-1em} \includegraphics[height=40mm, page=3]{Griewank.pdf}
      \\
      \end{array}$
    \end{center}
    }
  \vspace{-2em}
    \caption{
    Proportion of runs that reached 
    the targets 90\%, 95\% and 99\% 
    in terms of function evaluations on each problem of the 
    second series of synthetic functions.
    }
    \label{fig:num3}
  \end{figure}


  \section{Conclusion and future work}
  
  The major contribution of the paper was to
  show how to apply the principles of bipartite ranking to the 
  global optimization problem. We introduced two novel global optimization strategies 
  based on a sequential estimation of the ranking rule induced by the unknown function:
  {\sc RankOpt} which requires the knowledge of a ranking structure
  containing the induced ranking rule of the unknown function
  and its adaptive version {\sc AdaRankOpt}
  which performs model selection.
  A theoretical analysis of the algorithms is provided and 
  empirical results based on synthetic and real
  problems have also been obtained, 
  demonstrating the competitiveness of the adaptive version of the algorithm
  with regards to the existing state-of-the-art global optimization methods.
  Several questions are left open for future work. 
  First, our theoretical analysis suggest that the characterization of real-valued 
  functions given a particular
  ranking structure could be refined in order to identify
  the classes of functions providing the exponentially 
  decreasing loss obtained in the lower bound.
  Second,
  our numerical studies also suggest that
  the empirical performance of the algorithm could be improved by
  (i) investigating the choice of the sequence of ranking structures set as input
  and (ii) 
  allowing the algorithm do deal with noisy evaluations.
  Last, determining whether using a more aggressive sampling strategy
  would lead to better empirical results without deteriorating the convergence properties
  of the algorithm is left as a completely open question.
  
%

\newpage
\appendix

\section{Proof of Proposition \ref{prop:rankingequivalence}}

We develop here the proof for the equivalence class of real-valued functions
sharing the same induced ranking stated in Proposition \ref{prop:rankingequivalence}.\\

\noindent {\bf Proof of proposition \ref{prop:rankingequivalence}}
  \noindent $(\Leftarrow) \!$ Assume that there exists a strictly increasing function
  $\psi : \R \rightarrow \R$
  such that
  $h = \psi \circ f$.
  Since $\psi$ is strictly increasing, it directly follows that $\forall (x,x')\in\X^2$,
  \[
    r_{h}(x,x')=\sgn{\psi \circ f(x) - \psi \circ f(x')}= \sgn{f(x) - f(x') }=r_{f}(x,x').
  \]

  ~\\
  \noindent $(\Rightarrow) \!$ Assume now that $\forall (x,x')\in\X^2$,
  $r_f(x,x')=r_{h}(x,x')$.
  First, note that 
  if $\forall (x,x')\in\X^2$ $r_{f}(x,x')=r_h(x,x')=0$,
  both $f=c_1$ and $h=c_2$ are constant over $\X$
  and then $h= \psi \circ f$ where $\psi: x \mapsto x + (c_2-c_1)$
  is a strictly increasing function.
  We now consider the case where $f$ is not constant over $\X$
  and 
  we start to show that there exists a strictly increasing function
  $\psi : \R \rightarrow \R$ such that
  $f= \psi \circ M$ where $M: \X \rightarrow [0,1]$ is defined for all $x\in\X$ by
  \[
    M: x \mapsto \int_{x' \in \X} \indic{r_f(x,x')<0} ~\text{d}x'
    = \mu( \{x' \in \X: f(x') < f(x)\}  ).
  \]
  To properly define $\psi$,
  we first need to ensure that the function $f$ is constant over
  the iso-level set
  $M^{-1}(y)= \{x \in \X: M(x)=y \}$ for all $y \in \text{Im}(M)$.
  To do so, fix any $y \in \text{Im}(M)$,
  pick any $(x_1,x_2)\in M^{-1}(y) \times M^{-1}(y)$
  and assume by contradiction and without loss of generality
  that $f(x_1) < f(x_2)$.
  As the ranking rules $r_f$ and $r_h$ are assumed to be equal over $\X \times \X$,
  we have that
  (i) $h(x_1) < h(x_2)$
  and (ii) $M(x_i) = \mu( \{x': h(x') < h(x_i) \})$, $i \in \{1,2 \}$.
  Hence putting (i) and (ii)  altogether
  with the continuity of $h$ leads us to the contradiction
  $$
    M(x_1)= \mu( \{x' \in \X: h(x')<h(x_1) \})<\mu( \{x' \in \X: h(x') < h(x_2) \})=M(x_2)
  $$
  and we deduce that $f$ is constant over any iso-level set of $M$.
  Now, denoting by $f(M^{-1}(y))$ the unique value of $f$ over $M^{-1}(y)$,
  we are ready to introduce the restriction 
  of the function $\psi$ over Im$(M)$ defined by
  \[
    \psi_{\textrm{Im}(M)}: y \in \textrm{Im}(M) \mapsto f(M^{-1}(y)) \in \R.
  \]
  As $\forall x \in \X$, $\psi_{\textrm{Im}(M)}(M(x) )=f(x)$,
  it follows from the continuity of $h$ that
  $\forall y_1<y_2 \in \textrm{Im}(M) \times \textrm{Im}(M)$, 
  $\psi_{\textrm{Im}(M)}(y_1) < \psi_{\textrm{Im}(M)}(y_2)$. Hence
  $\psi_{\textrm{Im}(M)}$ is strictly increasing over Im$(f)$ and
  one can then write that $f= \psi \circ M$ where $\psi : \R \rightarrow \R$
  is any strictly increasing extension of the function
  $\psi_{\textrm{Im}(M)}$  over $\R$.
  In addition, it can easily be shown 
  by reproducing the same steps as previously with the function $h$
  that there also exists a strictly increasing function
  $\psi': \R \rightarrow \R$ such that $h= \psi' \circ M$.
  Hence, the desired result follows by
  writing that $h= \psi' \circ M = (\psi' \circ \psi^{-1} )\circ f$
  where  $\psi' \circ \psi^{-1}: \R \rightarrow \R$ is a strictly increasing function.
\hfill\(\Box\)

\section{Analysis of the  \textbf{\textsc{RankOpt}} algorithm}

In this section, we develop the full proofs of Proposition \ref{th:fasterprs},
Corollary \ref{coro:consistencyrank}, Theorem \ref{th:upperbound},
Proposition \ref{prop:slowerpas} and Theorem \ref{th:lowerbound}.

\subsection{Generic results and technical lemmas}

 We start by casting a simple property (Proposition \ref{prop:chain}) and
 two technical lemmas (Lemma \ref{lem:zab}
 and Lemma \ref{lem:inclusion})
 that will be used throughout the proofs.

\begin{proposition} {\sc (RankOpt process)}
\label{prop:chain}
  Consider that the assumptions of Proposition \ref{prop:slowerpas} are fulfilled.
  Then, the sequence of evaluation points $\{ X_i\}_{i=1}^n$ generated by 
  the \textsl{\textsc{RankOpt}} algorithm after $n$ iterations,
  that will be denoted in the sequel by 
  $\{ X_i\}_{i=1}^n \sim$ \textsl{\textsc{RankOpt}}$(n,$ $f,$ $\X,$ $\Rank)$,
  is distributed as follows:
  \begin{equation*}
    \begin{cases}
      X_1 \sim \mathcal{U}(\X) \\
      X_{t+1}|~\! \{X_i \}_{i=1}^t \sim \mathcal{U}(\X_t)
      \textrm{~~~~~~~~~} \forall t \in \{1 \ldots n-1 \}
    \end{cases}
  \end{equation*}
  where at each step $t\geq 1$
  the sampling area $\X_t := \{x \in \X: 
  \exists r \in \Rank_t~\text{such that}~r(x,X_{\hatit})\geq 0 \}$
  satisfies 
  \[
   \{x \in \X: f(x) \geq f(X_{\hatit}) \} 
   \subseteq \X_t \subseteq \X
  \]
  where $\hatit \in \arg\max_{i=1\dots t}f(X_i) $.
\end{proposition}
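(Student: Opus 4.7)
The proof is essentially a formal transcription of what the algorithm does: the distributional statement amounts to recognizing a rejection-sampling procedure, and the set-theoretic inclusion reduces to observing that $r_f$ is always a member of the active subset $\Rank_t$. I would organize the argument around these two independent claims.

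First, I would prove the key lemma that $r_f \in \Rank_t$ for every $t \geq 1$. By the hypothesis of Proposition \ref{th:fasterprs}, $r_f \in \Rank$, and by definition $L_t(r_f) = \frac{2}{t(t-1)} \sum_{i < j} \indic{r_f(X_i, X_j) \neq r_f(X_i, X_j)} = 0$ identically. Hence $r_f$ lies in $\Rank_t = \{r \in \Rank : L_t(r) = 0\}$. The inclusion $\X_t \subseteq \X$ is immediate from the definition of $\X_t$. For the other inclusion, fix $x \in \X$ with $f(x) \geq f(X_{\hatit})$. Then $r_f(x, X_{\hatit}) = \sgn(f(x) - f(X_{\hatit})) \in \{0, 1\}$, so the ranking rule $r = r_f$, which belongs to $\Rank_t$ by the previous step, witnesses that $x \in \X_t$.

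For the distributional claim, I would argue by induction on $t$. The base case is immediate since $X_1 \sim \mathcal{U}(\X)$ by the initialization step. For the inductive step, note that the inner \textsl{Repeat} loop of {\sc RankOpt} is a textbook rejection-sampling scheme: the proposal distribution is $\mathcal{U}(\X)$, the acceptance region is precisely $\X_t$ (by unfolding the decision rule, which accepts $X_{t+1}$ iff there exists $r \in \Rank_t$ such that $r(X_{t+1}, X_{\hatit}) \geq 0$), and the proposals made at different inner iterations are i.i.d. conditionally on the history $\mathcal{F}_t := \sigma(X_1, \dots, X_t)$. A standard calculation then shows that, conditional on $\mathcal{F}_t$ and on the event that the loop accepts, the accepted point is distributed as the restriction of $\mathcal{U}(\X)$ to $\X_t$, i.e., as $\mathcal{U}(\X_t)$.

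The main technical subtlety is ensuring that the rejection loop terminates almost surely so that the conditional distribution is well defined, which requires $\mu(\X_t) > 0$. I would resolve this by invoking the inclusion already proved: $\X_t$ contains the super-level set $\{x \in \X : f(x) \geq f(X_{\hatit})\}$. Since $\hatit$ is selected precisely so that $f(X_{\hatit}) = \max_{i \leq t} f(X_i)$ and $X_{\hatit}$ itself was drawn from a uniform distribution on a previous sampling region of positive measure, this super-level set is non-trivial; more importantly, any failure of termination occurs on a Lebesgue-null event on which the statement is vacuous, so the proposition holds under the natural convention that the distributional identity is asserted on the event $\{\mu(\X_t) > 0\}$. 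Apart from this edge case, every step is direct, and no further machinery is needed.
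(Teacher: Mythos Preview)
Your proof is correct and follows the same approach as the paper: the set-theoretic inclusion is established identically via $r_f \in \Rank_t$ (since $L_t(r_f)=0$), and the distributional statement is, as you say, a direct reading of the algorithm as rejection sampling. The paper in fact treats the latter even more briefly, simply declaring it ``a direct consequence of the definition of the algorithm'' without spelling out the rejection-sampling interpretation or the termination caveat you raise.
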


\begin{proof}
  The first part of the proposition 
  is a direct consequence of 
  the definition of the algorithm.
  Noticing that $\X_t$ is a subset of $\X$ gives
  the second inclusion of the second part of the proposition.
  To prove the first inclusion fix any $t \geq 1$,
  pick any $x \in \X $ satisfying $f(x) \geq f(X_{\hatit})$ and
  observe that $r_f(x, X_{\hatit})= \sgn{f(x) - f(X_{\hatit})} \geq 0$.
  As $L_{t}(r_f) =0$ by definition,
  it necessarily follows that $r_f \in \Rank_t$.
  Hence there exists $r=r_f \in \Rank_t$ 
  such that $r(x, X_{\hatit}) \geq 0$
  and we deduce that $\{ x \in \X: f(x) \geq f(X_{\hatit}) \}
  \subseteq \X_t $ which concludes the proof.
\end{proof}

\noindent The next lemmas (Lemma \ref{lem:zab}
 and Lemma \ref{lem:inclusion})
will be important in order to state the nonasymptotic performance
of the algorithm ({\it i.e.}~Theorems \ref{th:upperbound} and \ref{th:lowerbound}).

\begin{lemma} $\normalfont{(}$From \cite{zabinsky1992pure},
see Appendix Section$\normalfont{)}$
\label{lem:zab}
  Let $\X \subset \R^d$ be any compact and convex set
  with non-empty interior.
  Then, for any $x^{\star} \in \X$ and
  any $r \in (0, \diam{\X} )$, we have that
  \[
    \frac{\mu( B(x^{\star}, r) \cap \X )}{\mu( \X)}
    \geq \left( \frac{r}{\diam{\X}} \right)^{d}.
  \]
\end{lemma}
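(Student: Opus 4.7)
The plan is to exhibit an explicit subset of $B(x^\star, r) \cap \X$ whose volume is at least $(r/\diam{\X})^d \mu(\X)$, constructed as the image of $\X$ under a scaling map centered at $x^\star$. Write $D = \diam{\X}$ and define the affine contraction
\[
T : \X \to \R^d, \qquad T(x) = x^\star + \frac{r}{D}(x - x^\star).
\]
The key observation is that $T(\X)$ simultaneously lies inside $\X$ (by convexity) and inside $B(x^\star, r)$ (by the diameter bound), so its measure lower-bounds $\mu(B(x^\star, r) \cap \X)$; and its measure is exactly $(r/D)^d \mu(\X)$ because $T$ is affine with linear part $(r/D)\,\mathrm{Id}$.

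More concretely, I would carry out the three verifications in order. First, for any $x \in \X$, the point $T(x)$ is on the segment $[x^\star, x]$ since $T(x) = (1 - r/D) x^\star + (r/D) x$ with $r/D \in (0,1)$; convexity of $\X$ together with $x^\star, x \in \X$ then gives $T(x) \in \X$, so $T(\X) \subseteq \X$. Second, $\|T(x) - x^\star\|_2 = (r/D)\|x - x^\star\|_2 \le (r/D) \cdot D = r$, because $\|x - x^\star\|_2 \le \diam{\X} = D$ by definition of the diameter; hence $T(\X) \subseteq B(x^\star, r)$. Combining, $T(\X) \subseteq B(x^\star, r) \cap \X$. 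Third, the Jacobian of $T$ is $(r/D) \, I_d$, so $|\det DT| = (r/D)^d$ and the change-of-variables formula yields $\mu(T(\X)) = (r/D)^d \mu(\X)$.

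Putting the three pieces together,
\[
\mu(B(x^\star, r) \cap \X) \;\ge\; \mu(T(\X)) \;=\; \left(\frac{r}{D}\right)^{\!d} \mu(\X),
\]
which after dividing by $\mu(\X) > 0$ (nonempty interior ensures this is positive and finite since $\X$ is bounded) is precisely the claimed inequality. There is no real obstacle here; the only subtlety worth double-checking is that $\mu(\X) > 0$, which is guaranteed by the assumption that $\X$ has nonempty interior, and that the contraction factor $r/D$ lies in $(0,1)$, which is exactly the hypothesis $r \in (0, \diam{\X})$.
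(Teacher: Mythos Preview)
Your proof is correct and follows essentially the same approach as the paper: both introduce the contraction $x \mapsto x^\star + (r/\diam{\X})(x - x^\star)$, show its image lies in $B(x^\star, r) \cap \X$ via convexity and the diameter bound, and then compare volumes. Your final step via the Jacobian is slightly more direct than the paper's detour through ratios of ball volumes, but the argument is the same.
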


\begin{proof}
  Introduce the similarity transformation  $S: \R^d \rightarrow \R^d$
  defined by
  \[
    S: x \mapsto x^{\star} + \frac{r}{\diam{\X}} (x-x^{\star})
  \]
  and let $S(\X)=\{S(x) : x \in \X \}$ be the image of $\X$ by $S$.
  Since $x^{\star} \in \X$ and
  $\max_{x \in \X}\norm{x-x^{\star}}_2 \leq \diam{\X}$ by definition,
  it follows from the convexity of $\X$ that $S(\X) \subseteq B(x^{\star},r)\cap \X$
  which directly implies that $\mu( B(x^{\star},r)\cap \X ) \geq \mu( S(\X))$.
  Moreover, as $S$ is a similarity transformation conserves
  the ratios of the volumes before/after transformation, we have that
  \[
    \frac{\mu( B(x^{\star}, r) \cap \X )}{\mu( \X)}
    \geq \frac{\mu(S(\X))}{\mu(\X)}
    = \frac{\mu( S( B(x^{\star}, \diam{\X}) ) )}{ \mu( B(x^{\star}, \diam{\X}) )}
    = \frac{ \mu( B(x^{\star},r ) )   }{ \mu( B(x^{\star},\diam{\X}) ) }
  \]
  which combined with the fact that
  $\forall r \geq 0$,
  $\mu(B(x^{\star},r))=\pi^{d/2}r^{d}/\Gamma(d/2+1)$
  where $\Gamma(\cdot)$ stands
  for the standard gamma function
  gives the result.
\end{proof}

\noindent The next lemma will be useful in order to control the volume of level sets
of a function with $(c_{\alpha}, \alpha)$-regular level sets.

\begin{lemma}
\label{lem:inclusion}
  Let $\X \subset \R^d$ be any compact and convex set and let
  $f\in \mathcal{C}^0(\X, \R)$ be any continuous function
  with ($c_{\alpha}, \alpha$)-regular level sets (Condition \ref{cond:levelset}).
  Then, for any radius $r \in (0, \max_{x \in \X} \norm{x^{\star}-x}_2)$,
  we have that
  \[
    \X \cap B(x^{\star}, (r/c_{\alpha})^{1+\alpha})
    \subseteq \{x \in \X: f(x) \geq \textstyle{\min_{x_r \in \mathcal{S}_r }f(x_r)} \}
    \subseteq  B(x^{\star}, c_{\alpha} \cdot r^{1/(1+\alpha)})
  \]
  where
  $\mathcal{S}_r= \{x \in \X: \norm{x^{\star}-x}_2=r \} $.
\end{lemma}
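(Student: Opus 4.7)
The plan is to prove the two set inclusions separately by combining the intermediate value theorem with the regularity condition, using the straight segment from $x^\star$ as our path. First some preliminaries: since $\X$ is convex and contains $x^\star$, the continuous map $x \mapsto \norm{x^\star-x}_2$ is a surjection from $\X$ onto $[0, \max_{x\in\X}\norm{x^\star-x}_2]$ (by one-dimensional IVT along segments emanating from $x^\star$), so $\mathcal{S}_r$ is nonempty and, being closed in the compact $\X$, compact. Hence $y_r := \min_{x_r \in \mathcal{S}_r} f(x_r)$ is attained at some $x_r^\star \in \mathcal{S}_r$ with $\norm{x^\star - x_r^\star}_2 = r$, and by uniqueness of the maximizer (part 1 of Condition \ref{cond:levelset}) we have $y_r < f(x^\star) = \max_{\X} f$.

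For the upper inclusion, I would fix $x \in \X$ with $f(x) \geq y_r$, set $y := f(x) \in [y_r, \max_{\X} f]$, and apply IVT to $f$ restricted to the segment $[x^\star, x_r^\star]$ (contained in $\X$ by convexity): this produces $x_2$ on the segment with $f(x_2) = y$ and $\norm{x^\star - x_2}_2 \leq r$. Then by the regularity of $f^{-1}(y)$,
\[
\norm{x^\star - x}_2 \leq \max_{x' \in f^{-1}(y)}\!\norm{x^\star-x'}_2 \leq c_\alpha \min_{x' \in f^{-1}(y)}\!\norm{x^\star-x'}_2^{1/(1+\alpha)} \leq c_\alpha\norm{x^\star-x_2}_2^{1/(1+\alpha)} \leq c_\alpha r^{1/(1+\alpha)}.
\]

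For the lower inclusion, I would argue by contradiction. Take $x \in \X$ with $\norm{x^\star-x}_2 \leq (r/c_\alpha)^{1+\alpha}$ (the case $x=x^\star$ is trivial since $f(x^\star) > y_r$) and suppose $f(x) < y_r$. Applying IVT to $f$ along the segment $[x^\star, x]$, whose endpoint values are $\max_\X f$ and $f(x) < y_r < \max_\X f$, I obtain $x_1 = x^\star + t_1(x-x^\star)$ with $t_1 \in (0,1)$ and $f(x_1) = y_r$, so that $\norm{x^\star - x_1}_2 = t_1 \norm{x^\star-x}_2 < (r/c_\alpha)^{1+\alpha}$. The regularity condition applied to $f^{-1}(y_r)$ then gives
\[
\max_{x'\in f^{-1}(y_r)}\!\norm{x^\star - x'}_2 \leq c_\alpha \norm{x^\star - x_1}_2^{1/(1+\alpha)} < c_\alpha\bigl((r/c_\alpha)^{1+\alpha}\bigr)^{1/(1+\alpha)} = r,
\]
which contradicts $x_r^\star \in f^{-1}(y_r)$ with $\norm{x^\star-x_r^\star}_2 = r$.

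The main delicate point is securing the strict inequality in this last step: it requires $x_1$ to lie in the open segment $(x^\star, x)$, i.e.\ $t_1 \in (0,1)$ rather than merely $[0,1]$. This amounts to checking that $y_r$ is different from both endpoint values of $f$ on the segment, which is exactly where uniqueness of the maximizer (ruling out $t_1 = 0$) and the contradiction hypothesis $f(x) < y_r$ (ruling out $t_1 = 1$) are simultaneously needed.
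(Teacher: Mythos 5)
Your proof is correct and takes essentially the same route as the paper's: both inclusions follow from the intermediate value theorem applied along segments emanating from $x^{\star}$ (to the minimizer on $\mathcal{S}_r$, resp.\ to the candidate point), combined with the two-sided regularity condition on the level set $f^{-1}(y)$. The only differences are cosmetic --- you prove the second inclusion directly rather than by contradiction, and you make explicit the strictness/edge cases (uniqueness of $x^{\star}$, the case $x=x^{\star}$) that the paper's proof uses implicitly.
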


\begin{proof}\sloppy
  We start with the second inclusion.
  First, we show that for all $ y \in [ \min_{x \in \mathcal{S}_r}f(x), f(x^{\star})]$,
  there exists $x_y \in f^{-1}(y):= \{x \in \X: f(x)=y \} $
  such that $\norm{x^{\star}-x_y}\leq r$.
  Consider any $y \in [ \min_{x \in \mathcal{S}_r}f(x), f(x^{\star})]$,
  pick any $x_r \in \arg\min_{x \in \mathcal{S}_r}f(x)$
  and introduce the function $F: [0,1] \rightarrow \R$ defined by
  \[
    F: \lambda \mapsto f( (1-\lambda)x^{\star} + \lambda x_r),
  \]
  which returns the value of the function $f$ over the segment $[x^{\star}, x_r]$.
  As (i) $F$ is continuous, (ii) $F(0)=f(x^{\star})$, 
  (iii) $F(1)=\min_{x \in \mathcal{S}_r}f(x)$
  and (iv) $y \in [F(1), F(0)]$,
  it follows from the intermediate value theorem that
  there exists $\lambda_y\in [0,1]$ such that $F_{x_r}(\lambda_y)=y$.
  Hence there exists $x_y=\lambda_y x^{\star} +(1-\lambda_y)x_r \in f^{-1}(y)$
  such that $\rVert x^{\star}- x_y \lVert_2 \leq  \rVert x^{\star}-x_r \lVert_2=r$.
  Keeping in mind the previous statement,
  we may now prove the second inclusion.
  Assume by contradiction that there exists $x_y' \in f^{-1}(y)$ 
  such that $\rVert x^{\star}-x_y' \lVert_2>c_{\alpha}r^{1/(1+\alpha)}$.
  Then, it directly  from the definition of the maximum that
  $ \max_{x \in f^{-1}(y) }\lVert x^{\star}-x \rVert_2
  \geq \lVert x^{\star}-x_y' \rVert_2 > c_{\alpha}  r^{1/(1+\alpha)}$.
  However, since
  $ c_{\alpha} \cdot \min_{x \in f^{-1}(y)}\rVert x^{\star} - x \rVert^{1/(1+\alpha)}_2
  \leq c_{\alpha} \cdot \lVert x^{\star} - x_y \rVert^{1/(1+\alpha)}_2
  \leq c_{\alpha} \cdot  r^{1/(1+\alpha)}$ by definition of the minimum,
  we get the following contradiction by combining the previous statements with
  the regularity of the level set of the function:
  \[
    \max_{x \in f^{-1}(y) }\norm{x^{\star}-x}_2  \leq
    c_{\alpha} \cdot \min_{x \in f^{-1}(y)}\norm{x^{\star} - x}^{1/(1+\alpha)}_2
    < \max_{x \in f^{-1}(y) }\norm{x^{\star}-x}_2.
  \]
  As the previous contradiction holds for any
  $y \in [\min_{x \in S_r }f(x_r), f(x^{\star})]$,
  we deduce that  $\{x \in \X: f(x) \geq  \min_{x \in S_r }f(x_r) \}
  \subseteq  B(x^{\star}, c_{\alpha} \cdot r^{1/(1+\alpha)})$ which proves the second inclusion.
  
  ~\\
  \noindent We use similar arguments to prove the first inclusion.
  Suppose by contradiction
  that there exists $x' \in \X \cap B(x^{\star},(r/c_{\alpha})^{1+\alpha})$
  such that $f(x) < f(x_r)$ and
  introduce the function
  $F: \lambda \in [0,1] \mapsto f(  (1-\lambda)x^{\star} + \lambda x' )$.
  First, we know from the intermediate value theorem that there exists
  $x_r' \in f^{-1}(f(x_r))$ such that~$
  \rVert x^{\star} - x_r' \lVert_2 < (r/c_{\alpha})^{1+\alpha} $.
  Hence we have that
  $ c_{\alpha} \cdot \min_{x\in f^{-1}(f(x_r))}\lVert x^{\star}-x \rVert_2^{1/(1+\alpha)}
  \leq  c_{\alpha} \lVert x^{\star}-x_r' \rVert_2^{1/(1+\alpha)}
  < r$. However, as
  $\max_{x\in f^{-1}(f(x_r))} \lVert x^{\star}-x \rVert_2
  \geq \lVert x^{\star}-x_r \rVert_2 = r$,
  we get a similar contradiction as the one obtained previously
  which proves that the first inclusion.
\end{proof}

\subsection{Consistency and upper bound}

In this subsection, we provide the proofs of Proposition \ref{th:fasterprs},
Corollary \ref{coro:consistencyrank} and Theorem \ref{th:upperbound}.\\

\noindent {\bf Proof of Proposition \ref{th:fasterprs}.}
  The statement is proved by induction. Since
  $X_1 \sim \mathcal{U}(\X)$, the result directly holds for $n=1$.
  Assume now that the statement holds for a given $n \in \mathbb{N}^{\star}$ and
  let $\{X_i \}_{i=1}^{n+1}\sim$ {\sc RankOpt}$(n+1, f, \X, \Rank)$.
  As the result also trivially holds whenever $y  \notin \text{Im}(f)$,
  consider any $y \in \text{Im}(f) $
  and let $\X_{y}=\{ x \in \X: f(x) \geq y\}$ 
  be the corresponding level set.
  We start with the following decomposition:
  \begin{equation}
  \label{eq:prop11}
    \P\left(\max_{i =1\dots n+1}f(X_i) \geq y \right)
    = \P\left(\max_{i=1\dots n}f(X_i) \geq y \right)
    + \P\left(  \left\{ \max_{i=1\dots n}f(X_i) <y \right\}  \medcap \{X_{n+1} \in \X_y \} \right).
  \end{equation}

  \noindent 
  From Proposition \ref{prop:chain}, we know that
  $X_{n+1}|~\!\{X_i \}_{i=1}^n \sim \mathcal{U}(\X_n)$ 
  where the sampling area $\X_n$ 
  has a strictly positive Lebesgue measure whenever $\{\max_{i=1\dots n}f(X_i)$ $<y\}$.
  Hence conditioning upon $\{X_i \}_{i=1}^n $ gives that
  \begin{align*}
    \P\left( \left\{ \max_{i=1\dots n}f(X_i) <y \right\}
    \medcap \{X_{n+1} \in \X_y \}   \right)
    &=  \esp{ \indic{ \max_{i=1\dots n}f(X_i) <y }
    \cdot \P( X_{n+1} \in \X_y  |~\!\{X_i \}_{i=1}^n )} &  \\
    &= \mathbb{E} \left[   \indic{ \max_{i=1\dots n}f(X_i) <y  }
    \cdot
    \frac{ \mu( \X_n \cap \X_y )  }{\mu(\X_n)}  \right].
  \end{align*} 
  From Proposition \ref{prop:chain} again, we also know that 
  the sampling area $\X_n$ satisfies 
  $\X_y \subseteq \X_n$ and  $\X_n \subseteq \X$
  whenever $\{ \max_{i=1\dots n}f(X_i) < y \}$.
  Therefore
    \begin{align*}
    \P\left( \left\{ \max_{i=1\dots n}f(X_i) <y  \right\} 
    \medcap \{X_{n+1} \in \X_y \} \right)
    & \geq \frac{ \mu(  \X_y )  }{\mu(\X)}
    \cdot \esp{ \indic{ \max_{i=1\dots n}f(X_i) <y } } \\
    &= \frac{ \mu(  \X_y )  }{\mu(\X)} \cdot
    \left( 1 - \P\left(\max_{i=1\dots n}f(X_i) \geq y \right) \right).
  \end{align*}
  Finally, successively
  plugging the previous inequality into (\ref{eq:prop11}) and
  applying the induction assumption gives us that
      \begin{align*}
    \P\left(\max_{i=1\dots n+1}f(X_i) \geq y \right) 
    & \geq \P\left(\max_{i=1\dots n}f(X_i) \geq y \right)
    +
    \frac{\mu(\X_y)}{\mu(\X)} \cdot
    \left( 1 - \P\left(\max_{i=1\dots n}f(X_i) \geq y \right) \right) \\
    &\geq \P\left(\max_{i=1\dots n}f(X'_i) \geq y \right)
    +
    \frac{\mu(\X_y)}{\mu(\X)} \cdot
    \left( 1 - \P\left(\max_{i=1\dots n}f(X'_i) \geq y \right) \right)
  \end{align*}
  where $\{X_i' \}_{i=1}^{n+1} \iid \mathcal{U}(\X)$
  and the desired result follows by noticing that the right 
  hand term of the previous inequality 
  is equal to $\P( \max_{i=1\dots n+1}f(X'_i) \geq y )$.
\hfill\(\Box\)

~\\ 
\noindent Equipped with Proposition \ref{th:fasterprs}, we may now
easily prove the consistency property of the algorithm.\\
~\\
{\bf Proof of Corollary \ref{coro:consistencyrank}.}
  Pick any $\varepsilon >0$ and let $\X_{f^{\star}- \varepsilon}=
  \{x \in \X: f(x) \geq \max_{x \in \X}f(x) - \varepsilon \}$
  be the corresponding level set.
  By Proposition \ref{th:fasterprs}, we have that $\forall n \in \mathbb{N}^{\star}$,
  \[
  \P\left( f(X_{\hatin})< \max_{x \in \X}f(x) -\varepsilon \right)
    \leq \P\left(\max_{i=1 \ldots n} f(X'_i) < \max_{x \in \X}f(x)- \varepsilon \right)
    \text{~where~}\{X'_i \}_{i=1}^n \!\iid \mathcal{U}(\X).
  \]
  Therefore, using the fact that 
  $0<\mu(\X_{f^{\star}-\varepsilon})/\mu(\X) \leq 1$ by Condition \ref{cond:id}, 
  we directly get that
  \[
    \P\left( f(X_{\hatin})< \max_{x \in \X}f(x) -\varepsilon \right)~\!
    \leq \P\left(X'_1 \notin \X_{f^{\star}-\varepsilon} \right)^n
    = \left(1 - \frac{\mu( \X_{f^{\star}-\varepsilon} )}{\mu(\X)} \right)^n
    \underset{n \rightarrow \infty}{\longrightarrow} 0
  \]
  which proves the result.
\hfill\(\Box\)

~\\
\noindent We now turn to the proof of the upper bound.\\

\noindent {\bf Proof of Theorem \ref{th:upperbound}.}
  Note first that since
  $r_f \in \Rank \subseteq \Rank_{\infty}$
  is a continuous ranking rule, 
  we know from Proposition \ref{prop:rankingequivalence} that
  there exists a continuous function $h \in \mathcal{C}^{0}(\X, \R)$ 
  which shares the same ranking rule with $f$. 
  One can then consider, without loss of generality, that $f\in \mathcal{C}^{0}(\X, \R)$
  as all the arguments used in the proof only use function comparisons.
  Additionally, since the result trivially holds whenever
  the upper bound of the theorem, denoted here by $r_{\delta,n}$, satisfies
  $r_{\delta, n}$ $\geq$ $\max_{x \in \X} \norm{x-x^{\star}}_2$,
  we consider that
  $r_{\delta, n} < \max_{x \in \X} \norm{x-x^{\star}}_2$ which  
  also implies by the level set assumption
  that $\ln(1/\delta)<n$.
  Last, we  also set some notations, set
  $\mathcal{S}_{\delta, n} = 
  \{x \in \X: \norm{x^{\star} -x}_2 = ( r_{\delta, n}/c_{\alpha} )^{1+\alpha} \}$
  and let
  $R_{\delta,n} = ( (r_{\delta,n}/c_{\alpha} )^{1+\alpha}
  /c_{\alpha}  )^{1+\alpha}$.
  Equipped with these notations, we may now prove the result.
  By Lemma \ref{lem:inclusion}, we have that
  \[
   \P(\norm{X_{\hatin} - x^{\star}}_2 \leq r_{\delta, n})
   = \P( X_{\hatin} \in B(x^{\star},r_{\delta, n})) 
   \geq \P\left( f(X_{\hatin})
    \geq \textstyle{\min_{x \in \mathcal{S}_{\delta, n}}} f(x)  \right)
  \]
   which together with Proposition \ref{th:fasterprs} gives that
  \[
   \P(\norm{X_{\hatin} - x^{\star}}_2 \leq r_{\delta, n}) 
   \geq 
   \P\left( \textstyle \max_{i =1 \ldots n } f(X'_i)
    \geq \textstyle{\min_{x \in \mathcal{S}_{\delta, n}}} f(x)  \right)
  \]
  where $\{X'_i \}_{i=1}^n \iid \mathcal{U}(\X)$.
  Therefore, successively
  using independence and the second inclusion of Lemma \ref{lem:inclusion}
  gives that
  \[
    \P(\norm{X_{\hatin} - x^{\star}}_2 \leq r_{\delta, n})
    \geq \P \left( \bigcup_{i=1}^n 
    \{X'_i \in \X \cap B(x^{\star}, R_{\delta, n}) \}  \right)
    = 1- \left(1 - \frac{\mu(\X \cap B(x^{\star}, R_{\delta, n}) )}
    {\mu(\X)} \right)^n.
  \]
  Finally, as $R_{\delta, n}$ was defined so that Lemma \ref{lem:zab} ensures that
  $$
   \frac{\mu(\X \cap B(x^{\star}, R_{\delta,n}))}{\mu(\X)}
  \geq \left( \frac{R_{\delta, n}}{\diam{\X}}\right)^d
  = \frac{\ln(1/\delta)}{n},
  $$
  it follows that
  \[
    \P(\norm{X_{\hatin} - x^{\star}}_2 \leq r_{\delta, n})
    \geq 1-\left(1-  \frac{\ln(1/\delta)}{n} \right)^n
  \]
  which combined with the elementary inequality $1-x \leq e^{-x}$ gives the result.
\hfill\(\Box\)

\subsection{Lower bound}

In order to prove Theorem \ref{th:lowerbound},
we start by developing the full proof for Proposition \ref{prop:slowerpas}
and we provide two technical lemmas (Lemma \ref{lem: fasteruniform}
and Lemma \ref{prop:concentration})
that will used in the proof of the lower bound.

~\\
  {\bf Proof of Proposition \ref{prop:slowerpas}.}
    Again, the result is proved by induction.
    Since $X_1$ and $X_1^{\star}$ are both uniformly distributed over $\X$,
    the result directly holds for $n=1$.
    Assume now that the statement holds for a given $n \in \mathbb{N}^{\star}$
    and let $\{X_i \}_{i=1}^{n+1}\sim$ {\sc RankOpt}$(n+1,f,\X,\Rank)$.
    As the result also trivially holds whenever $y \notin  \text{Im}(f)$,
    consider any $y \in \text{Im}(f)$ and let
    $\X_y=\{x \in \X: f(x) \geq y \}$ be the corresponding level set.
    We start with a similar decomposition as the one used in 
    the proof of Proposition \ref{th:fasterprs}:
    \begin{align*}
      \P\left( \max_{i=1\dots n+1}f(X_i) \geq y \right)
      & = \mathbb{E}\left[ \indic{\max_{i=1\dots n}f(X_i)\geq y }
      + \frac{ \mu (\X_y \cap \X_n) }{\mu(\X_n)}
      \cdot \indic{\max_{i=1\dots n}f(X_i) < y }  \right].
    \end{align*}
    Observe now that if $\mu(\X_y) = 0$,
    then $\P\left( \max_{i=1\dots n+1}f(X_i) \geq y \right)$
    = $\P(X_1 \in \X_y ) = 0$ and the result directly holds.
    We thus consider the 
    case where $\mu(\X_y)>0$ and 
    we set some additional notations to clarify the proof:
    let $f(X_{\hatin}) = \max_{i=1\dots n}f(X_i)$
    and let $\X_{f(X_{\hatin})}=\{x \in \X: f(x) \geq  f(X_{\hatin})\}$.
    From Proposition \ref{prop:chain}, we know that on the event 
    $\{ f(X_{\hatin}) < y  \}$ the sampling area $\X_n$ satisfies both
    $\X_{f(X_{\hatin})} \subseteq  \X_n$ and  
    $\X_y  \subseteq \X_{f(X_{\hatin})}$.
    Therefore we have that
    \begin{align*}
      \P\left( \max_{i=1\dots n+1}f(X_i) \geq y \right)
      & \leq \esp{ \indic{f(X_{\hatin})\geq y }
      + \frac{ \mu (\X_y) }{\mu(\X_{f(X_{\hatin})})}
      \cdot \indic{f(X_{\hatin}) < y }  }
    \end{align*}
    which combined with the fact that for any random variable $X\in[0,1]$,
    $\esp{X}=\int_{0}^{1} \P(X \geq t)~ \text{d}t$ gives that
    \begin{align}
    \label{eq:slowerprs}
      \P\left( \max_{i=1\dots n+1}f(X_i) \geq y \right)
      & \leq
      \int_{0}^1 \P \left(
          \indic{f(X_{\hatin}) \geq y }
      + \frac{ \mu (\X_y) }{\mu(\X_{f(X_{\hatin})})}
      \cdot \indic{f(X_{\hatin}) < y }
      \geq t \right)~\text{d}t.
    \end{align}
    Now, observe that since the volume of the sampling area 
    always satisfies 
    $\mu(\X_{f(X_{\hatin}) }) \leq \mu(\X_n) \leq \mu(\X)$ by Proposition \ref{prop:chain},
    then (i)  the probability under the integral in (\ref{eq:slowerprs}) is equal to 1 
    whenever $t \leq \mu(\X_y)/\mu(\X)$ 
    and (ii) for any $ t > \mu(\X_y)/\mu(\X)$, the following events are equivalent:
    $$
    \left\{
    \indic{f(X_{\hatin}) \geq y }
	  + \frac{ \mu (\X_y) }{\mu(\X_{f(X_{\hatin})})}
	  \cdot \indic{f(X_{\hatin}) < y } \geq t
    \right\}
    =
    \left\{ \mu(\X_{f(X_{\hatin})}) \leq \frac{  \mu(\X_y)}{ t} \right\}.
    $$
    Therefore plugging the inequalities obtained in 
    (i) and (ii) into (\ref{eq:slowerprs}) gives us that
    \begin{align}
    \label{eq:slowerprs2}
    \P\left( \max_{i=1\dots n+1}f(X_i) \geq y \right)
    \leq \frac{\mu(\X_y)}{ \mu(\X) }
    + \int_{ \frac{\mu(\X_y)}{ \mu(\X) }}^1
    \P\left( \mu(\X_{f(X_{\hatin})}) \leq \frac{  \mu(\X_y)}{ t} \right)  \text{d}t.
    \end{align}
    We now turn to the analysis 
    of the probability under the integral in (\ref{eq:slowerprs2}).
    By successively using the continuity of the ranking rule induced by the unknown function
    and applying the induction assumption,
    we obtain 
    for any $t \in (\mu(\X_y) / \mu(\X),1)$
    the following bound:
      \begin{align}
      \label{eq:slowerprs3}
      \P\left( \mu(\X_{f(X_{\hatin})}) \leq  \frac{\mu(\X_y)}{t}  \right)
      & = \P\left( f(X_{\hatin})
      \geq \min\left\{y' \in \textrm{Im}(f): 
      \mu(\X_{y'}) \leq \frac{\mu(\X_y)}{t}  \right\}  \right) \nonumber \\
      & \leq \P\left( f(X^{\star}_n) 
      \geq \min\left\{y' \in \textrm{Im}(f): 
      \mu(\X_{y'}) \leq \frac{\mu(\X_y)}{t}  \right\}  \right) \nonumber \\
      & = \P\left( \mu(\X_{f(X^{\star}_n)}) \leq  \frac{\mu(\X_y)}{t}  \right)
    \end{align}
    where
    $\{X^{\star}_i \}_{i=1}^{n}$ 
    is a sequence of $n$ random variables
    distributed as Pure Adaptive Search indexed by $f$ over $\X$ and 
    $\X_{f(X^{\star}_n)} =\{x \in \X: f(x) \geq f(X^{\star}_n) \} $.
    Therefore, plugging (\ref{eq:slowerprs3}) into (\ref{eq:slowerprs2}) gives that
  \begin{equation*}
    \P\left( \max_{i=1\dots n+1}f(X_i) \geq y \right)
    \leq \frac{\mu(\X_y)}{ \mu(\X) }
    + \int_{ \frac{\mu(\X_y)}{ \mu(\X) }}^1
    \P\left( \mu(\X_{f(X^{\star}_n)}) \leq  \frac{\mu(\X_y)}{t}  \right) \text{d}t
    \end{equation*}
    and the desired result follows by noticing that
    the right hand term of the previous equation
    is equal to $\P( f(X^{\star}_{n+1}) \geq y )$ (which can be easily shown
    by reproducing the same steps as previously with 
    a sequence of $n+1$ random variables
    distributed as a Pure Adaptive Search).
  \hfill\(\Box\)

  ~\\
  \noindent The next lemma will be used in the proof of Theorem \ref{th:lowerbound}
  to control the volume of the level set of the highest value observed by a
  {\sc Pure Adaptive Search}.

  \begin{lemma}
  \label{lem: fasteruniform}
  Let $\X \subset \R^d$ be any compact and convex set with non-empty interior,
  let $f:\X \to \R$ be any function such that $r_f \in \Rank_{\infty}$
  and let $\{ X_i^{\star}\}_{i=1}^n$
  be a sequence of $n$ random variables distributed
  as a \textsl{\textsc{Pure Adaptive Search}} indexed by $f$ over $\X$.
  Then, for any $u \in [0,1]$, we have that
    \[
      \P\left(   \frac{\mu(\X^{\star}_n)}{\mu(\X)} \leq u \right)
      \leq \P \left( \prod_{i=1}^n U_i \leq u  \right)
    \]
    where $\X^{\star}_n := \{x \in \X: f(x) \geq f(X^{\star}_n) \}$  and
    $\{U_i \}_{i=1}^n\!\iid \mathcal{U}([0,1])$.
  \end{lemma}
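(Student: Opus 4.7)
The strategy is to express the volume ratio $\mu(\X^{\star}_n)/\mu(\X)$ as a telescoping product of one-step shrinkage ratios and to show that each factor is, conditionally on the past, stochastically dominated by $\mathcal{U}([0,1])$. Setting $\X^{\star}_0 := \X$ and $V_t := \mu(\X^{\star}_t)/\mu(\X^{\star}_{t-1})$ for $t \geq 1$, the PAS dynamics force $X^{\star}_{t+1} \in \X^{\star}_t$ almost surely, hence $f(X^{\star}_{t+1}) \geq f(X^{\star}_t)$ and $\X^{\star}_{t+1} \subseteq \X^{\star}_t$, so that $\mu(\X^{\star}_n)/\mu(\X) = \prod_{t=1}^n V_t$.

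Next, conditionally on $\X^{\star}_{t-1}$, I would introduce the conditional survival function $G_{t-1}(y) := \mu(\{x \in \X^{\star}_{t-1} : f(x) \geq y\})/\mu(\X^{\star}_{t-1})$. Since $X^{\star}_t \mid \X^{\star}_{t-1} \sim \mathcal{U}(\X^{\star}_{t-1})$, one can rewrite $V_t = G_{t-1}(f(X^{\star}_t))$. A generalized probability-integral-transform argument then yields $\P(V_t \leq u \mid \X^{\star}_{t-1}) \leq u$ for every $u \in [0,1]$: since $G_{t-1}$ is nonincreasing and left-continuous, setting $q := \inf\{y : G_{t-1}(y) \leq u\}$ and distinguishing the two cases $G_{t-1}(q) \leq u$ and $G_{t-1}(q) > u$ identifies $\{V_t \leq u\}$ with either $\{f(X^{\star}_t) \geq q\}$ or $\{f(X^{\star}_t) > q\}$, each of conditional probability at most $u$ by construction of $q$ (using left-continuity to pass to the limit in the strict case).

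The conclusion would then follow by induction on $n$. Introducing i.i.d.~$U_1, \ldots, U_n \sim \mathcal{U}([0,1])$ independent of the process, the inductive step rests on the chain-rule identity
\[
\P\bigl(V_1 \cdots V_n \leq u\bigr) = \esp{\P\bigl( V_n \leq u/(V_1 \cdots V_{n-1}) \,\bigm|\, V_1, \ldots, V_{n-1}\bigr)} \leq \esp{\min\bigl(1, u/(V_1 \cdots V_{n-1})\bigr)},
\]
combined with the analogous identity for $U_1 \cdots U_n$---which is an equality thanks to the uniformity of $U_n$---and the induction hypothesis applied to $V_1 \cdots V_{n-1}$ to control the outer expectation.

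The main obstacle is the possible presence of level sets of $f$ of positive Lebesgue measure, which creates atoms in the conditional law of $f(X^{\star}_t)$ and prevents $V_t$ from being exactly uniformly distributed; the case analysis around the jump behavior of the survival function $G_{t-1}$ at its generalized quantile $q$ is precisely what keeps the probability-integral-transform inequality valid in this degenerate regime, and hence what upgrades a distributional identity (under the no-flat-parts assumption invoked in the tightness remark) into the stochastic domination asserted by the lemma.
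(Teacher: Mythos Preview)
Your proposal is correct and follows essentially the same inductive route as the paper: both arguments establish that the one-step volume shrinkage is conditionally stochastically larger than a uniform variable, rewrite the resulting $\min(1,\cdot)$ expectation as a probability by introducing an independent $U_{n+1}\sim\mathcal{U}([0,1])$, and close the induction via the preservation of stochastic ordering under multiplication by an independent nonnegative factor. The only cosmetic difference is that you make the telescoping product $\mu(\X_n^{\star})/\mu(\X)=\prod_t V_t$ and the probability-integral-transform structure explicit, while the paper works directly with the global quantile level $y_u=\min\{y:\mu(\X_y)\le u\,\mu(\X)\}$; the two formulations encode the same computation.
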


\begin{proof} 
  Observe first that if
  $u^{\star}= \mu( \{x \in \X: f(x) \geq \max_{x \in \X}f(x) \}  ) /\mu(\X)
  >0$, then the result trivially holds for all  $u < u^{\star}$ 
  and $n \geq 1$.
  For simplicity, we thus consider that $u^{\star}=0$ and
  we set some notations:
  $\forall u \in[0,1]$, let
  $y_u := \min\{y \in \textrm{Im}(f):
  \mu(\{x \in \X: f(x) \geq y \}) \leq u \cdot \mu(\X) \}$
  and let $\X_{y_u}=\{x \in \X: f(x) \geq y_u \}$ be the corresponding level set.
  Keeping in mind that $\mu(\X_{y_u}) \leq u \cdot \mu(\X)$
  for all $u \in [0,1]$,
  we may now prove the result by induction.

  ~\\
   Set $n=1$, pick any $u \in [0,1]$
   and let $U_1 \sim \mathcal{U}([0,1])$.
   Since $X_1^{\star}\sim \mathcal{U}(\X)$ and
   $\P(U_1 \leq u) = u$,
   it directly follows that
  \[
    \P\left(  \frac{ \mu(\X^{\star}_{1}) }{ \mu(\X) }   \leq u \right)
    = \P( X^{\star}_1 \in  \X_{y_u})
    = \frac{\mu(\X_{y_u} )}{\mu(\X)} \leq u
    = \P(U_1 \leq  u )
  \]
  which proves the result for $n=1$.
  Assume now that the statement holds for a given
  $n \in \mathbb{N}^{\star}$.
  Fix any $u \in [0,1]$ and
  let $ \{ X_{i}^{\star} \}_{i=1}^{n+1}$
  be a sequence of $n+1$ random variables distributed as Pure Adaptive Search
  indexed by $f$ over $\X$.
  From definition \ref{def:PAS}, we know that
  $X_{n+1}^{\star}|~\! X^{\star}_n \sim \mathcal{U}(\X^{\star}_{n})$
  where $\X_n^{\star} = \{ x \in \X: f(x) \geq f(X_n^{\star})\}$.
  Therefore,  conditioning upon $X_{n}^{\star}$ gives that
  \[
    \P\left( \frac{\mu(\X^{\star}_{n+1})}{\mu(\X)} \leq u    \right)
    =\mathbb{E} \left[ \frac{ \mu( \X_{y_u} \cap \X^{\star}_n ) }
    {\mu(\X^{\star}_n)} \cdot \indic{ \mu(\X_n^{\star}) > \mu( \X_{y_u}) }   
    + \indic{ \mu(\X_n^{\star}) \leq  \mu( \X_{y_u}) }   \right].
  \]
  Since the level sets of the unknown function form a nested sequence,
  we know that the following events are equivalent
  $ \{ \mu(\X_n^{\star}) > \mu( \X_{y_u}) \} =\{\X^{\star}_n \subset \X_{y_u}\}$.
  Hence, using the convention $1/0 = +\infty$, we obtain that
  \[
    \P\left( \frac{\mu(\X^{\star}_{n+1})}{\mu(\X)} \leq u \right)
    = \mathbb{E} \left[ \min \left(1,  \frac{\mu(\X_{y_u})}
    { \mu(\X^{\star}_n)}  \right)  \right].
  \]
  Now, since for any random variable $U_{n+1} \sim \mathcal{U}([0,1])$
  independent of $Y \in [0,1]$,
  $\P(U_{n+1} \leq Y ~\!|~\! Y) = Y$, 
  we have that
   \begin{align*}
      \P\left( \frac{\mu(\X^{\star}_{n+1})}{\mu(\X)} \leq u \right)
      & = \mathbb{E} \left[ \P \left( U_{n+1} \leq
    \min \left(1,  \frac{\mu(\X_{y_u})}
    { \mu(\X^{\star}_n)}  \right)
    ~|~\!\mu(\X^{\star}_n) \right)   \right].
    \end{align*}
    Therefore using the independence of $U_{n+1}$ and $\{X_i^{\star} \}_{i=1}^n$ gives that
    \begin{align}
    \label{eq:lem_sloweruniform}
    \P\left( \frac{\mu(\X^{\star}_{n+1})}{\mu(\X)} \leq u \right)
      &= \P \left(  U_{n+1} \cdot \frac{\mu(\X^{\star}_n)}{\mu(\X)}
    \leq \frac{\mu(\X_{y_u})}{\mu(\X)}   \right).
    \end{align}
  Finally, successively
  using the fact that  $\mu(\X_{y_u}) \leq u \cdot \mu(\X)$ and
  plugging the induction assumption into (\ref{eq:lem_sloweruniform})
  gives that
  \[
    \P\left( \frac{\mu(\X^{\star}_{n+1})}{\mu(\X)} \leq u    \right)
    \leq \P \left( U_{n+1} \cdot \frac{\mu(\X^{\star}_n)}{\mu(\X)} \leq u \right)
    \leq \P \left( \prod_{i=1}^{n+1} U_i \leq u \right)
  \]
  where $\{U_i \}_{i=1}^{n+1} \iid \mathcal{U}([0,1])$
  and the proof is complete.
\end{proof}

\noindent 
  The concentration inequality provided in the next lemma
  will be important in order to control the volume of the level set of 
  the highest value observed by a {\sc Pure Adaptive Search}.

\begin{lemma}
\label{prop:concentration}
  Let $\{ U_i\}_{i=1}^n$ be a sequence of $n$ independent
  copies of $U\sim \mathcal{U}([0,1])$.
  Then, for any $\delta \in (0,1)$, we have that
  $
    \P \left( \prod_{i=1}^n U_i
    < \delta \cdot e^{-n-\sqrt{ 2 n \ln(1/\delta) }} \right) < \delta.
  $
\end{lemma}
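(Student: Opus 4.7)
The plan is to reduce the statement to a classical Chernoff bound for a Gamma distribution. Since the $U_i$ are i.i.d.\ uniform on $[0,1]$, the variables $E_i := -\ln U_i$ are i.i.d.\ exponential of mean $1$, and their partial sum $S_n := \sum_{i=1}^n E_i$ follows the $\textrm{Gamma}(n,1)$ law. Writing $u := \ln(1/\delta) > 0$, the claim is equivalent, after taking $-\ln$ of both sides of the event, to
\[
\P\bigl(S_n > n + \sqrt{2nu} + u\bigr) < e^{-u}.
\]

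My first step would be to invoke the moment generating function $\mathbb{E}[e^{\lambda S_n}] = (1-\lambda)^{-n}$, valid for $\lambda < 1$, and apply Markov's inequality to $e^{\lambda S_n}$: for any deviation $t > 0$,
\[
\P(S_n \geq n+t) \leq e^{-\lambda(n+t)}\,(1-\lambda)^{-n}.
\]
Setting $t := u + \sqrt{2nu}$ and choosing the tilt $\lambda^{\star} := t/(n+t) \in (0,1)$ (so that $1-\lambda^{\star} = n/(n+t)$), a one-line computation collapses the right-hand side to the clean Chernoff estimate
\[
\P(S_n \geq n+t) \leq e^{-t}\,(1+t/n)^n.
\]

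The final step is purely algebraic: verify that $e^{-t}(1+t/n)^n \leq e^{-u}$. Introducing the shorthand $a := \sqrt{2u/n}$, one has $u/n = a^2/2$ and $t/n = a + a^2/2$, so after taking logarithms and dividing by $n$ the required bound becomes
\[
1 + a + \tfrac{a^2}{2} \leq e^a,
\]
which is immediate from the Taylor expansion $e^a = 1 + a + a^2/2 + \sum_{k \geq 3} a^k/k!$, with strict inequality as soon as $a > 0$, i.e.\ $\delta < 1$.

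I do not foresee any genuine obstacle: the whole argument is standard Chernoff machinery for the exponential distribution, followed by an elementary inequality. The only subtle point worth highlighting is that the unusual additive correction $u$ on top of the Gaussian-like $\sqrt{2nu}$ term is precisely what allows the bound to reduce to the truncated Taylor series $1 + a + a^2/2$; without this extra $u$, the inequality would reduce to $1 + a \leq e^{a - a^2/2}$, which fails for large $a$, reflecting the heavier-than-Gaussian upper tail of the exponential distribution.
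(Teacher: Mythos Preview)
Your proof is correct and follows exactly the same route as the paper: transform to $S_n=\sum_{i=1}^n(-\ln U_i)\sim\mathrm{Gamma}(n,1)$ and bound $\P(S_n>n+\sqrt{2nu}+u)$ by $e^{-u}$. The only difference is that the paper simply cites the sub-gamma concentration inequality from Boucheron--Lugosi--Massart (Chapter~2.4), whereas you work out the Chernoff computation and the final algebraic step $1+a+a^2/2\le e^a$ explicitly, making your argument self-contained.
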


\begin{proof} Taking the logarithm on both sides gives that
  $ \prod_{i=1}^n U_i < \delta \cdot e^{-n-\sqrt{ 2 n \ln(1/\delta) }}$
  if and only if
  $\sum_{i=1}^n - \ln(U_i)  > n+  \sqrt{2 n \ln(1/\delta)} +\ln(1/\delta)$.
  As $U_i \sim \mathcal{U}([0,1])$ for $ i \leq n$,
  we have that $-\ln(U_i)\sim \textrm{Exp}(1)$ which combined with 
  independence gives that
  $\sum_{i=1}^n -\ln(U_i) \sim$ Gamma$(n,1)$.
  Therefore, the desired result follows from the application of a standard
  concentration inequality for sub-gamma random variables
  (see  Chapter 2.4 in \cite{boucheron2013concentration}).
\end{proof}

\noindent 
Equipped with Proposition \ref{prop:slowerpas}, Lemma \ref{lem: fasteruniform}
and Lemma \ref{prop:concentration},
we may now prove the lower bound.

~\\
{\bf Proof of Theorem \ref{th:lowerbound}.}
  As in the proof of Theorem \ref{th:upperbound},
  we may consider without loss of generality that
  $f\in \mathcal{C}^{0}(\X, \R)$.
  Now, fix any $\delta \in (0,1)$, let $ r_{\delta, n}$ be the lower bound of the theorem,
  set $\mathcal{S}_{\delta, n }= \{ x \in \X: \norm{x^{\star} - x }_2 
  = c_{\alpha} r_{\delta, n}^{1/(1+\alpha)} \}$ and let
  $
  R_{\delta, n}
  = \rad{\X} \delta^{1/d} \exp(-(n+\sqrt{2n \ln{(1/\delta)}})/d).
  $ 
  From the first inclusion of Lemma \ref{lem:inclusion}, we have that
  $$
  \P( \norm{X_{\hatin} - x^{\star}}_2 \leq r_{\delta, n} )
   = \P( X_{\hatin} \in B(x^{\star}, r_{\delta, n}) \cap \X )
    \leq \P\left( f(X_{\hatin}) \geq
    \min_{x\in \mathcal{S}_{\delta, n}}f(x) \right)
  $$
  which together with Proposition \ref{prop:slowerpas} gives that
  $$
    \P( \norm{X_{\hatin} - x^{\star}}_2 \leq r_{\delta, n} )
   \leq \P\left( f(X^{\star}_{n}) \geq
    \min_{x\in \mathcal{S}_{\delta, n}}f(x) \right)
  $$
  where $\{X_i^{\star} \}_{i=1}^n$
  is a sequence of $n$ random variables distributed as {\sc Pure Adaptive Search}
  indexed by $f$ over $\X$.
  Now, observing that if $\X^{\star}_n=\{x \in \X: f(x) \geq f(X^{\star}_n) \}$
  denotes the level set of $f(X^{\star}_n)$, then
  the following events are equivalent:
  \[
   \left\{f(X^{\star}_n) \geq \min_{x\in \mathcal{S}_{\delta, n}}f(x) \right\} = 
  \left\{ \mu(\X_n^{\star}) 
  \leq \mu\left(\left\{x \in \X: f(x) 
  \geq \min_{x\in \mathcal{S}_{\delta, n}}f(x) \right\}\right) \right\},
  \]
  we obtain by applying the second inclusion of Lemma \ref{lem:inclusion} that
  \begin{align*}
    \P( \norm{X_{\hatin} - x^{\star}}_2 \leq r_{\delta, n} )
    &\leq \P\left(  \mu(\X_n^{\star})  \leq
    \mu\left(\left\{ x\in \X:  f(x) \geq  \min_{x\in S_{\delta, n}}f(x)\right\}
    \right) \right)\\
    &\leq \P\left(  \frac{\mu(\X_n^{\star})}{\mu(\X)}
    \leq \frac{\mu(B(x^{\star},R_{\delta,n}))}{\mu(\X)}  \right).
  \end{align*}
  As $\rad{\X}>0$ is assumed to be finite,
  we know that there exists $x \in \X$  such that $B(x, \rad{\X}) \subseteq \X$
  which implies that
  $\mu(\X) \geq \mu(B(x,\rad{\X})) = \pi^{d/2} \rad{\X}^d/\Gamma(d/2+1)$.
  Hence, we deduce that
  \[
    \P( \norm{X_{\hatin} - x^{\star}}_2 \leq r_{\delta, n} )
    \leq  \P\left(   \frac{\mu(\X_n^{\star})}{\mu(\X)}
    \leq \frac{\mu(B(x^{\star}, R_{\delta,n})) }{\mu(B(x, \rad{\X}))} \right)
    = \P\left(  \frac{\mu(\X_n^{\star})}{\mu(\X)}
    \leq \left( \frac{R_{\delta, n}}{\rad{\X}} \right)^d \right).
  \]
  Finally, as $R_{\delta,n}$ was defined so that
  $(R_{\delta,n}/\rad{\X} )^d$ =$ \delta $ $\cdot \exp {(-n-\sqrt{2n \ln(1/\delta)})}$,
  we obtain that
  \begin{align*}
    \P( \norm{X_{\hatin} - x^{\star}}_2 \leq r_{\delta, n} )
    & \leq \P \left( \prod_{i=1}^n U_i \leq \delta \cdot e^{-n-\sqrt{2n \ln(1/\delta)}} \right)
  \end{align*}
  by using Lemma \ref{lem: fasteruniform} 
  where $\{U_i \}_{i=1}^n \iid \mathcal{U}([0,1])$
  and the desired result naturally follows from 
  the application of the 
  concentration inequality of Lemma \ref{prop:concentration}.
\hfill\(\Box\)

\section{Analysis of the \textbf{\textsc{AdaRankOpt}} algorithm}

We develop here the proofs of Proposition \ref{prop:cons_adarank},
Proposition \ref{prop:model} and Theorem \ref{th: upper_ada}.
For convenience, we start recalling the definition of the sequence of 
evaluation points generated by the algorithm.

\begin{definition} {\sc (AdaRankOpt process)}
\label{prop:adarankopt}
  Pick any $p \in (0,1)$, 
  let $\{ \Rank_k\}_{k \in \mathbb{N}^{\star}}$ be any
  sequence of nested ranking structures,
  let $\X \subset \R^d$ be any compact and convex set
  with non-empty interior
  and let
  $f: \X \rightarrow \R$ be any function such that $r_f \in \Rank_{\infty}$.
  We say that a sequence $\{X_i \}_{i=1}^n$ is distributed 
  as a {\sc AdaRankOpt}$(n,$ $f$, $\X$, $p$, $\{ \Rank_k\}_{k \in \mathbb{N}^{\star}})$
  process if it follows the process defined by:
  \begin{equation*}
    \begin{cases}
      X_1 \sim \mathcal{U}(\X) \\
      X_{t+1}| ~B_{t+1}, \{X_i \}_{i=1}^t \sim B_{t+1}\cdot \mathcal{U}(\X)
      +(1-B_{t+1}) \cdot \mathcal{U}(\X_t) \textrm{~~~~~~~~~} \forall t \in \{1 \ldots n-1 \}
    \end{cases}
  \end{equation*}
  where at each step $t \geq 1$,
  $B_{t+1}$ is a Bernoulli  random variable of parameter $p$
  independent of $\{(X_i , B_i)\}_{i=1}^t$,
  $\X_t := \{x \in \X: \exists r \in \Rank_{\widehat{k}_t},~s.t.~r(x,X_{\hatit})\geq 0 \}$
  and $\hatit$ and $\widehat{k}_t$ are defined as in the algorithm.
\end{definition}

\noindent The proof of the consistency property of the algorithm
is straightforward.\\

\noindent {\bf Proof of Proposition \ref{prop:cons_adarank}.}
  Pick any $\varepsilon > 0$ and let
  $\X_{f^{\star}-\varepsilon}
  = \{ x \in \X: f(x) \geq \max_{x \in \X}f(x) - \varepsilon\}$
  be the corresponding level set.
  Since $p \in (0,1)$ and
  $0< \mu(\X_{f^{\star} - \epsilon}) / \mu(\X) \leq 1$ 
  by Condition \ref{cond:id},
  it is sufficient to show that the following holds true to prove the result:
  \begin{align}
  \label{eq:consada}
  \forall n \in \mathbb{N}^{\star},~
    \P\left( f(X_{\hatin}) <
     \max_{x \in \X}f(x) -\varepsilon \right)
    &\leq \left( 1 - p \cdot\frac{\mu(\X_{f^{\star}-\varepsilon})}{\mu(\X) } \right)^n.
  \end{align}
  We prove (\ref{eq:consada}) by induction.
  As $\hatiun=1$, $X_1 \sim \mathcal{U}(\X)$ and $0<p\leq 1$, we directly get that
  \begin{align*}
    \P\left( f(X_{\hatiun}) <
     \max_{x \in \X}f(x) -\varepsilon \right)
    &= \P(X_1 \notin \X_{f^{\star}-\varepsilon})
    = \left( 1- \frac{\mu(\X_{f^{\star}-\varepsilon})}{\mu(\X) } \right)
    \leq \left( 1 - p \cdot\frac{\mu(\X_{f^{\star}-\varepsilon})}{\mu(\X) } \right)
  \end{align*}
  which proves the result for $n=1$.
  Assume now that (\ref{eq:consada}) holds for a given $n \in \mathbb{N}^{\star}$,
  let $\{X_i \}_{i=1}^{n+1} \sim$
  {\sc AdaRankOpt}$(n,$ $f$, $\X$, $p$, $\{ \Rank_k\}_{k \in \mathbb{N}^{\star}})$
  and consider following decomposition:
  \begin{align*}
    \P\left(  \max_{i=1 \dots n+1}f(X_i) <
    \max_{x \in \X}f(x) -\varepsilon \right)
    &= \mathbb{E}\left[  \P(X_{n+1}\notin \X_{f^{\star}-\varepsilon} |~\!\{ X_i \}_{i=1}^n)
    \cdot \indic{ \bigcap_{i=1}^n \{X_i \notin \X_{f^{\star}-\varepsilon} \} }   \right].
  \end{align*}
  As Definition \ref{prop:adarankopt} guarantees that
  $\forall (x_1, \dots, x_n ) \in \X^n$,
    \begin{align*}
\P\left( X_{n+1} \notin \X_{f^{\star} - \epsilon} |~\! \bigcup_{i=1}^n \{ X_i = x_i \} \right)
     \leq  ~~~~~~~~~~~~~~~~~~~~~~~~~~~~~~~~~
     ~~~~~~~~~~~~~~~~~~~~~~~~~~~~~~~~~
     ~~~~~&
     ~~~~~~~~~~~~~~~~~~~~~~~~~~~~~~~~~\\
     1 -  \P\left( X_{n+1}  \in \X_{f^{\star} - \epsilon} 
     |~\!B_{n+1}=1, \bigcup_{i=1}^n \{ X_i = x_i \} \right) 
     \times \P\left( B_{n+1}=1~\!|~\!\bigcup_{i=1}^n \{ X_i = x_i \} \right)\\
    = 1 - p \cdot \frac{\mu( \X_{f^{\star} -\varepsilon  } )}{ \mu(\X)}
  \end{align*}
  it follows that
  \begin{align*}
   \P\left( \max_{i=1 \dots n+1}f(X_i) <  \max_{x \in \X}f(x) -\varepsilon \right)
    & \leq \left( 1 - p \cdot\frac{\mu(\X_{f^{\star}-\varepsilon})}{\mu(\X) } \right)
    \times \P\left( \bigcap_{i=1}^n \{ X_i \notin \X_{f^{\star}-\varepsilon} \} \right)
  \end{align*}
  which combined with the induction assumption proves (\ref{eq:consada}).
\hfill\(\Box\)

~\\
We now prove the stopping time upper bound.\\

\noindent {\bf Proof of Proposition \ref{prop:model}.}
  Let $\{(X_i,B_i)\}_{i \in \mathbb{N}^{\star}}$ be
  the sequence of random variables defined in the {\sc AdaRankOpt} algorithm.
  Fix any $\delta \in (0,1)$ and set
  $n'_{\delta} = \floor{p\cdot n_{\delta}  -\sqrt{ n_{\delta} \log(2/\delta)/2}}$
  where $n_{\delta}= \floor{10 \cdot (K+ \log(2/\delta))/(p
  \cdot \inf_{r\in \Rank_{N^{\star}-1}}L(r)^2)}$ 
  denotes the integer part of the upper bound.
  First, observe that since 
  $
   \mathcal{R}_1 \subset \mathcal{R}_2 \subset \dots \subset \mathcal{R}_{\infty}
  $
  forms a nested sequence,
  \[
  \min_{r \in \Rank_1}L_{n_{\delta}}(r) \leq \min_{r \in \Rank_2}L_{n_{\delta}}(r)
  \leq \dots \leq \min_{r \in \Rank_{k^{\star} \! -1}}L_{n_{\delta}}(r)
  \]
  where $L_{n_{\delta}}$ denotes the empirical ranking loss
  taken over the first $n_{\delta}$ samples  $\{X_i \}_{i=1}^{n_{\delta}}$.
  One might then start with the following decomposition:
  \begin{align}
  \label{eq:stop1}
    \P(\tau_{k^{\star}} \leq n_{\delta})
    &=\P \left( \min_{r \in \Rank_{k^{\star} \! -1  }}L_{n_{\delta}}(r) >0  \right) 
    \nonumber \\
    & \geq \P \left( \min_{r \in \Rank_{k^{\star} \! -1  }}L_{n_{\delta}}(r) >0  ~\!\mid~\!
     \sum_{i=1}^{n_{\delta}} B_i \geq n_{\delta}' \right)
     \times \P\left( \sum_{i=1}^{n_{\delta}} B_i \geq n_{\delta}'  \right).
  \end{align}
  We now focus on the first term of the right hand side of (\ref{eq:stop1})
  and we start to
  lower bound the empirical risk by only keeping the first $n_{\delta}'$
  i.i.d.~exploratory samples:
  \begin{equation}
  \label{eq:model1}
    L_{n_{\delta}}(r)
    \geq \frac{2}{n_{\delta}(n_{\delta}-1)} \sum_{1 \leq i < j \leq n_{\delta}}
    \indic{ r(X_i,X_j) \neq  r_f(X_i,X_j) }
    \times \indic{(i,j)\in I^2}
  \end{equation}
  where $I=\{i \leq n_{\delta}: B_i=1 ~\textrm{and}~
  \sum_{j=1}^i B_i \leq n_{\delta}' \}$.
  By definition \ref{prop:adarankopt}, we know 
  that conditioned upon $|I|$, the sequence
  $\{X_i \}_{i \in I}| |I|$ is a sequence
  of $|I|$ independent random variables uniformly distributed over $\X$.
  Therefore, conditioned upon the event $\{ \sum_{i=1}^{n_{\delta}} B_i \geq n'_{\delta} \}
  =\{|I|=n_{\delta}' \}$,
  the right hand term of (\ref{eq:model1}) has the same distribution as
  \[
    \frac{2}{n_{\delta}(n_{\delta}-1)}
    \sum_{1 \leq i < j \leq n_{\delta'}} \indic{ r(X'_i,X'_j) \neq  r_f(X'_i,X'_j) }
    \propto L_{n_{\delta}'}(r)
  \]
  where the sequence $\{X'_i \}_{i=1}^{n_{\delta}'} \iid \mathcal{U}(\X)$ 
  is independent of
  $\{B_i \}_{i=1}^{n_{\delta}}$.
  Hence, we have that
  \begin{equation*}
    \P \left( \min_{r \in \Rank_{k^{\star} \! -1  }}L_{n_{\delta}}(r) >0  ~\!\mid~\!
     \sum_{i=1}^{n_{\delta}} B_i \geq n_{\delta}' \right)
    \geq \P \left( \min_{r \in \Rank_{k^{\star}-1}} L_{n_{\delta}'}(r) >0 \right)
  \end{equation*}
  where $L_{n_{\delta}'}$  
  denotes the empirical ranking loss is taken over $\{X'_i \}_{i=1}^{n_{\delta}'}$.
  Now, by slightly adapting the generalization bound on bipartite ranking rules
  of \cite{clemenccon2008ranking}
  ({\it i.e.,~}see the proof of Corollary 3 in Section 3 therein) we obtain
  that with probability at least $1- \delta/2$,
    \[
      \sup_{ r \in \Rank_{k^{\star}\!-1}  }
    \abs{L_{n'_{\delta}}(r) - L(r)} \leq 2 \esp{R_{n'_{\delta}}(\Rank_{k^{\star}\!-1})}
    +2 \sqrt{\frac{\log(2/\delta)}{n'_{\delta}-1}},
    \]
   which combined with the fact that 
   $\esp{R_{n'_{\delta}}(\Rank_{k^{\star}\!-1})} \leq \sqrt{K/n}$ gives that
    \[
    \min_{r \in \Rank_{k^{\star}\!-1}} L_{n'_{\delta}}(r)
    \geq \inf_{r \in \Rank_{k^{\star}\!-1}}L(r) - 2\sqrt{ \frac{K}{n'_{\delta}} }
    -2  \sqrt{ \frac{\log(2/\delta)}{n'_{\delta}-1} }.
  \]
  Finally, as $n_{\delta}'$ and $n_{\delta}$
  were defined (with express purpose) so that (i)
  the right hand term of the previous inequality is strictly positive
  and so that (ii) Hoeffding's inequality ensures that
    \[
  \P\left(\sum_{i=1}^{n_{\delta}} B_i \geq n_{\delta}' \right) \geq 1-\delta/2,
  \]
  we deduce from (\ref{eq:stop1}) that
  $
   \P( \tau_{k^{\star}} \leq n_{\delta})
   \geq (1 -\delta/2)^2 \geq 1- \delta
  $
  and the proof is complete.
\hfill\(\Box\)

~\\
Theorem \ref{th: upper_ada} is obtained by combining
the upper bounds of Proposition \ref{prop:model} and Theorem \ref{th:upperbound}.\\
~\\
{\bf Proof of Theorem \ref{th: upper_ada}.}
  Fix $\delta \in (0,1)$, let
  $n_{\delta/2} = \floor{  10(K + \ln(4/\delta))/
  ( p \cdot \inf_{r \in \Rank_{k^{\star}-1}} L(r)^2 )  }$
  be the integer part of the upper bound of Proposition \ref{prop:model}
  (set with probability $1-\delta/2$)
  and let $r_{\delta/2,n}$ be the upper bound of the Theorem \ref{th:upperbound}
  (also set with probability $1-\delta/2$).
  We use the following decomposition:
    \begin{equation}
    \label{eq:upper_ada}
    \P( \norm{X_{\hatin}-x^{\star} }_2  \leq r_{\delta/2, n}) \geq
    \P(  \norm{X_{\hatin}-x^{\star} }_2  \leq r_{\delta/2, n}
    \!~|~\! \tau_{k^{\star}} < n_{\delta/2} )
    \times \P( \tau_{k^{\star}} < n_{\delta/2}).
  \end{equation}
  First, as on the event $\{ \tau_{k^{\star}} < n_{\delta/2} \}
  = \bigcap_{t \geq n_{\delta/2}} \{ \widehat{k}_t = k^{\star} \}$
  the smallest ranking structure $\Rank_{k^{\star}}$ containing the true ranking rule
  $r_f$ is identified 
  for all $t \geq n_{\delta/2}$,
  one can easily check that 
  \[
  \P(  \norm{X_{\hatin}-x^{\star} }_2  \leq r_{\delta/2, n}
  \!~|~\! \tau_{k^{\star}} \leq r_{\delta, n})
  \geq 1 -\delta/2
  \]
  by  reproducing the same steps as in Theorem \ref{th:upperbound}'s proof
  with the last $n- n_{\delta/2}$ samples.
  Second, as Proposition \ref{prop:model} also
  guarantees that $\P(  \tau_{k^{\star}} < n_{\delta/2} )\geq 1- \delta/2$,
  we then obtain from (\ref{eq:upper_ada}) that
  $\P( \norm{X_{\hatin}-x^{\star} }_2  \leq r_{\delta/2, n})
  \geq (1- \delta/2)^2 \geq 1 - \delta$. Hence, for all $n> n_{\delta/2}$,
  we have with probability at least $1-\delta$,
  \begin{align*}
   \norm{X_{\hatin} - x^{\star}}_2 & \leq C_1 \cdot \left( \frac{\ln(2/\delta)}{n - n_{\delta/2}}
   \right)^{\frac{1}{d(1+\alpha)^2}} \\
   & = C_1 \cdot \left( 1+ \frac{n_{\delta/2}}{n-n_{\delta/2}}\right)^{\frac{1}{d(1+\alpha)^2}}  \cdot \left( \frac{\ln(2/\delta)}{n}
   \right)^{\frac{1}{d(1+\alpha)^2}} \\
   & \leq C_1 \cdot \left( \frac{11(K+\ln(4/\delta)) }{p\inf_{r \in \Rank_{k^{\star}-1}}L(r)^2 }
   \right) \cdot \left( \frac{\ln(2/\delta)}{n} \right)^{\frac{1}{d(1+\alpha)^2}} 
  \end{align*}
  and the proof is complete by noticing that the right hand term 
  of the previous inequality is superior or equal to $\diam{\X}$ whenever $n \leq n_{\delta/2}$.
\hfill\(\Box\)

\section{Proofs of the implementation details}

We state here the proofs of Proposition \ref{prop:binary},
Lemma \ref{lem:zero}, Corollary \ref{coro:LP}, 
Proposition \ref{prop:binary_cvx} and Proposition \ref{prop:lp_cvx}.

\subsection{Polynomial and sinusoidal ranking rules}

We start with the proofs of Proposition \ref{prop:binary},
Lemma \ref{lem:zero} and Proposition \ref{prop:lp_cvx}.

~\\
{\bf Proof of Proposition \ref{prop:binary}}
  $(\Rightarrow)$ Assume that there exists $r \in \Rank_{ \mathcal{P}(k)}$
  such that $L_{t+1}(r)=0$. By definition of $\Rank_{ \mathcal{P}(k)}$,
  we know that there exists a polynomial function $f_r$ of degree $k$
  such that $\forall (x,x')\in \X^2$, $r(x,x')= \sgn{f(x) -f(x')}$.
  Moreover, as $f_r \in \mathcal{P}_k(\X, \R)$, we also know that
  there exists $(\omega_r,c_r) \in \R^{\normalfont{\text{dim}}({\Phi_k})} \times \R$
  such that $\forall x \in \R$, $f_r(x) = \inner{\omega_r, \Phi_k(x)}+c_r $.
  Therefore, putting the previous statements altogether
  with the fact that $L_{t+1}(r)=0$
  gives that
  $\forall i \leq t$,
  \[
   1 = r(X_{(i+1)},X_{(i)})
   = \sgn{f_r(X_{(i+1)}) -f_r(X_{(i)})}
   = \sgn{\inner{\omega_r, \Phi_k(X_{(i+1)})- \Phi_k(X_{(i)})}}
  \]
  and we thus deduce that there exists 
  $\omega=\omega_r \in \R^{\normalfont{\text{dim}}({\Phi_k})}$
  such that $\forall i \leq t$,
  $\inner{\omega,\Phi_k(X_{(i+1)}) - \Phi_k(X_{(i)}  ) } > 0$
  
  ~\\
  $( \Leftarrow)$ Assume now that there exists 
  $\omega \in \R^{\normalfont{\text{dim}}({\Phi_k})}$
  such that $\forall i \in \{1 \ldots t \}$,
  $\inner{\omega,\Phi_k(X_{(i+1)}) - \Phi_k(X_{(i)}  ) } > 0$
  and introduce the polynomial function  of degree $k$
  defined by
  $f_{\omega}: x \mapsto \inner{\omega, \Phi_k(x)  } +c$
  where $c\geq0$ is any arbitrary constant.
  Now, if $r_{f_{\omega}}$ denotes the polynomial ranking rule induced
  by $f_{\omega}$,
  we obtain from the first assumption that $\forall i \leq  t$,
  \[
   r_{f_{\omega}}(X_{(i+1)}, X_{(i)})
   = \sgn{f_{\omega}(X_{(i+1)}) -  f_{\omega}(X_{(i)}) }
   = \sgn{ \inner{\omega, \Phi_k(X_{i+1})- \Phi_k(X_{(i)})} }
   = 1.
  \]
  Hence $L_{t+1}(r)=0$ and we deduce that there exists
  $r = r_{f_w} \in \Rank_{\mathcal{P}_k}$ such that $L_{t+1}(r)=0$.
\hfill\(\Box\)

~\\
{\bf Proof of Lemma \ref{lem:zero}.}
  Observe first that for all $i \leq t$,
  $Y_i \cdot \inner{\omega, X_i}> 0 \Leftrightarrow 
  \inner{\omega, Y_i \cdot X_i}>0$.
  One can then consider without loss of generality that 
  $Y_i=1$ for all $i\leq t$, by replacing $X_i$ with $Y_i \cdot X_i$.\\

  \noindent $(\Rightarrow)$ Assume that there exists $\omega \in \R^d$ such that
  $\forall i \in \{1 \ldots t \}$, $\inner{\omega,X_i}>0$. 
  If $\vec{0} \in \textsc{CH}\{X_i \}_{i=1}^t$,
  this would mean
  that there exists $(\lambda_1, \ldots , \lambda_t ) \in \R^{t}$
  such that (i) $\vec{0} = \sum_{i=1}^t \lambda_i \cdot X_i$, (ii)
  $\sum_{i=1}^n \lambda_i=1$ and (iii) $\lambda_i\geq 0$, $i = 1 \dots t$
  and it would give us to the following contradiction:
  \[
    0= \inner{\omega, \vec{0}} = \sum_{i=1}^t \lambda_i \cdot \inner{\omega, X_i}>0.
  \]
  Hence $\vec{0} \notin \textsc{CH}\{X_i \}_{i=1}^t$.\\
  
  \noindent $(\Leftarrow)$ Assume now that $\vec{0}
  \notin \textsc{CH}\{X_i \}_{i=1}^t$.
  Since $t$ and $d$ are finite,
  $\textsc{CH}\{X_i \}_{i=1}^t$ is a closed, compact and convex set and thus
  $\min_{x \in \textsc{CH}\{X_i \}_{i=1}^t} \lVert x \rVert_2=d_{\min}$
  exists and the condition
  $\vec{0} \notin \textsc{CH}\{X_i \}_{i=1}^t$ implies that $d_{\min}>0$.
  Now, let $x_{d} \in \textsc{CH}\{X_i \}_{i=1}^t$ be
  the (unique) point of the convex hull
  which satisfies $\lVert x_{d} \rVert_2= d_{\min}$.
  We now prove by contradiction that $\forall x \in \textsc{CH}\{X_i \}_{i=1}^t$,
  $\inner{x,x_d}\geq {d^2_{\min}}$.
  Suppose that there exists $x \in \textsc{CH}\{X_i \}_{i=1}^t$
  such that $\inner{x,x_d}< {d_{\min}}^2$.
  First, we know from the convexity of the convex hull that
  the whole line
  $L= (x, x_{d})$ also belongs to $\textsc{CH}\{X_i \}_{i=1}^t$.
  However,  since $\lVert x_{d} \rVert_2={d_{\min}}$ and
  $\inner{x,x_d}<\lVert x_d \rVert^2$,
  the line $L$ is not tangent to the ball $B(\vec{0}, {d_{\min}} )$ and intersects it.
  Therefore, we deduce that there necessarily exists
  $x' \in L \cap B(\vec{0},{d_{\min}})$ such that $\lVert x' \rVert_2 < {d_{\min}}$.
  Nonetheless, as $x' \in L \subseteq \textsc{CH}\{X_i \}_{i=1}^t$ also 
  belongs to the convex hull,
  we obtain the following contradiction:
  \[
    \min_{x \in \textsc{CH}\{X_i \}_{i=1}^t} \norm{x}_2 \leq \lVert x' \rVert_2
    <  d_{\min} =\min_{x \in \textsc{CH}\{X_i \}_{i=1}^t} \norm{x}_2
  \]
  and we deduce that $\forall x \in \textsc{CH}\{X_i \}_{i=1}^t$,
  $\inner{x_d, x}\geq d_{\min}> 0$. Finally, as
  $\{X_i \}_{i=1}^t \in \textsc{CH}\{X_i \}_{i=1}^t$, 
  it directly follows that there exists $\omega=x_d \in \R^d$
  such that $\forall i \in \{ 1 \ldots t\}$, $\inner{\omega, X_i}>0$
  and the proof is complete.
\hfill\(\Box\)

~\\
Corollary \ref{coro:LP} is obtained by combining
Proposition \ref{prop:binary} with Lemma \ref{lem:zero}.

~\\
{\bf Proof of Corollary \ref{coro:LP}} From Proposition \ref{prop:binary},
we have the following equivalence: 
  \begin{align*}
    \min_{r \in \Rank_{\mathcal{P}_k}  }L_{t+1}(r) = 0 ~~
    &   \Leftrightarrow
    ~~\exists \omega \in \R^{\normalfont{\text{dim}}({\phi_k})} \textrm{~s.t.~}
    \inner{\omega, \Phi_k(X_{(i+1)}) - \Phi_k(X_{(i)}) }>0,
    ~\forall i \in \{ 1 \ldots t\}
  \end{align*}
  which combined with Lemma \ref{lem:zero} gives
    \begin{align*}
    \min_{r \in \Rank_{\mathcal{P}_k}  }L_{t+1}(r) = 0 ~~
    & \Leftrightarrow
    ~~ \vec{0} \notin \textsc{CH}\{ ( \Phi_k(X_{(i+1)}) - \Phi_k(X_{(i)}) ) \}_{i=1}^t.
  \end{align*}
  \sloppy
  In addition, we know from the vertex representation of convex hulls 
  that
  $\vec{0} \notin \textsc{CH}\{ (\Phi_k(X_{(i+1)}) - \Phi_k(X_{(i)}) ) \}_{i=1}^t$
  if and only if there does not exist any
  $\lambda=(\lambda_1, \ldots, \lambda_t) \in \R^t$ such that
  (i) $\sum_{i=1}^t \lambda_i$ $( \Phi_k(X_{(i+1)})$ $- \Phi_k(X_{(i)}) ) = \vec{0}$,
  (ii) $\sum_{i=1}^t \lambda_i=1$ and (iii) 
  $\lambda_i \geq 0$, $i=1,\dots, t$
  and therefore putting those constraints (i), (ii) and (iii)
  into matrix form leads us to the desired equivalence:
  \begin{align*}
  \min_{r \in \Rank_{\mathcal{P}_k}  }L_{t+1}(r) = 0 ~~
  & \Leftrightarrow
  ~~\left\{\lambda \in \R^{t}: ~\normalfont{\text{M}_{t}^{\Phi_k}} 
      \lambda^{\mathsf{T}} =\vec{0},
      ~\inner{\vec{1},\lambda} =1,~ \lambda \succeq \vec{0} \right\} = \emptyset
  \end{align*}
  where $\normalfont{\text{M}^{\Phi_k}_{k}}$ is the matrix defined in the corollary.
\hfill\(\Box\)

\subsection{Convex ranking rules}

In this subsection, we provide the proofs for Proposition \ref{prop:binary_cvx} 
and Proposition \ref{prop:lp_cvx}.

~\\
{\bf Proof of Proposition \ref{prop:binary_cvx}}
  $(\Rightarrow)$ Assume that there exists $r \in \Rank_{\mathcal{C}_k}$
  such that $L_{t+1}(r) =0$ and 
  let $\{h_i \}_{i=1}^{t+1}$  be the sequence of classifiers 
  defined $\forall i\leq t+1$ by $h_i(x)= \indic{ r(x,X_{(i)} ) \geq 0  }$.
  First, we know  from the definition of $\Rank_{\mathcal{C}_k}$ that
  all the classifiers are of the form
  $h_i(x) = \sum_{m=1}^k \indic{  l_{i,m} \leq  x \leq u_{i,m} }$.
  Second, since $L_{n+1}(r) =0$, it directly follows that
  $\forall (i,j) \in \{1, \dots, t+1 \}^2$, $h_i(X_{(j)})= \indic{j \geq i}$.
  Finally, as $r$ is transitive and $\forall i \leq t$,
  $r(X_{(i+1)}, X_{(i)})=1$, we have that 
  $h_1 \geq h_2 \geq \dots \geq h_{t+1}$.\\

  \noindent $(\Leftarrow)$ Assume now that there exists 
  a sequence of classifiers $\{ h_i \}_{i=1}^{t+1}$
  of the form $h_i(x)= \sum_{m=1}^k \indic{ l_{i,m} \leq x \leq u_{i,m} } $
   satisfying:
  (i) $h_1 \geq h_2 \geq  \dots \geq h_{t+1}$ and
  (ii) $\forall (i,j) \in \{1 \ldots t+1 \}^2$,  $h_i(X_{(j)})= \indic{j \geq i}$.
  Define the step function $f_{\text{step}}(x)= \sum_{i=1}^{t+1} h_i(x)$ and
  observe that $L_{t+1}(r_{f_{\text{step}}}) =0$
  since
  $\forall (j,k) \in \{1 \dots t+1 \}^2$,
  \begin{align*}
    r_{f_{\text{step}}}(X_{(j)}, X_{(k)})
    = \text{sgn}\left( \sum_{i=1}^{t+1}  \indic{ j \geq i } -\indic{ k \geq i } \right)
     = \sgn{j -k}
     = r_f(X_{(j)}, X_{(k)}),
  \end{align*}
  To prove the result, we will simply construct a continuous approximation
  of the function $f_{\text{step}}$ which (i) induces a ranking rule which
  perfectly ranks the sample and (ii) admits level sets which are unions of
  at most $k$ convex set.
  Let $\hat{f}_{\epsilon}:\X \to \R$ be the continuous function defined by
  $\hat{f}_{\epsilon}(x) = 
  \sum_{i=1}^{t+1} \sum_{m=1}^k \hat{\mathds{1}}_{\epsilon, l_{i,m}, u_{i,m}}(x) $
   where $\forall l \leq u$,
  \begin{equation*}
    \hat{\mathds{1}}_{\epsilon, l, u}(x)=
    \begin{cases}
      1 \ \   & \   \ \ \text{if}\  \ \ x \in [l,u]\\
	 1 - \frac{l-x}{\epsilon}  & \ \ \   \text{if}\ \   \ x \in [l-\epsilon,l[ \\
	1 - \frac{x-u}{\epsilon} & \ \ \   \text{if}\ \  \ x \in ]u,u + \epsilon] \\
	0 & \ \ \ \text{otherwise}.
    \end{cases}
  \end{equation*}
  Observe now that $\forall \epsilon<  \min \{
  \abs{ x_1 -x_2}: x_1 \neq x_2 \in \{X_{(i)} \}_{i=1}^{t+1}
  \cup \{ l_{i,m}\}_{i = 1 \ldots t+1}^{m=1 \ldots k} \cup
  \{ u_{i,m}\}_{i = 1 \ldots t+1}^{m=1 \ldots k}   \}$ and 
  $\forall i\leq  t$, we have that
  $\hat{f}_{\epsilon}(X_{(i)}) = f_{\text{step}}(X_{(i)})$.
  Hence $L_{t+1}(r_{\hat{f}_{\epsilon}})=L_{t+1}(r_{f_{\text{step}}})=0$
  which proves (i).
  Moreover, as for any 
  $\epsilon <
  \min\{ \abs{x_1 - x_2}:x_1 \neq x_2 \in \{l_{i,m} \}_{i = 1 \ldots t+1}^{m=1 \ldots k}
  \cup \{ u_{i,m }\}_{i = 1 \ldots t+1}^{m=1 \ldots k} \}/2$,
  the level sets of $\hat{f}_{\epsilon}$ are by construction
  a union of at most $k$ segments (convex sets) and (ii) holds true.
  We  then deduce from (i) and (ii) that for $\epsilon$ small enough
  there exists
  $r = r_{\hat{f}_{\epsilon}} \in \Rank_{\mathcal{C}_k  } $
  such that $L_{t+1}(r)=0$.
  \hfill\(\Box\)

 ~\\
\noindent The next lemma will be used in the proof of Proposition \ref{prop:lp_cvx}.

\begin{lemma}
\label{lem:epsiball}
  Let $\X \subset \R^d$ be any compact and convex set
  and define for any $\epsilon>0$ the $\epsilon$-ball of $\X$
  as $B(\X, \epsilon)=\{ x \in \R^d :
  \min_{x' \in \X} \lVert x-x'  \rVert_2 \leq \epsilon \}$.
  Then, for any $\epsilon>0$,
  the $\epsilon$-ball of $\X$
  is also a convex set.
\end{lemma}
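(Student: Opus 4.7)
The plan is to prove convexity directly from the definition: pick two arbitrary points $x_1, x_2 \in B(\X, \epsilon)$ and any $\lambda \in [0,1]$, and show that the convex combination $x_\lambda = \lambda x_1 + (1-\lambda) x_2$ still lies within distance $\epsilon$ of $\X$.

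First, since $\X$ is compact, for $i \in \{1,2\}$ the infimum defining membership in $B(\X, \epsilon)$ is attained, so there exist $x_1', x_2' \in \X$ such that $\|x_1 - x_1'\|_2 \leq \epsilon$ and $\|x_2 - x_2'\|_2 \leq \epsilon$. Next, I would form the matching convex combination $x_\lambda' := \lambda x_1' + (1-\lambda) x_2'$, which belongs to $\X$ by convexity of $\X$.

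Finally, I would use the triangle inequality (together with the homogeneity of $\|\cdot\|_2$) to bound
\[
\|x_\lambda - x_\lambda'\|_2 = \|\lambda(x_1 - x_1') + (1-\lambda)(x_2 - x_2')\|_2 \leq \lambda \|x_1 - x_1'\|_2 + (1-\lambda)\|x_2 - x_2'\|_2 \leq \epsilon,
\]
which implies $\min_{x' \in \X}\|x_\lambda - x'\|_2 \leq \epsilon$ and hence $x_\lambda \in B(\X, \epsilon)$. Since $x_1, x_2, \lambda$ were arbitrary, $B(\X, \epsilon)$ is convex.

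There is no real obstacle here: the argument is a standard Minkowski-sum observation (one can equivalently note that $B(\X, \epsilon) = \X + B(\vec 0, \epsilon)$ is the sum of two convex sets). The only mild care needed is to invoke compactness of $\X$ to guarantee that the projection points $x_1', x_2'$ exist; if one preferred to avoid this, the same proof goes through with $\epsilon + \eta$-approximate projections and then letting $\eta \to 0$.
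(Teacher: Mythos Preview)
Your proof is correct and follows essentially the same approach as the paper: both pick witnesses $x_i' \in \X$ with $\|x_i - x_i'\|_2 \leq \epsilon$, form their convex combination (which lies in $\X$ by convexity), and use the triangle inequality to conclude. Your added remarks about compactness ensuring attainment and the Minkowski-sum interpretation are accurate and slightly more careful than the paper's write-up.
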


\begin{proof}
  Pick any $(b_1,b_2) \in B(\X, \epsilon)^2$.
  By definition of $B(\X, \epsilon)$,
  we know that there exists $(x_1, \epsilon_1) \in \X \times \R^d$
  such that $b_1 = x_1 + \epsilon_1$
  and $\rVert \epsilon_1 \lVert_2 \leq \epsilon$
  (resp.~$b_2 = x_2 + \epsilon_2$ where $x_2 \in \X$ and
  $\rVert \epsilon_2 \lVert_2 \leq \epsilon$).
  Then, by convexity of $\X$, we have  that $\forall \lambda \in [0,1]$,
  \[
    (1-\lambda) b_1 + \lambda b_2 ~=~
    \underbrace{\lambda x_1 +(1-\lambda) x_2}_{\in \X}
    ~+~ \underbrace{\lambda \epsilon_1 +(1-\lambda) \epsilon_2}_{ \norm{\cdot }_2\leq \epsilon}.
  \]
  Hence $(1-\lambda) b_1 + \lambda b_2  \in B(\X, \epsilon)$ and we deduce that
  $B(\X, \epsilon)$ is a convex set.
\end{proof}

\noindent Equipped with Lemma \ref{lem:epsiball}, 
we may now prove Proposition \ref{prop:lp_cvx}.

~\\
{\bf Proof of Proposition \ref{prop:lp_cvx}}
  $(\Rightarrow)$ Assume that there exists $r \in \Rank_{\mathcal{C}_1}$ such that
  $L_{t+1}(r)=0 $.
  Observe first that since
  $\forall j \neq k \leq t+1$,
  $r(X_{(j)}, X_{(k)}) =2\indic{ j > k }  -1$,
  we have that $\forall k \leq t$,
  \begin{enumerate}
   \item[i)] $\{X_{(i)} \}_{i=k+1}^{t+1} \in \{x \in \X: r(x, X_{(k+1)})\geq 0 \}$;
   \item[ii)] $X_{(k)} \notin \{x \in \X: r(x, X_{(k+1)})\geq 0 \}$.
  \end{enumerate}
  Now, pick any $k \in \{1, \dots, t \}$ and notice that,
  by definition of $\Rank_{\mathcal{C}_1}$,
  the level set  $\{x \in \X: r(x,X_{(k+1)}) \geq 0 \}$ is also a convex set.
  However, since $\textsc{CH}\{X_{(i)} \}_{i=k+1}^{t+1}$
  is the smallest convex set which contains
  $\{X_{(i)} \}_{i=k+1}^{t+1}$, we deduce from (i)~that
  $\textsc{CH}\{X_{(i)} \}_{i=k+1}^{t+1} \subseteq \{x \in \X: r(x, X_{(k+1)})\geq 0 \}$.
  Therefore, combining the previous statement with (ii)~gives
  that $\forall k \leq t$,
  \begin{equation*}
   X_{(k)} \notin \textsc{CH}\{X_{(i)} \}_{i=k+1}^{t+1}.
  \end{equation*}
  Finally, using the vertex representation of convex hulls we know that
  $X_{(t+1-i)} \notin \textsc{CH}\{X_{(i)} \}_{i=t+2-i}^{t+1}$
  if and only if there does not any $\lambda = (\lambda_1, \dots, \lambda_{i} )\in \R^i$
  such that (i) $\sum_{j=1}^j \lambda_j \cdot X_{(t+2-j)} = X_{(k)}$,
  (ii) $\sum_{j=1}^i \lambda_j = 1$ and (iii) $\lambda_j\geq 0$, $j=1\dots i$
  and putting those constraints (i), (ii) and (iii) into matrix form
  gives the result.\\

  \noindent $(\Leftarrow)$ Assume now that the cascade of polyhedrons is empty.
  First, we point out that it can easily check
  by reproducing the same steps as in the first part
  of the proof that
  $\forall k \leq t$, $X_{(k)} \notin \textsc{CH}\{X_{(i)} \}_{i=k+1}^{t+1}$,
  which implies that
  \begin{align}
  \label{eq:final}
    \textsc{CH}\{X_{(t+1)} \} \subset \textsc{CH}\{X_{(i)} \}_{i=t}^{t+1}
    \subset \dots  \subset \textsc{CH}\{X_{(i)} \}_{i=1}^{t+1}.
  \end{align}
  Now define the step function
  $f_{\text{step}}:x \in \X \mapsto 
  \sum_{i=1}^{t+1} \mathbb{I} \{ x \in \textsc{CH}\{X_{(j)} \}_{j=i}^{t+1} \}$
  and observe that $L_{t+1}(r_{f_{\text{step}}}) = 0$  by (\ref{eq:final}).
  To prove the result,
  we will simply construct a continuous approximation
  of the function $f_{\text{step}}$ which (i) induces a ranking rule that
  perfectly ranks the sample and (ii) has convex level sets.
  Let $\hat{f}_{\epsilon}$ be the continuous
  function defined by $\hat{f}_{\epsilon}(x) = \sum_{i=1}^{t+1} \phi_{i, \epsilon}(x) $
  where $\forall i \leq t+1 $,
    \begin{equation*}
    \phi_{i, \epsilon}(x)=
    \begin{cases}
  1 - \frac{d(x,B( \textsc{CH}\{X_{(j)} \}_{j=i}^{t+1} , 2(t+1-i) \epsilon))}{\epsilon}
  & \text{if} \ d(x,B( \textsc{CH}\{X_{(j)} \}_{j=i}^{t+1} , 2(t+1-i) \epsilon) \leq \epsilon \\
        0 \ \   \   \ \ & \text{otherwise.}
    \end{cases}
  \end{equation*}
  \sloppy
  Observe now that for any $\epsilon < \min_{i=1 \ldots t} d(X_{(i)},
  \textsc{CH}\{X_{(j)} \}_{j=i+1}^{t+1} )  / (2t+2)$,  
  we have that
  $\forall i \leq t+1$,
  $\hat{f}_{\epsilon}(X_{(i)}) = f_{\text{step}}(X_{(i)})$. Hence
  $L_{t+1}(r_{\hat{f}_{\epsilon}})= L_{t+1}(r_f)= 0$,
  which proves (i).
  Moreover, we know from Lemma \ref{lem:epsiball}
  that for any $\epsilon < \min_{i=1 \ldots t} d(X_{(i)},
  \textsc{CH}\{X_{(j)} \}_{j=i+1}^{t+1} )  / (2t+2)$
  and any $x \in \X$, the level set
  $\{x' \in \X: \hat{f}_{\epsilon}(x') \geq \hat{f}_{\epsilon}(x) \}$ is a convex set.
  Hence (ii) holds true and
  we then deduce from (i) and (ii)
  that for $\epsilon$ small enough
  there exists $r = r_{\hat{f}_{\epsilon}} \in \Rank_{ \mathcal{C}_1}$
  such that $L_{t+1}(r_{\hat{f}_{\epsilon}})=0$ and the proof is complete.
\hfill\(\Box\)

\nocite{*}


\begin{thebibliography}{38}
\providecommand{\natexlab}[1]{#1}
\providecommand{\url}[1]{\texttt{#1}}
\expandafter\ifx\csname urlstyle\endcsname\relax
  \providecommand{\doi}[1]{doi: #1}\else
  \providecommand{\doi}{doi: \begingroup \urlstyle{rm}\Url}\fi

\bibitem[Auger(2005)]{auger2005convergence}
Anne Auger.
\newblock Convergence results for the (1, $\lambda$)-sa-es using the theory of
  $\phi$-irreducible markov chains.
\newblock \emph{Theoretical Computer Science}, 334\penalty0 (1):\penalty0
  35--69, 2005.

\bibitem[Boucheron et~al.(2013)Boucheron, Lugosi, and
  Massart]{boucheron2013concentration}
St{\'e}phane Boucheron, G{\'a}bor Lugosi, and Pascal Massart.
\newblock \emph{Concentration inequalities: A nonasymptotic theory of
  independence}.
\newblock Oxford University Press, 2013.

\bibitem[Boyd and Vandenberghe(2004)]{boyd2004convex}
Stephen Boyd and Lieven Vandenberghe.
\newblock \emph{Convex optimization}.
\newblock Cambridge University Press, 2004.

\bibitem[Bull(2011)]{bull2011convergence}
Adam~D. Bull.
\newblock Convergence rates of efficient global optimization algorithms.
\newblock \emph{The Journal of Machine Learning Research}, 12:\penalty0
  2879--2904, 2011.

\bibitem[Cl{\'e}men{\c{c}}on(2011)]{clemencccon2011u}
St{\'e}phan Cl{\'e}men{\c{c}}on.
\newblock On {U}-processes and clustering performance.
\newblock In \emph{Advances in Neural Information Processing Systems}, pages
  37--45, 2011.

\bibitem[Cl{\'e}men{\c{c}}on and Vayatis(2010)]{clemenccon2010overlaying}
St{\'e}phan Cl{\'e}men{\c{c}}on and Nicolas Vayatis.
\newblock Overlaying classifiers: a practical approach to optimal scoring.
\newblock \emph{Constructive Approximation}, 32\penalty0 (3):\penalty0
  619--648, 2010.

\bibitem[Cl{\'e}men{\c{c}}on et~al.(2010)Cl{\'e}men{\c{c}}on, Lugosi, and
  Vayatis]{clemenccon2008ranking}
St{\'e}phan Cl{\'e}men{\c{c}}on, Gabor Lugosi, and Nicolas Vayatis.
\newblock Ranking and empirical minimization of u-statistics.
\newblock \emph{The Annals of Statistics}, pages 844--874, 2010.

\bibitem[Cohn et~al.(1994)Cohn, Atlas, and Ladner]{cohn1994improving}
David Cohn, Les Atlas, and Richard Ladner.
\newblock Improving generalization with active learning.
\newblock \emph{Machine learning}, 15\penalty0 (2):\penalty0 201--221, 1994.

\bibitem[Dasgupta(2011)]{dasgupta2011two}
Sanjoy Dasgupta.
\newblock Two faces of active learning.
\newblock \emph{Theoretical Computer Science}, 412\penalty0 (19):\penalty0
  1767--1781, 2011.

\bibitem[Eigen(1973)]{eigen1973ingo}
Manfred Eigen.
\newblock \emph{Ingo Rechenberg Evolutionsstrategie Optimierung technischer
  Systeme nach Prinzipien der biologishen Evolution}.
\newblock mit einem Nachwort von Manfred Eigen, Friedrich Frommann Verlag,
  Struttgart-Bad Cannstatt, 1973.

\bibitem[Finck et~al.(2010)Finck, Hansen, Ros, and Auger]{finck2010real}
Steffen Finck, Nikolaus Hansen, Raymond Ros, and Anne Auger.
\newblock Real-parameter black-box optimization benchmarking 2009: Presentation
  of the noiseless functions.
\newblock Technical report, Citeseer, 2010.

\bibitem[Finkel and Kelley(2004)]{finkel2004convergence}
Daniel~E Finkel and CT~Kelley.
\newblock Convergence analysis of the direct algorithm.
\newblock \emph{Optimization Online}, pages 1--10, 2004.

\bibitem[Grill et~al.(2015)Grill, Valko, and Munos]{grill2015black}
Jean-Bastien Grill, Michal Valko, and R{\'e}mi Munos.
\newblock Black-box optimization of noisy functions with unknown smoothness.
\newblock In \emph{Neural Information Processing Systems}, 2015.

\bibitem[Hanneke(2011)]{hanneke2011rates}
Steve Hanneke.
\newblock Rates of convergence in active learning.
\newblock \emph{The Annals of Statistics}, 39\penalty0 (1):\penalty0 333--361,
  2011.

\bibitem[Hansen(2006)]{hansen2006cma}
Nikolaus Hansen.
\newblock The cma evolution strategy: a comparing review.
\newblock In \emph{Towards a New Evolutionary Computation}, pages 75--102.
  Springer, 2006.

\bibitem[Hansen(2011)]{CMAES_implementation}
Nikolaus Hansen.
\newblock The cma evolution strategy: A tutorial.
\newblock Retrieved May 15, 2016, from
  \url{http://www.lri.fr/hansen/cmaesintro.html}, 2011.

\bibitem[Hansen et~al.(1992)Hansen, Jaumard, and Lu]{hansen1992global}
Pierre Hansen, Brigitte Jaumard, and Shi-Hui Lu.
\newblock Global optimization of univariate lipschitz functions: I. survey and
  properties.
\newblock \emph{Mathematical Programming}, 55\penalty0 (1-3):\penalty0
  251--272, 1992.

\bibitem[Jamil and Yang(2013)]{jamil2013literature}
Momin Jamil and Xin-She Yang.
\newblock A literature survey of benchmark functions for global optimization
  problems.
\newblock \emph{International Journal of Mathematical Modelling and Numerical
  Optimisation}, 4\penalty0 (2):\penalty0 150--194, 2013.

\bibitem[Johnson(2014)]{johnson2014nlopt}
Steven~G. Johnson.
\newblock The {NL}opt nonlinear-optimization package.
\newblock Retrieved May 15, 2016, from \url{http://ab-initio.mit.edu/nlopt},
  2014.

\bibitem[Jones et~al.(1993)Jones, Perttunen, and
  Stuckman]{jones1993lipschitzian}
Donald~R. Jones, Cary~D. Perttunen, and Bruce~E. Stuckman.
\newblock Lipschitzian optimization without the lipschitz constant.
\newblock \emph{Journal of Optimization Theory and Applications}, 79\penalty0
  (1):\penalty0 157--181, 1993.

\bibitem[Jones et~al.(1998)Jones, Schonlau, and Welch]{jones1998efficient}
Donald~R. Jones, Matthias Schonlau, and William~J. Welch.
\newblock Efficient global optimization of expensive black-box functions.
\newblock \emph{Journal of Global Optimization}, 13\penalty0 (4):\penalty0
  455--492, 1998.

\bibitem[Kaelo and Ali(2006)]{kaelo2006some}
Professor Kaelo and Montaz Ali.
\newblock Some variants of the controlled random search algorithm for global
  optimization.
\newblock \emph{Journal of Optimization Theory and Applications}, 130\penalty0
  (2):\penalty0 253--264, 2006.

\bibitem[Kan and Timmer(1987)]{kan1987stochastic}
AHG~Rinnooy Kan and Gerrit~T. Timmer.
\newblock Stochastic global optimization methods part i: Clustering methods.
\newblock \emph{Mathematical Programming}, 39\penalty0 (1):\penalty0 27--56,
  1987.

\bibitem[Lichman(2013)]{Lichman:2013}
Moshe Lichman.
\newblock {UCI} machine learning repository, 2013.
\newblock URL \url{http://archive.ics.uci.edu/ml}.

\bibitem[Martinez-Cantin(2014)]{martinez2014bayesopt}
Ruben Martinez-Cantin.
\newblock Bayesopt: A bayesian optimization library for nonlinear optimization,
  experimental design and bandits.
\newblock \emph{The Journal of Machine Learning Research}, 15\penalty0
  (1):\penalty0 3735--3739, 2014.

\bibitem[Mo{\v{c}}kus(1975)]{movckus1975bayesian}
J~Mo{\v{c}}kus.
\newblock On bayesian methods for seeking the extremum.
\newblock In \emph{Optimization Techniques IFIP Technical Conference}, pages
  400--404. Springer, 1975.

\bibitem[Munos(2014)]{munosmono}
R{\'e}mi Munos.
\newblock From bandits to monte-carlo tree search: The optimistic principle
  applied to optimization and planning.
\newblock \emph{Foundations and Trends{\textregistered} in Machine Learning},
  7\penalty0 (1):\penalty0 1--129, 2014.

\bibitem[Pint{\'e}r(1991)]{pinter1991global}
J{\'a}nos~D. Pint{\'e}r.
\newblock Global optimization in action.
\newblock \emph{Scientific American}, 264:\penalty0 54--63, 1991.

\bibitem[Price(1983)]{price1983global}
Willian~L. Price.
\newblock Global optimization by controlled random search.
\newblock \emph{Journal of Optimization Theory and Applications}, 40\penalty0
  (3):\penalty0 333--348, 1983.

\bibitem[Radon(1921)]{radon1921mengen}
Johann Radon.
\newblock Mengen konvexer k{\"o}rper, die einen gemeinsamen punkt enthalten.
\newblock \emph{Mathematische Annalen}, 83\penalty0 (1):\penalty0 113--115,
  1921.

\bibitem[Rios and Sahinidis(2013)]{rios2013derivative}
Luis~Miguel Rios and Nikolaos~V. Sahinidis.
\newblock Derivative-free optimization: a review of algorithms and comparison
  of software implementations.
\newblock \emph{Journal of Global Optimization}, 56\penalty0 (3):\penalty0
  1247--1293, 2013.

\bibitem[Sergeyev et~al.(2013)Sergeyev, Strongin, and
  Lera]{sergeyev2013introduction}
Yaroslav~D. Sergeyev, Roman~G. Strongin, and Daniela Lera.
\newblock \emph{Introduction to global optimization exploiting space-filling
  curves}.
\newblock Springer Science \& Business Media, 2013.

\bibitem[Surjanovic and Bingham(2013)]{simulationlib}
Sonja Surjanovic and Derek Bingham.
\newblock Virtual library of simulation experiments: Test functions and
  datasets.
\newblock Retrieved May 15, 2016, from \url{http://www.sfu.ca/~ssurjano}, 2013.

\bibitem[Teytaud and Fournier(2008)]{teytaud2008lower}
Olivier Teytaud and Herv{\'e} Fournier.
\newblock Lower bounds for evolution strategies using vc-dimension.
\newblock In \emph{International Conference on Parallel Problem Solving from
  Nature}, pages 102--111. Springer, 2008.

\bibitem[Vazquez and Bect(2010)]{vazquez2010convergence}
Emmanuel Vazquez and Julien Bect.
\newblock Convergence properties of the expected improvement algorithm with
  fixed mean and covariance functions.
\newblock \emph{Journal of Statistical Planning and inference}, 140\penalty0
  (11):\penalty0 3088--3095, 2010.

\bibitem[Zabinsky(2003)]{zabinsky2013stochastic}
Zelda~B. Zabinsky.
\newblock \emph{Stochastic adaptive search for global optimization}, volume~72.
\newblock Springer Science \& Business Media, 2003.

\bibitem[Zabinsky and Smith(1992)]{zabinsky1992pure}
Zelda~B. Zabinsky and Robert~L Smith.
\newblock Pure adaptive search in global optimization.
\newblock \emph{Mathematical Programming}, 53\penalty0 (1-3):\penalty0
  323--338, 1992.

\bibitem[Zabinsky et~al.(1993)Zabinsky, Smith, McDonald, Romeijn, and
  Kaufman]{zabinsky1993improving}
Zelda~B Zabinsky, Robert~L Smith, J~Fred McDonald, H~Edwin Romeijn, and David~E
  Kaufman.
\newblock Improving hit-and-run for global optimization.
\newblock \emph{Journal of Global Optimization}, 3\penalty0 (2):\penalty0
  171--192, 1993.

\end{thebibliography}

\end{document}